\theoremstyle{plain}
\newtheorem{theorem}{Theorem}[section]
\newtheorem{proposition}[theorem]{Proposition}
\newtheorem{lemma}[theorem]{Lemma}
\newtheorem{corollary}[theorem]{Corollary}
\theoremstyle{definition}
\theoremstyle{remark}
\newtheorem{remark}[theorem]{Remark}
\newtheorem{conj}[theorem]{Conjecture}
\DeclareMathOperator{\E}{\mathbb{E}}
\DeclareMathOperator{\Prob}{\mathbb{P}}
\DeclareMathOperator*{\argmax}{arg\,max}
\DeclareMathOperator{\sign}{sign}
\newcommand{\ind}{\mathbbm{1}}
\newcommand{\x}{{\bm x}}
\newcommand{\Tx}{\tilde{\bm{\x}}}
\newcommand{\TX}{\tilde{X}}
\newcommand{\z}{{\bm z}}
\newcommand{\Tz}{\tilde{\bm{\z}}}
\newcommand{\TZ}{\tilde{Z}}
\newcommand{\Ty}{\tilde{y}}
\newcommand{\w}{{\bm w}}
\newcommand{\Bxi}{{\bm \xi}}
\newcommand{\Na}{{N_\alpha}}
\newcommand{\Ma}{{M_\alpha}}
\newcommand{\N}{\mathcal{N}}
\newcommand{\D}{\mathcal{D}}
\newcommand{\bmu}{\bm{\mu}}
\newcommand{\hbmu}{\hat{\bm \mu}}
\newcommand{\comment}[1]{}
\title{Semi-Supervised Sparse Gaussian Classification:\\ Provable Benefits of Unlabeled Data}
\author{%
  Eyar Azar 
    \\
  Weizmann Institute of Science\\
  \texttt{eyar.azar@weizmann.ac.il} 
  \And
  Boaz Nadler 
    \\
  Weizmann Institute of Science\\
  \texttt{boaz.nadler@weizmann.ac.il} 
  % examples of more authors
  % \And
  % Coauthor \\
  % Affiliation \\
  % Address \\
  % \texttt{email} \\
  % \AND
  % Coauthor \\
  % Affiliation \\
  % Address \\
  % \texttt{email} \\
  % \And
  % Coauthor \\
  % Affiliation \\
  % Address \\
  % \texttt{email} \\
  % \And
  % Coauthor \\
  % Affiliation \\
  % Address \\
  % \texttt{email} \\
}
\begin{document}
\maketitle

\begin{abstract}
The premise of semi-supervised learning (SSL) is that combining labeled and unlabeled data yields significantly more accurate models.
Despite empirical successes, the theoretical understanding of SSL is still far from complete. 
In this work, we study SSL for high dimensional sparse Gaussian classification. 
To construct an accurate classifier a 
key task is feature selection, detecting the few variables that separate the two classes.
For this SSL setting, we analyze information theoretic lower bounds for accurate feature selection as well as computational lower bounds, 
assuming the low-degree likelihood hardness conjecture. 
Our key contribution is the identification of a regime in the problem parameters (dimension, sparsity, number of labeled and unlabeled samples) where SSL is guaranteed to be advantageous for classification.
Specifically, there is a regime
where it is possible to construct in polynomial time an accurate SSL classifier.
However, 
any computationally efficient supervised or unsupervised learning schemes, that 
separately use only the labeled or unlabeled data 
would fail.  
Our work highlights the provable benefits of combining labeled and unlabeled data for 
{classification and}
feature selection in high dimensions. 
We present simulations that complement our theoretical analysis.

\end{abstract}

%##################################################
%########### Introduction #########################
%##################################################
\section{Introduction}

The presumption underlying 
Semi-Supervised Learning (SSL) is that more accurate predictors may be learned by leveraging both labeled and unlabeled data. Over the past 20 years, many SSL methods have been proposed and studied
\citep{Chapelle2006,zhu2009introduction}. Indeed, on many datasets SSL yields significant improvements over supervised learning (SL) and over 
unsupervised learning (UL). 
% \citep{Lucas_2022, rasmus2015semi, kingma2014semi} 
However, there are also cases where unlabeled data does not seem to help. 
A fundamental theoretical issue in SSL is thus to understand under  which settings can unlabeled data help to construct more accurate predictors and under which its benefit, if any, is negligible.

To address this issue, SSL was studied theoretically under various models. 
Several works proved that under a cluster or a manifold assumption, 
with sufficient unlabeled data, SSL significantly outperforms SL \citep{rigollet2007generalization,singh2008unlabeled}. 
In some  cases, however, SSL performs similarly to UL (i.e., clustering, up to a label permutation ambiguity). 
In addition, \citet{ben2008does} described a family of distributions
where SSL achieves the same error rate as SL.

In the context of the cluster assumption, a popular model for theoretical analysis is Gaussian classification, in particular binary classification for a mixture of two spherical Gaussians. 
In this case, the label $Y\in\{\pm 1\}$ has probabilities $\Prob(Y=y)=\pi_y$ and conditional on a label value $y$, 
the vector $\x\in \mathbb R ^p$ follows a Gaussian distribution, 
\begin{align}
    \label{eq:general_mixture}
    \x|y \sim \mathcal{N}(\bm \mu_y, {\bf I}_p)
\end{align}
where $\bm\mu_1,\bm\mu_{-1}\in\mathbb{R}^p$ are both unknown. 
This model and related ones were studied theoretically in supervised, unsupervised and {semi-supervised} settings, see for example  %in both low and high dimensional regimes  
\citet{li2017minimax,tifrea2023, wu2021randomly}, and references therein.
Without assuming structure on the vectors $\bm\mu_y$ or on their difference (such as sparsity), there are
computationally efficient SL and UL algorithms that achieve the corresponding minimax rates. 
Moreover, \citet{tifrea2023} proved that
for the model \eqref{eq:general_mixture}, no SSL algorithm simultaneously improves upon the minimax-optimal error rates of SL and UL. In simple words, there do not seem to be major benefits for SSL under the model \eqref{eq:general_mixture}.

In this paper we consider a mixture of two Gaussians in a {\em sparse} high dimensional setting. 
%focusing on its computational benefits. 
Specifically, we study balanced binary classification with a sparse difference in the class means, 
which is a specific instance of \eqref{eq:general_mixture}. 
Here, the joint distribution of a labeled sample
$(\x,y)$ is given by
\begin{align}
    \label{eq:mixture2}
    % \x \sim \frac{1}{2}N(\sqrt{\lambda}\bm \mu, {\bf I}_p) + \frac{1}{2}N(-\sqrt{\lambda}\bm \mu, {\bf I}_p),
    y \sim \text{Unif}\{\pm 1\},\,\,\,
    \x|y \sim \mathcal{N}(\bm \mu_y, {\bf I}_p).
\end{align}
The class means $\bm \mu_1, \bm\mu_{-1}\in\mathbb{R}^p$ are unknown, 
but their difference $\Delta\bm\mu = \bm \mu_1- \bm\mu_{-1}$
is assumed to
be $k$-sparse, with $k\ll p$. 
In a supervised setting, model
\eqref{eq:mixture2} is closely related to the sparse normal means problem, for which both minimax rates and computationally efficient (thresholding based) algorithms have been developed and analyzed, see e.g. \citet{johnstone1994sparsenormal, johnstone2002function}. 
In an unsupervised setting, inference on the model \eqref{eq:mixture2} is closely related to clustering and learning mixtures of Gaussians \citep{azizyan2013minimax,jin2016influential}. 
A key finding is that in an unsupervised setting with a sparsity assumption, there is a statistical-computational gap \citep{fan2018curse, loffler2022computationally}. 
Specifically, from an information viewpoint 
a number of unlabeled samples $n$ proportional to $k$
suffices to accurately cluster and to detect the support of $\Delta\bm\mu$. However, under under various hardness conjectures, unless $n \propto k^2$, 
no polynomial time algorithm is able to even detect if the data came from 
one or from two Gaussians (namely, distinguish between 
$\Delta\bmu={\bm 0}$ and $\| \Delta\bmu\| = O(1)$).

In this work we study the model \eqref{eq:mixture2} in a SSL setting, 
given $L$ labeled samples and $n$ unlabeled samples, all i.i.d. from \eqref{eq:mixture2}. 
Despite extensive works on  the SL and UL settings for the model \eqref{eq:mixture2}, 
the corresponding SSL setting has received relatively little attention so far. This gives rise to several questions: 
On the theoretical front, what is the information lower bound for 
accurate classification and for recovering the support of $\Delta\bmu$? 
On the computational side, is there a computational-statistical gap in SSL? 
In addition, are there values of $L$ and $n$ for which SSL is provably beneficial as compared to 
SL and UL separately?

\paragraph{Our Contributions.}

    (i) We derive information theoretic lower bounds for exact support recovery in the SSL setting. As described in Section \ref{sec:ILB_ssl}, our lower bounds characterize sets of values for the number of labeled and unlabeled samples, 
    where any estimator based on both types of data is unable to recover the support.  
    To derive these bounds, we view SSL as a data fusion problem 
    involving the merging of 
    samples that come from two measurement modalities: the labeled set and the unlabeled set. 
    In Theorem \ref{thm:ILB_merge_data} we present a general non-asymptotic information-theoretic result for recovering a discrete parameter in this setting. This general result is applicable to other data fusion problems and may thus be of independent interest. 
    
    (ii) We present SSL computational lower bounds. These are based on 
    the low-degree likelihood ratio hardness conjecture \citep{hopkins2017efficient,Kunisky22},  
    %    the low-degree likelihood ratio framework, 
    in an asymptotic setting where dimension $p\to\infty$ and 
    a suitable scaling of the sparsity $k$, and  of the number of labeled and unlabeled samples.
    Our main result is that there is a region
    of the number of labeled and unlabeled samples, 
    whereby in a SSL setting, accurate classification and feature selection are computationally hard. 
    Our analysis extends to the SSL case previous computational lower bounds that were derived only in  UL settings. 
    In particular, if the number of the labeled samples is too small then the statistical-computational gap still remains.
    To the best of our knowledge, our work is the first to extend this framework to a SSL setting.

    (iii) Building upon (i) and (ii), our key contribution is
    the identification of a region where SSL is provably computationally advantageous for
    classification and feature selection. 
    Specifically, in Section \ref{sec:algo} we develop a % simple and 
    polynomial time SSL algorithm, denoted \texttt{LSPCA}, 
    to recover the support of %the vector
    $\Delta\bm\mu$ and consequently construct a linear classifier.
    We then prove that in a suitable region for the number of labeled and unlabeled samples, \texttt{LSPCA} succeeds in both feature selection and accurate classification. 
    In contrast, under the low degree ratio hardness conjecture, 
    any computationally efficient SL or UL schemes, that use only the labeled or unlabeled data separately, would fail.       
    In Section \ref{sec:simulations} we show via simulations the superiority 
    of \texttt{LSPCA}, in both support recovery and classification error, 
    in comparison to several SL and UL methods, a self-training SSL scheme and the SSL method of \cite{zhao2008locality}.  
   
    \begin{figure*}[t]
    \centering
    \includegraphics[width=4.35in]{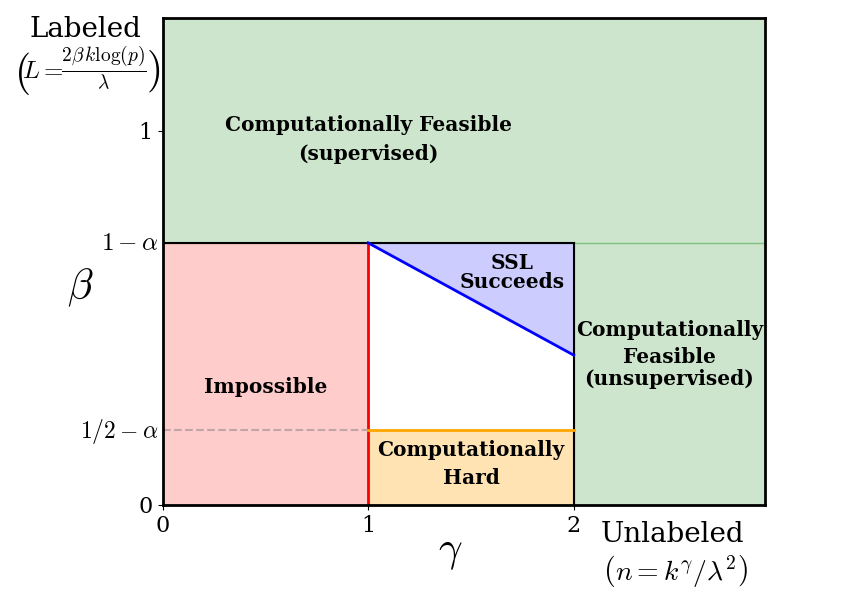}
    \caption{
     Semi-supervised classification and support recovery regions. 
     The red and green regions follow from previous works.
     Contributions of our work include
     identification of the orange and the blue regions. 
     }
     \label{fig:ssl_map}
    \end{figure*}

    Figure  \ref{fig:ssl_map} summarizes the 
    picture emerging from our work in combination with previous papers that analyzed the
    UL and  SL settings of \eqref{eq:mixture2} , namely, the $x$-axis and $y$-axis in Figure \ref{fig:ssl_map}. 
    As in prior works we consider a fixed separation
    $\lambda = \|\Delta\bm\mu\|^2_2/4 = O(1)$, where $\Delta\bm\mu$ is $k$-sparse. 
    The 
    asymptotic setting is that
    $(k,L,n,p)$ all tend to infinity with the following scaling, which  arises naturally for this problem (see Section \ref{sec:theory}): 
    the number of labeled samples is 
    $L=\lfloor 2k \beta \log(p)/\lambda \rfloor $, the number of unlabeled samples scales as $n \propto k^\gamma/\lambda^2$, and the sparsity scales
    as $k\propto p^\alpha$ for some $\alpha\in(0,1/2)$.
    The figure shows different regions in the $(\gamma,\beta)$ plane, namely as a function of the number of unlabeled and labeled samples, where classification and feature selection are either impossible, hard or computationally easy. 
    We say that classification is impossible if 
    {for any classifier there is a $k$-sparse vector $\Delta\bmu$ whose corresponding accuracy is no better than random.}
    % there exists a $k$-sparse vector $\Delta\bmu$, for which no classifier would be (asymptotically) better than random. 
    Similarly, we say that feature selection is impossible if for any estimator $\hat S$ of size $k$ 
    {there is a $k$-sparse $\Delta\bmu$ with support $S$ such that}
    % it follows that 
    $|\hat S\cap S|/k\to 0$ as $p\to\infty$. Feature selection is easy if it is possible to construct in polynomial time a set $\hat S$ of size $k$ such that $|\hat S\cap S|/k\to 1$. This implies that the corresponding classifier has an excess risk that asymptotically tends to zero. 
    The green region $\gamma \geq 2$ follows from \citet{deshpande2014sparse}, since in this case support estimation is computationally feasible using only the unlabeled data. 
    The region depicted in red is where classification and support recovery are impossible.
    The impossibility of support recovery follows from \citet{Ingster_97, donoho2004higher}, who proved that support recovery is feasible if and only if $\beta>1-\alpha$.
    The same condition holds for classification as well, as described in the supplement.
    The orange and blue regions in Figure  \ref{fig:ssl_map} follow from 
    novel results of this paper.
    In the orange region, defined as $\beta<1-\alpha$ and $1<\gamma<2$, our computational lower bound in Theorem \ref{thm:comp_lb_ssl} suggests that any polynomial-time scheme will not succeed in accurate classification. 
    In the blue region,  characterized by $\beta\in(1-\gamma\alpha, 1-\alpha)$ and $1<\gamma<2$,
    our proposed polynomial time SSL method is guaranteed to
    construct an accurate classifier. This is proven in Theorem \ref{thm:ssl_algo}. 
    In addition, note that in this regime, the availability of unlabeled data allows to decrease the number of labeled samples by a factor of $\frac{1-\alpha}{1-\gamma\alpha}$.
    Under the low degree hardness conjecture, 
    in this blue region no computationally efficient SL or UL method that separately analyze the labeled or unlabeled samples, respectively, would succeed. 
    We conjecture that in the remaining white region, no polynomial-time algorithm exists that is able to recover the support or able to construct an accurate classifier.  
    In summary, our work highlights the provable computational benefits of combining labeled and unlabeled data for classification and feature selection in a high dimensional sparse setting.

\paragraph{Notation}
For an integer $p$, we write $[p] = \{1,...,p\}$. 
The cardinality of a  set $B$ is $|B|$.
For a vector $\bm v \in \mathbb{R}^p$, we denote its restriction to a subset $T\subset[p]$ by $\bm v|_T$.
For vectors $\bm a, \bm b$, their inner product is $\langle \bm a, \bm b \rangle$,
and $\|\bm a\|$ denotes the $\ell_2$ norm. 
We say that $f(p)=\omega(g(p))$ if for any $c > 0$, there exists $p_0 \geq 1$ 
such that $f(p) > c g(p)$ for every $p \geq p_0$.

%################################################
%######## Theoretical Results ###################
%################################################
\section{Theoretical Results}
\label{sec:theory}

In this section we present our first two contributions, namely  
{an information-theoretic lower bound for exact support recovery of $\Delta\bm\mu$, and a computational lower bound for classification and support recovery, in a SSL setting. }
To this end,  in Section \ref{sec:ILB_sl_ul} %and \ref{sec:ILB_unsupervised}  
we first review lower bounds for SL and UL settings. As we were not able to find these precise results in the literature, for completeness we present their proofs, based on Fano's inequality, in the supplementary. 
Our main contribution here, described in Section
\ref{sec:ILB_ssl}, is a SSL lower bound. To derive it, we view SSL as a data fusion problem with two types of data (the labeled set and the unlabeled set). The SSL lower bound then follows by a combination of the lower bounds for SL and UL.

To derive lower bounds, it suffices to consider a specific instance of \eqref{eq:mixture2}, where the two Gaussian means are symmetric around the origin, with $\bm \mu_1 = - \bm \mu_{-1} = \bm \mu$. Hence,
\begin{align}
    \label{eq:sparse_mixture_model}        
             y \sim \text{Unif}\{\pm 1\},\,\,\,
    \x|y \sim \mathcal{N}(y\bm \mu , {\bf I}_p). 
\end{align}
Here, ${\bm \mu} \in \mathbb{R}^p$ is an unknown $k$-sparse vector with  $\ell_2$ norm of $\sqrt\lambda$.
We denote its support by $S=\mbox{supp}(\bm\mu) = \{i|\mu_i\neq 0\}$, 
and by $\mathbb{S}$  the set of all 
$\genfrac(){0pt}{2}{p}{k}$ possible $k$-sparse support sets.% from $p$ variables.
{We denote by $\D_L = \{(\bm x_i, y_i)\}_{i =1}^L$ and $\D_n = \{ \x_i\}_{i =L+1}^{L+n}$ the i.i.d. labeled and the unlabeled datasets, respectively.}

To derive information and computational lower bounds for support recovery, it is necessary to 
impose a lower bound on $\min_{i\in S} |\mu_i|$.  
As in \citet{Amini_Wainwright_2009, krauthgamer2015semidefinite}, it suffices to study the set of most difficult $k$-sparse vectors with such a lower bound on their entries. In our case this translates to the nonzero entries of $\bm \mu$ belonging to $\{\pm \sqrt{\lambda/k}\}$.
Clearly, if some signal coordinates had  magnitudes larger that $\sqrt{\lambda/k}$, then the problem of detecting them and constructing an accurate classifier would both be easier. 
Throughout our analysis, we assume $\bm\mu$ is of this form and the sparsity $k$ is known. 
All proofs appear in the supplementary.

%%%%%%%%%%%%%%%%%%%%%%%%%%%%%%%%%%%%%%%%%%%%%%%%%%%
\subsection{Information Lower Bounds (Supervised and Unsupervised)}
    \label{sec:ILB_sl_ul}

    The next theorem states a non-asymptotic result for exact support recovery in the SL case.
    \begin{theorem} %[Lower bound]
    \label{thm:labeled_info_theo}
    Fix $\delta\in(0,1)$.
    For any $(L,p,k)$ such that
    \begin{align}
    \label{eq:labeled_lb}
        L < \frac{2(1-\delta)k}{\lambda} \log\left({p-k+1}\right),
    \end{align}
    and for any support estimator $\hat{S}$ based on $\D_L$, 
    it follows that
    $ 
    \max_{S \in \mathbb{S}}\Prob\left(\hat S \neq S\right) > \delta - \frac{\log 2}{\log(p-k+1)} .
    $
    
    \end{theorem}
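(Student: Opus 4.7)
The plan is to apply Fano's inequality to a carefully chosen restricted family of hypotheses, rather than to the full set $\mathbb{S}$ of $\binom{p}{k}$ supports. Since $\max_{S\in\mathbb{S}}\Prob(\hat S\ne S)$ dominates the worst-case error over any subfamily, it suffices to exhibit a subfamily $\mathbb{S}_0\subset\mathbb{S}$ of size $M=p-k+1$ that is hard to distinguish from $L$ labeled samples. The natural choice is to fix an arbitrary $(k-1)$-subset $S_0\subset[p]$ along with fixed signs on $S_0$, and for each coordinate $i\notin S_0$ let $\bmu^{(i)}$ be the vector with support $S_0\cup\{i\}$, the prescribed signs on $S_0$, and entry $+\sqrt{\lambda/k}$ at position $i$. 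This yields $M=p-k+1$ hypotheses $P^{(i)}$, all in the model class \eqref{eq:sparse_mixture_model}.

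Next, I would invoke Fano's inequality in the form
\begin{equation*}
\max_{S\in\mathbb{S}}\Prob(\hat S\ne S)\;\ge\;1-\frac{I(S;\D_L)+\log 2}{\log M},
\end{equation*}
where $S$ is drawn uniformly over $\mathbb{S}_0$. The key step is to upper bound the mutual information. Rather than using the pairwise KL bound (which would lose a factor of two), I would use the ``auxiliary distribution'' trick: for \emph{any} reference distribution $Q$,
\begin{equation*}
I(S;\D_L)\;\le\;\frac{1}{M}\sum_{i\notin S_0} D\bigl(P^{(i),L}\,\|\,Q^{L}\bigr).
\end{equation*}
I would take $Q$ to be the joint labeled-sample distribution induced by the base vector $\bmu_0$ supported on $S_0$ alone (with the fixed signs). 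Since $\bmu^{(i)}-\bmu_0=\sqrt{\lambda/k}\,\bm e_i$, for each labeled pair $(\x,y)$ the conditional KL between two unit-covariance Gaussians differing only in mean gives $D(P^{(i)}\|Q)=\tfrac{1}{2}\|\bmu^{(i)}-\bmu_0\|^2=\lambda/(2k)$, which is independent of $y$ and therefore also equals the KL of the joint distribution of $(\x,y)$. Tensorizing over $L$ i.i.d.\ samples gives $D(P^{(i),L}\|Q^L)=L\lambda/(2k)$.

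Substituting into Fano yields
\begin{equation*}
\max_{S\in\mathbb{S}}\Prob(\hat S\ne S)\;\ge\;1-\frac{L\lambda/(2k)+\log 2}{\log(p-k+1)}.
\end{equation*}
A short algebraic rearrangement shows that the assumed bound $L<2(1-\delta)k\log(p-k+1)/\lambda$ is exactly what is needed to make the right-hand side exceed $\delta-\log 2/\log(p-k+1)$, completing the proof. The only subtle point is the factor of two: using a pairwise KL bound between two supports that differ in one coordinate gives $\|\bmu^{(i)}-\bmu^{(j)}\|^2=2\lambda/k$ and loses a factor of two, so the asymmetric bound against the $(k-1)$-sparse reference $\bmu_0$ is essential for obtaining the stated constant. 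Everything else is mechanical: choosing $\mathbb{S}_0$, applying Fano, computing the Gaussian KL, and tensorizing.
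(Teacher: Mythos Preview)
Your proof is correct and reaches the same mutual-information bound $I(S;\D_L)\le L\lambda/(2k)$ as the paper, but by a genuinely different route. The paper also restricts to the family $\tilde{\mathbb S}$ of supports containing a fixed $(k-1)$-set plus one varying index, but it bounds the mutual information via an entropy decomposition: it transforms to $\tilde\x_i=y_i\x_i$, computes $H(\tilde\x\mid S)$ exactly as a Gaussian entropy, and upper-bounds $H(\tilde\x)$ by the Gaussian max-entropy principle applied to the mixture, which requires computing the covariance $\Sigma$ of the mixture over $\tilde{\mathbb S}$ and then bounding $\log\det\Sigma$ via a dedicated lemma. You instead use the Yang--Barron variational bound $I\le\frac{1}{M}\sum_i D(P^{(i),L}\|Q^L)$ with the clever choice of $Q$ corresponding to the $(k-1)$-sparse base vector $\bmu_0$; the Gaussian KL then collapses to $\tfrac12\|\bmu^{(i)}-\bmu_0\|^2=\lambda/(2k)$ per sample, and tensorization is immediate. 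Your argument is shorter and more elementary (no covariance or determinant computation), and your remark that the pairwise-KL version would lose a factor of two is exactly right; the paper's max-entropy route implicitly captures the same saving through the centering against the mixture mean, which at the varying coordinates behaves like the $(k-1)$-sparse reference. Both approaches yield the identical constant, so the final algebra is the same.
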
 
   % \textcolor{blue}
{\citet{donoho2004higher} proved a similar result in an asymptotic regime. 
Specifically, they proved that for $k = p^\alpha$ and $L= \frac{2 \beta k}{\lambda}\log p$,  approximate support recovery is possible if and only if $\beta>1-\alpha$.
Theorem \ref{thm:labeled_info_theo} states a stronger non-asymptotic result for exact support recovery. 
It implies that even if $\beta>1-\alpha$, it is still impossible to recover the exact support with probability tending to one if  $\beta<1$.  }

%##########################################################################

% \subsection{Information Lower Bound (Unsupervised)}
% \label{sec:ILB_unsupervised}

Next we present  
an information lower bound for exact support recovery in UL. 
Here we observe $n$ vectors $\x_i$ from \eqref{eq:sparse_mixture_model} but not their labels $y_i$.
\begin{theorem} %[Lower bound]
    \label{thm:unlabeled_info_theo}
   Fix $\delta \in (0,1)$.
    For any $(n,p,k) \to \infty$ with $\frac{k}{p} \to 0$ and 
    \begin{align}
    \label{eq:unlabeled_lb}
        n< \frac{2(1-\delta)k}{\lambda^2} 
        \log\left(p-k+1\right)\max\left\{1, \lambda\right\},
    \end{align}
    for any support estimator $\hat{S}$ based on $\D_n$, 
    as $p\to\infty$ , then
    %we have
    $
    \max_{S \in \mathbb{S}}\Prob\left(\hat S \neq S\right) \geq \delta
    .$
\end{theorem}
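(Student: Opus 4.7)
The proof follows the same Fano-inequality skeleton as Theorem \ref{thm:labeled_info_theo}, but with the labeled KL replaced by the KL between two symmetric Gaussian mixtures $P_\bmu,P_{\bmu'}$ obtained by marginalizing $y$ out of \eqref{eq:sparse_mixture_model}. As in the supervised case, I would restrict attention to the $p-k+1$ single-swap hypotheses obtained by fixing a reference support $S_0$ with a chosen sign pattern and exchanging one fixed index of $S_0$ for each $j\in[p]\setminus S_0$. For any such pair $(\bmu,\bmu')$, Fano's inequality applied to $n$ i.i.d. draws from $P_\bmu$ gives
\begin{equation*}
\max_{S\in\mathbb{S}}\Prob(\hat S\neq S)\;\geq\; 1 \;-\;\frac{n\cdot\max_{\bmu,\bmu'}\mathrm{KL}(P_\bmu\|P_{\bmu'})+\log 2}{\log(p-k+1)}.
\end{equation*}

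The central task is therefore to establish the two-regime bound
\begin{equation*}
\mathrm{KL}(P_\bmu\|P_{\bmu'}) \;\leq\; \frac{\lambda^{2}}{k\,\max\{1,\lambda\}}\,(1+o(1))\;=\;\frac{\lambda}{k}\min\{1,\lambda\}\,(1+o(1)),
\end{equation*}
which is qualitatively stronger than the labeled bound $\tfrac{1}{2}\|\bmu-\bmu'\|^{2}=\lambda/k$ whenever $\lambda<1$, and recovers the labeled rate when $\lambda\geq 1$. The $\lambda/k$ half follows for free from data processing: augmenting each $\x_i$ with its latent sign $y_i$ is a refinement, so $\mathrm{KL}(P_\bmu\|P_{\bmu'})\leq\mathrm{KL}(P_{\bmu,y}\|P_{\bmu',y})=\tfrac{1}{2}\|\bmu-\bmu'\|^{2}=\lambda/k$ for a single-coordinate swap. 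Plugging either estimate into Fano and rearranging then yields the stated threshold $n<\tfrac{2(1-\delta)k\log(p-k+1)\max\{1,\lambda\}}{\lambda^{2}}$.

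The main obstacle is proving the tighter $\lambda^{2}/k$ bound needed in the small-$\lambda$ regime. The reason this is possible at all is that $P_\bmu$ depends on $\bmu$ only through $\bmu\bmu^{\top}$, so the first-order (in $\bmu$) sensitivity of $P_\bmu$ to the parameter vanishes. Concretely, using the identity $P_\bmu/P_0 = e^{-\lambda/2}\cosh(\x^{\top}\bmu)$ together with $\|\bmu\|=\|\bmu'\|=\sqrt\lambda$ reduces the KL to $\E_{P_\bmu}[\log\cosh(\x^{\top}\bmu)-\log\cosh(\x^{\top}\bmu')]$. A Taylor expansion of $\log\cosh$ near the origin, combined with the moment identity $\E_{P_\bmu}(\x^{\top}\bm v)^{2}=\langle\bmu,\bm v\rangle^{2}+\|\bm v\|^{2}$, extracts the leading contribution $\tfrac{1}{2}(\lambda^{2}-\langle\bmu,\bmu'\rangle^{2})$, which evaluates to $\lambda^{2}/k+o(\lambda^{2}/k)$ for a single-index swap since $\langle\bmu,\bmu'\rangle=\lambda(k-1)/k$. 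The delicate step is controlling the higher-order remainder of $\log\cosh$: since $\log\cosh(u)\approx u^{2}/2$ only near zero, while $\x^{\top}\bmu$ has typical magnitude $\sqrt\lambda$ which is only $O(1)$, the second-order Taylor bound has to be supplemented by a careful truncation argument. This is what forces the asymptotic (rather than non-asymptotic) flavor of the statement and accounts for the additional $\max\{1,\lambda\}$ factor relative to the purely quadratic regime.
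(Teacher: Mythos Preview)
Your overall approach---Fano's inequality over the $p-k+1$ single-swap hypotheses, the $\lambda\ge 1$ case via reduction to the labeled bound, and the $\lambda<1$ case via a $\log\cosh$ expansion---mirrors the paper's proof. The principal difference is that you bound the Fano numerator by $n\cdot\max_{\bmu,\bmu'}\mathrm{KL}(P_\bmu\|P_{\bmu'})$, whereas the paper bounds the mutual information $I(\x;S)$ directly via $H(\x)-H(\x|S)$, obtaining $I(\x;S)\le\tfrac{\lambda^2}{2k}(1+o(1))$. For the single-swap family the mixture $\bar P$ sits ``in the middle'' of the hypotheses, so $\mathrm{KL}(P_s\|\bar P)$ is essentially half of the pairwise $\mathrm{KL}(P_s\|P_{s'})$; your own leading term $\tfrac{1}{2}(\lambda^2-\langle\bmu,\bmu'\rangle^2)\approx\lambda^2/k$ confirms this. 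Consequently your argument yields the threshold $n<\tfrac{(1-\delta)k}{\lambda^2}\log(p-k+1)\max\{1,\lambda\}$, i.e.\ the theorem with constant $1$ rather than $2$. To recover the stated constant you must work with $I(\x;S)$ itself rather than the max pairwise KL; the same comment applies to the $\lambda\ge1$ reduction, where the data-processing bound $\mathrm{KL}\le\lambda/k$ is again a factor of two above the labeled mutual-information bound $\lambda/(2k)$ used in Theorem~\ref{thm:labeled_info_theo}.

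A secondary point: expanding $\log\cosh$ at the origin is not the right organizing principle, since $\x^\top\bmu$ is $O(\sqrt\lambda)$ rather than small. What \emph{is} small is the perturbation $\x^\top(\bmu-\bmu')=O(\sqrt{\lambda/k})$, so the natural expansion is of $\log\cosh(\x^\top\bmu')$ around $\x^\top\bmu$; the leading contribution is then $\tfrac{\lambda}{k}\,\E[\tanh(\lambda+\sqrt\lambda\,w)]$ with $w$ standard Gaussian, and the elementary inequality $\E[\tanh(\lambda+\sqrt\lambda\,w)]\le\lambda$ closes the bound without any truncation. The paper's entropy computation is organized exactly this way (through the function $g_w(t)=\log\cosh(\lambda+\sqrt\lambda\,w+t)$), with Lemma~\ref{lemma:tanh_ub} supplying the needed $\tanh$ estimate.
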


The scaling in Eq. \eqref{eq:unlabeled_lb}
appeared in several prior works on related problems.  
\citet{azizyan2013minimax} showed that for $\lambda<1$, with number of samples $n<C \frac{k}{\lambda^2}\log(p/k)$, no clustering method can achieve accuracy better than random. 
\citet{verzelen2017detection}
studied hypothesis 
testing whether the data came from a single  Gaussian
or from a mixture of two Gaussians. 
In Proposition 3 of their paper, they proved that for $n \leq \frac{k}{\lambda^2} \log(\frac{ep}{k}) \max\{1, \lambda\}$, any testing procedure is asymptotically powerless. Note that for $k = p^\alpha$ with $\alpha < 1$, the lower bound derived in \citet{verzelen2017detection} has a similar form to \eqref{eq:unlabeled_lb} with a factor of $1 - \alpha$, which is slightly smaller. 
Thus the bound in \eqref{eq:unlabeled_lb} is sharper.

%##########################################################################
\subsection{Semi-Supervised Setting}
\label{sec:ILB_ssl}

In the SSL case,  the observed data consists of two subsets, one with $L$ labeled samples and the other with $n$ unlabeled ones.
We now develop  information-theoretic and computational lower bounds for this setting. 
The information lower bound is based on the 
results in Section \ref{sec:ILB_sl_ul}
for SL and UL settings. 
The computational lower bound relies on the low-degree likelihood hardness conjecture.  
Over the past 10 years, several authors studied statistical-computational gaps for various high dimensional problems. 
For the sparse Gaussian mixture \eqref{eq:sparse_mixture_model} both \citet{fan2018curse} and
\citet{loffler2022computationally} derived such gaps in an UL setting. 
To the best of our knowledge, our work is amongst the first to explore computational-statistical gaps in a SSL setting. Our analysis, described below, shows that with relatively few labeled samples, the computational statistical gap continues to hold. In contrast, as we describe in 
Section \ref{sec:algo}, with a sufficiently large number of labeled samples, but not enough so solve the problem using only the labeled set, the computational-statistical gap is resolved. 
In particular, we present a polynomial time SSL algorithm that bridges this gap.

\paragraph{Information Lower Bounds.}
Before presenting results for the mixture model  
\eqref{eq:sparse_mixture_model}, we analyze a more general case.
We study the recovery of a latent variable $S$ that belongs to a large finite set $\mathbb{S}$, given 
measurements from 
two different modalities.
Formally, 
the problem is to recover $S$ from  two independent sets of samples $ \{\x_i\}_{i=1}^N$ and $\{\z_j\}_{j=1}^M$ of the following form, 
\begin{align}
\label{eq:samples}
    &\{\x_i\}_{i=1}^N \sim f_x(\x|S), \quad
    \{\z_j\}_{j=1}^M \sim g_z(\z|S).
\end{align}
Here,
$ f_x(\x|S)$ and $g_z(\z|S)$ are known probability density functions. 
These functions encode information on $S$ from the two types of measurements. 
In our SSL setting, 
$\x$ represents an unlabeled sample, whereas $\z=(\x,y)$ a labeled one,
and $S$ is the unknown support of $\bm\mu$. 

Our goal is to derive information lower bounds for this problem.
To this end, we assume that $S$
is a random variable uniformly distributed over a finite set $\mathbb{S}$, 
and denote by $I_x = I(\x; S)$ and $I_z = I(\z; S)$ the mutual information of $\x$ with $S$ and of $\z$ with $S$, respectively.
Further, recall a classical result in information theory that to recover $S$ from $N$ i.i.d. samples of $\x$, $N$ must scale at least linearly with $\frac{\log |\mathbb S|}{I_x}$.
A similar argument applies to $\z$.
For further details, see \citet{scarlett_cevher_2021}.
The following theorem states a
general non-asymptotic information-theoretic result for recovering $S$ from the above two sets of samples.
Hence, it is applicable to other problems involving data fusion from multiple sources
and may thus be of independent interest.

\begin{theorem}
\label{thm:ILB_merge_data}
    % Let $\{\x_i\}_{i =1}^{N}$ and $\{\z_j\}_{j=1}^{M}$ be i.i.d samples according to \eqref{eq:samples}.
    Fix $\delta\in(0,1)$.
    Let $N, M$ be integers that satisfy 
    $\max\left\{N\cdot I_x,\, M\cdot I_z \right\}< (1-\delta)\log|{\mathbb{S}}|$.
    Let $N_q = \lfloor q N\rfloor$ and $M_q = \lfloor(1-q) M\rfloor$, for $q\in[0,1]$.
    Then, any estimator $\hat{S}$ based on $\{\x_i\}_{i=1}^{N_q}$ and $\{\z_j\}_{j=1}^{M_q}$ satisfies
    \begin{align}
        \Prob\left( \hat{S} \neq S\right) > \delta - \frac{\log 2}{\log |\mathbb{S}|}. \nonumber
    \end{align}

\end{theorem}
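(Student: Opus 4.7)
The plan is to proceed by a standard Fano inequality argument, where the main task is to carefully bound the joint mutual information $I(S; \{\x_i\}_{i=1}^{\Na}, \{\z_j\}_{j=1}^{\Ma})$ by a quantity that is strictly less than $(1-\delta)\log|\mathbb{S}|$. Here I shorten $\Na = N_q$ and $\Ma = M_q$. Since $S$ is uniform on $\mathbb{S}$, one has $H(S)=\log|\mathbb{S}|$, and Fano's inequality gives
\begin{equation*}
    \Prob(\hat S \neq S) \;\geq\; \frac{H(S)-I(S;\D)-\log 2}{\log|\mathbb{S}|} \;=\; 1 - \frac{I(S;\D)+\log 2}{\log|\mathbb{S}|},
\end{equation*}
where $\D=(\{\x_i\}_{i=1}^{\Na},\{\z_j\}_{j=1}^{\Ma})$. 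So it suffices to show $I(S;\D) < (1-\delta)\log|\mathbb{S}|$.

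Next I would exploit two structural facts. First, conditional on $S$, the two sample blocks are generated independently (one from $f_x(\cdot|S)$, the other from $g_z(\cdot|S)$), so
\begin{equation*}
    H(\D\mid S) \;=\; H(\{\x_i\}_{i=1}^{\Na}\mid S) + H(\{\z_j\}_{j=1}^{\Ma}\mid S) \;=\; \Na H(\x_1\mid S) + \Ma H(\z_1\mid S),
\end{equation*}
using the i.i.d. structure within each block. Second, by subadditivity of entropy, $H(\D) \leq H(\{\x_i\}) + H(\{\z_j\}) \leq \Na H(\x_1) + \Ma H(\z_1)$. Subtracting the two bounds yields
\begin{equation*}
    I(S;\D) \;=\; H(\D) - H(\D\mid S) \;\leq\; \Na\, I_x + \Ma\, I_z.
\end{equation*}

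Finally, since $\Na \leq qN$ and $\Ma \leq (1-q)M$, the hypothesis $\max\{N I_x,\, M I_z\} < (1-\delta)\log|\mathbb{S}|$ gives
\begin{equation*}
    I(S;\D) \;\leq\; qN I_x + (1-q)M I_z \;<\; (q+(1-q))(1-\delta)\log|\mathbb{S}| \;=\; (1-\delta)\log|\mathbb{S}|,
\end{equation*}
and plugging this back into Fano's bound yields the claim $\Prob(\hat S\neq S) > \delta - \log 2/\log|\mathbb{S}|$. There is no serious technical obstacle: the argument is standard, but the conceptual point worth emphasizing is that data fusion cannot beat the convex combination $qN I_x + (1-q)M I_z$ of the two modality-specific information rates, which is precisely why imposing the hypothesis on each modality \emph{separately} already suffices to lower-bound the fused estimation error. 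The only subtle step is keeping the independence of the two blocks \emph{given} $S$ separate from their marginal dependence, which is what allows the additive split in $H(\D\mid S)$ while only a subadditive bound applies to $H(\D)$.
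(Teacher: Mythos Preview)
Your proof is correct and follows essentially the same route as the paper's: Fano's inequality combined with the bound $I(S;\D)\leq \Na I_x + \Ma I_z$, obtained via conditional independence of the two blocks given $S$ and subadditivity of entropy. Your explicit convex-combination step $qN I_x + (1-q)M I_z < (1-\delta)\log|\mathbb{S}|$ spells out what the paper leaves as a one-line conclusion, but the argument is otherwise identical.
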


This theorem implies that with any convex combination of samples from the two modalities, $q N$ from the first and $(1-q)M$ from the second, accurate recovery of $S$ is not possible if $N$ and $M$ are 
both too small. 
Essentially this follows from the additivity of mutual information.

Combining Theorem \ref{thm:ILB_merge_data} with 
the proofs of 
Theorems \ref{thm:labeled_info_theo} and \ref{thm:unlabeled_info_theo} yields the following information lower bound for the semi-supervised case. 

\begin{corollary}%[Lower bound]
\label{cor:ILB_SSL}
    Let $\D_{L}$ and  $\D_n$ 
    be sets of $L$ and $n$ i.i.d. labeled and unlabeled samples from the mixture model \eqref{eq:sparse_mixture_model}.
    Fix $\delta \in (0,1)$.
    Let $(L_0,n_0,p,k) \to \infty$, with $\frac{k}{p} \to 0$,  be such that
    {
    \begin{align}
        L_0 < \frac{2(1-\delta)k}{\lambda} \log\left(p-k+1\right), \, 
        n_0< \frac{2(1-\delta)k}{\lambda^2} 
        \log\left(p-k+1\right)\max\left\{1, \lambda\right\}.
        \nonumber 
    \end{align}
    }
    Suppose the number of labeled and unlabeled samples satisfy $L = \lfloor q L_0\rfloor$ and $n = \lfloor(1-q)n_0\rfloor$ for 
    some $q\in[0,1]$.
    Then, for any estimator $\hat{S}$ based on $\D_L \cup \D_n$, 
    as $p\to\infty$
    \begin{align}
         \max_{S \in \mathbb{S}}\Prob\left( \hat{S} \neq S\right) \geq \delta. \nonumber
    \end{align}
\end{corollary}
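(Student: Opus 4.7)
The plan is to apply Theorem \ref{thm:ILB_merge_data} with the two data modalities being precisely the labeled and unlabeled samples. Concretely, I would identify $\x$ in the abstract setup with a labeled sample (so $\x=(\bm x,y)$ with joint density given by \eqref{eq:sparse_mixture_model}) and $\z$ with an unlabeled sample (density obtained by marginalizing $y$ out of \eqref{eq:sparse_mixture_model}), take $\mathbb{S}$ to be the set of all $\binom{p}{k}$ $k$-sparse supports, and set $N=L_0$ and $M=n_0$. Then the theorem's sample counts become $N_q=\lfloor qL_0\rfloor=L$ and $M_q=\lfloor(1-q)n_0\rfloor=n$, which matches the hypothesis of the corollary. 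Since $\Prob(\hat S\ne S)$ under a uniform prior on $\mathbb{S}$ lower bounds the worst-case error $\max_{S\in\mathbb{S}}\Prob(\hat S\ne S)$, the conclusion of Theorem \ref{thm:ILB_merge_data} will give exactly what we want, modulo the additive $\log 2/\log|\mathbb{S}|$ term, which vanishes as $p\to\infty$ because $|\mathbb{S}|=\binom{p}{k}\to\infty$ under the scaling $k/p\to 0$.

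What remains is to verify the two mutual-information inequalities $L_0\cdot I(\x;S)<(1-\delta)\log|\mathbb{S}|$ and $n_0\cdot I(\z;S)<(1-\delta)\log|\mathbb{S}|$ required by Theorem \ref{thm:ILB_merge_data}. I would obtain these by extracting from the proofs of Theorems \ref{thm:labeled_info_theo} and \ref{thm:unlabeled_info_theo} the per-sample mutual-information upper bounds that feed their Fano arguments. For a labeled sample, a direct KL computation between two Gaussians whose means differ by one coordinate of $\bm\mu$ yields a bound of the form $I(\x;S)\le \tfrac{\lambda}{2k}$ (after averaging over the uniform prior over the packing set used in the Fano argument, of cardinality at least $p-k+1$). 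For an unlabeled sample, a chi-squared or Hellinger bound on the mixture densities induced by two neighboring supports gives $I(\z;S)\le \tfrac{\lambda^2}{2k\max\{1,\lambda\}}$, the $\max\{1,\lambda\}$ factor reflecting the two regimes of weak and strong separation. Plugging these per-sample bounds into the assumed inequalities on $L_0$ and $n_0$, and using $\log(p-k+1)\le \log|\mathbb{S}|$, immediately delivers the two conditions needed.

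The delicate step is the second bound. Unlike the labeled case, the unlabeled likelihood is a two-component Gaussian mixture, so the per-sample mutual information cannot be computed in closed form and one must instead bound it by a divergence (chi-squared works well) and then tensorize over the $n_0$ i.i.d.\ copies, being careful to preserve the sharp dependence on $\max\{1,\lambda\}$. This is precisely the calculation already performed in the supplementary proof of Theorem \ref{thm:unlabeled_info_theo}, so I would simply reuse its output. Once both per-sample mutual-information inequalities are in hand, the corollary follows by a single invocation of Theorem \ref{thm:ILB_merge_data} followed by the limit $p\to\infty$.
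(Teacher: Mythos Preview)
Your overall strategy matches the paper's: apply Theorem~\ref{thm:ILB_merge_data} to the labeled and unlabeled modalities, feeding in the per-sample mutual-information upper bounds already established inside the proofs of Theorems~\ref{thm:labeled_info_theo} and~\ref{thm:unlabeled_info_theo}. There is, however, a genuine inconsistency in how you instantiate the theorem.

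You propose to take the finite set in Theorem~\ref{thm:ILB_merge_data} to be the full support family $\mathbb{S}$ of cardinality $\binom{p}{k}$, yet the bounds $I(\text{labeled};S)\le \lambda/(2k)$ and $I(\text{unlabeled};S)\le \tfrac{\lambda}{2k}\min\{1,\lambda\}(1+o(1))$ extracted from those proofs are computed for $S$ uniform on the \emph{restricted} packing set $\tilde{\mathbb{S}}$ of size $p-k+1$ (all $k$-subsets containing $\{1,\dots,k-1\}$). Mutual information depends on the prior: with $S$ uniform on the full $\mathbb{S}$, the per-sample MI of a labeled observation is of order $\lambda/2$, not $\lambda/(2k)$ (the max-entropy covariance is roughly $\sigma^2 I_p + \tfrac{1}{p}I_p$, so $\log\det\Sigma - p\log\sigma^2 \approx \lambda$). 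Consequently $L_0\cdot I_\x$ becomes of order $(1-\delta)k\log(p-k+1)$, which is \emph{larger} than $(1-\delta)\log\binom{p}{k}$; your step ``using $\log(p-k+1)\le\log|\mathbb{S}|$'' therefore does not close the argument.

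The fix is exactly what the paper does: apply Theorem~\ref{thm:ILB_merge_data} with the finite set equal to $\tilde{\mathbb{S}}$ itself, so that $\log|\tilde{\mathbb{S}}|=\log(p-k+1)$. The assumed upper bounds on $L_0$ and $n_0$ then give $L_0\cdot I_\z < (1-\delta)\log(p-k+1)$ and $n_0\cdot I_\x < (1-\delta)\log(p-k+1)(1+o(1))$ directly, with no comparison to $\log\binom{p}{k}$ needed. Theorem~\ref{thm:ILB_merge_data} yields a lower bound on the Bayes error under the $\tilde{\mathbb{S}}$-prior, which in turn lower bounds $\max_{S\in\mathbb{S}}\Prob(\hat S\neq S)$ since $\tilde{\mathbb{S}}\subset\mathbb{S}$; letting $p\to\infty$ removes the $\log 2/\log(p-k+1)$ slack.
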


%%%%%%%%%%%%%%%%%%%%%%
\paragraph{Computational Lower Bounds.}
Our SSL computational lower bound is based on the low-degree framework, and its associated hardness conjecture \citep{hopkins2017efficient,Kunisky22}.
This framework was used to derive computational lower bounds for various unsupervised
high dimensional problems including sparse-PCA and sparse Gaussian mixture models
\citep{loffler2022computationally,schramm2022computational,ding2023subexponential}.
To the best of our knowledge, our work is the first to adapt this framework to a SSL setting. 

For our paper to be self-contained, we first briefly describe this framework and its hardness conjecture. We then present its adaptation to our SSL setting. 
The low degree likelihood framework focuses on unsupervised {\em detection} problems, specifically
the ability to distinguish between two distributions $\mathbb{P}$ and $\mathbb{Q}$, given $n$
i.i.d. samples. Specifically, denote the null distribution of $n$ samples by $\mathbb{Q}_n$,
whereby all $\x_i\sim \mathbb{Q}$, and denote by $\mathbb{P}_n$ the alternative distribution, with
all $\x_i\sim\mathbb{P}$. 

Under the low-degree framework, one 
%The low-degree method 
analyzes how
well can the distributions $\mathbb{P}_{n}$ and $\mathbb{Q}_{n}$ be distinguished by a low-degree
multivariate 
polynomial  $f: \mathbb{R}^{p\times n} \to \mathbb{R}$. 
The idea is to construct a polynomial $f$ 
which attains large values
% that is large 
for data from $\mathbb{P}_{n}$ and small values
for data from $\mathbb{Q}_{n}$. 
Specifically, the following metric
plays a key role in this framework, 
\begin{align}
\label{def:low_deg_norm}
    \|\mathcal{L}_{n}^{\leq D}\| := \max_{\text{deg}(f) \leq D} \frac{\E_{X \sim \mathbb{P}_{n}} \left[ f(X)\right]}
    {\sqrt{\E_{X \sim \mathbb{Q}_{n}} \left[ f(X)^2\right]}},
\end{align}
where the maximum is over polynomials $f$ of degree at most $D$.
The value $\|\mathcal{L}_{n}^{\leq D}\|$ 
characterizes how well degree-$D$ polynomials can distinguish $\mathbb{P}_{n}$ from $\mathbb{Q}_{n}$.
If $\|\mathcal{L}_{n}^{\leq D}\| = O(1)$, then $\mathbb{P}_{n}$ and $\mathbb{Q}_{n}$ cannot be distinguished via a degree-$D$ polynomial.
Computational hardness results that use the low-degree framework are based on the following conjecture, which we here state informally, and refer the reader to 
%To establish a connection between the low-degree  framework and computational hardness, we
%present the following informal conjecture, and refer the reader to 
\citet{loffler2022computationally} 
% and \citet{hopkins2018statistical} 
for its precise statement.  
\begin{conj}[Informal]
\label{conj:low_degree}
    Let $\mathbb{Q}_n$ and $\mathbb{P}_n$ be two distinct distributions.
    % For some $L = L(p), n = n(p), k = k(p)$.
    Suppose that there exists $D = \omega(\log(pn))$ for which $\|\mathcal{L}_{n}^{\leq D}\|$ remains bounded as $p \to \infty$. 
    Then, there is no polynomial-time test $T:\mathbb{R}^{p\times n} \to \{0,1\}$ that satisfies
    \begin{align}
        \E_{X \sim \mathbb{P}_{n}}\left[T\left(X\right)\right] + \E_{X \sim \mathbb{Q}_{n}}\left[1-T\left(X\right) \right] = o(1). \nonumber
    \end{align}
\end{conj}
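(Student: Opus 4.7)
The final statement is a conjecture rather than a theorem, restated essentially verbatim from \citet{hopkins2017efficient,Kunisky22}; the precise version we would invoke is reproduced in \citet{loffler2022computationally}. Proving it unconditionally is out of reach, since it would imply circuit lower bounds well beyond current techniques. My ``proof proposal'' is therefore a plan for how to justify adopting Conjecture \ref{conj:low_degree} as a working hypothesis and how to specialize it to the distributions that appear later in the paper.

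The plan has three steps. First, I would replace the informal wording by the formal version from \citet{loffler2022computationally}, which fixes a product-form null $\mathbb{Q}_n$ (in our case an isotropic Gaussian), imposes mild symmetry/regularity on the planted $\mathbb{P}_n$, and takes ``polynomial-time'' to mean any test computable by circuits of size $\mathrm{poly}(pn)$. Second, I would record the heuristic content: by definition, $\|\mathcal{L}_n^{\leq D}\|$ is exactly the supremum advantage of any degree-$D$ polynomial test for distinguishing $\mathbb{P}_n$ from $\mathbb{Q}_n$, and essentially every algorithm in the current toolkit for high-dimensional detection (spectral methods, local statistics, constant-degree sum-of-squares relaxations) can, after standard preprocessing, be realized as a polynomial of degree $O(\log(pn))$. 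Choosing $D=\omega(\log(pn))$ therefore envelops all such algorithms, and the boundedness of $\|\mathcal{L}_n^{\leq D}\|$ precludes any of them from achieving vanishing error. Third, I would cite the body of evidence, both matching upper bounds (planted clique, community detection, tensor PCA, sparse PCA, and in particular the unsupervised sparse Gaussian mixture analyzed in \citet{loffler2022computationally}) and formal reductions from canonical hard problems, which has shown the conjecture to be tight in every setting where it has been tested.

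The main obstacle is precisely what makes this a conjecture: turning the slogan ``polynomial-time algorithms can be simulated by low-degree polynomials'' into an unconditional statement would settle long-standing open problems in complexity theory. For the purposes of our paper, I therefore propose not to attempt an unconditional proof but to invoke Conjecture \ref{conj:low_degree} as a black box, mirroring the approach taken in the unsupervised hardness literature we build upon. The semi-supervised computational lower bound (Theorem \ref{thm:comp_lb_ssl}) will then be obtained by computing $\|\mathcal{L}_n^{\leq D}\|$ for the concrete pair $(\mathbb{P}_n,\mathbb{Q}_n)$ associated with \eqref{eq:sparse_mixture_model} augmented by the $L$ labeled samples, and checking the hypothesis of the conjecture in the relevant region of $(\gamma,\beta)$.
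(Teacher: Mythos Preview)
Your assessment is correct and matches the paper's own treatment: Conjecture~\ref{conj:low_degree} is stated as a hypothesis, not proved, with the paper explicitly deferring the precise formulation to \citet{loffler2022computationally} and then invoking it as a black box when deriving the SSL hardness result in Theorem~\ref{thm:comp_lb_ssl}. There is nothing further to prove here; your plan to formalize the statement by reference, record the heuristic justification, and then verify the hypothesis for the concrete pair $(\mathbb{P}_{L+n},\mathbb{Q}_{L+n})$ is exactly what the paper does.
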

In simple words, Conjecture \ref{conj:low_degree} states that 
if $\|\mathcal{L}_{n}^{\leq D}\| = O(1)$ as $p\to\infty$, then it is not possible to distinguish between $\mathbb{P}_{n}$ and $\mathbb{Q}_{n}$ using a polynomial-time algorithm, as no test has both a low false alarm as well as a low mis-detection rate (the two terms in the equation above).

We now show how to extend this framework, focused on unsupervised detection, to our SSL setting. 
To this end, consider $L+n$ samples, distributed according to  either 
a null distribution $\mathbb{Q}_{L+n}$ or
an alternative distribution $\mathbb{P}_{L+n}$.
In our case, the null distribution 
is
\begin{align}
    \label{eq:Q}
    \mathbb{Q}_{L+n}: \quad
    \x_i = \bm \xi_i\sim \mathcal N(0,{\bf I}_p) ,\,\, i\in[L+n], 
\end{align}
whereas the alternative belongs to the following set of distributions,  
\begin{align}
    \label{eq:P}
    \mathbb{P}_{L+n}: \quad  \bigg\{
    \begin{aligned}
        &\x_i =  {\bm \mu}^S + \bm \xi_i ,\quad\, i\in[L],  \\
        &\x_i =  y_i {\bm \mu}^S + \bm \xi_i ,\,\, L< i\leq L+n.
    \end{aligned}
\end{align}
Here, $S$ is uniformly distributed over $\mathbb S$, $\bm \mu ^S (j) = \sqrt{\frac{ \lambda}{k}}\ind\{j\in S\}$,  and $y_i$ are unobserved Rademacher 
random variables.

The next theorem presents a low-degree bound
for our SSL testing problem.
The scalings of $L,n$ and $k$ with $p$ and $\lambda$ are motivated by those appearing in Theorems \ref{thm:labeled_info_theo} and \ref{thm:unlabeled_info_theo}. 

\begin{theorem}
    \label{thm:comp_lb_ssl}
    Let $k=\lfloor c_1 p^\alpha\rfloor, L = \lfloor\frac{2\beta k }{\lambda}\log (p-k)\rfloor$, 
    $n = \lfloor c_2 \frac{k^\gamma}{\lambda^2}\rfloor$ and $D = (\log p)^2 $, for some $\beta,\gamma,\lambda,c_1,c_2 \in \mathbb{R}_{+}$ and $\alpha \in (0, \frac12)$.
    With the null and alternative distributions defined in \eqref{eq:Q} and \eqref{eq:P}, 
    if $\beta<\frac12 - \alpha$ and $\gamma<2$, then as $p\to\infty$
    \begin{align}
        \|\mathcal{L}_{L+n}^{\leq D}\|^2 = O(1). \nonumber
    \end{align}
\end{theorem}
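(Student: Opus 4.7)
My plan is to write $\|\mathcal{L}_{L+n}^{\leq D}\|^2$ as a Hermite-polynomial sum with respect to the product standard-Gaussian null $\mathbb{Q}_{L+n}$. For every multi-index $\alpha \in \mathbb{Z}_{\geq 0}^{(L+n)\times p}$ with $|\alpha|\leq D$, the Hermite coefficient of the planted likelihood ratio is $\widehat{L}(\alpha) = \E_{S,\vec y}\!\bigl[\prod_{i,j} m_i^{S,\vec y}(j)^{\alpha_{ij}}/\sqrt{\alpha_{ij}!}\bigr]$, where $m_i^{S,\vec y} = \bm\mu^S$ for labeled samples $i\leq L$ and $m_i^{S,\vec y} = y_i\bm\mu^S$ for unlabeled samples $i>L$. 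Summing $|\widehat{L}(\alpha)|^2$ and exchanging expectations yields
\begin{align*}
  \|\mathcal{L}_{L+n}^{\leq D}\|^2 \;=\; \E_{S,S',\vec y,\vec y'}\sum_{|\alpha|\leq D}\prod_{i,j}\frac{\bigl(m_i^{S,\vec y}(j)\,m_i^{S',\vec y'}(j)\bigr)^{\alpha_{ij}}}{\alpha_{ij}!}.
\end{align*}
Integrating out the independent Rademacher labels $y_i,y_i'$ kills every contribution whose per-sample total degree $\alpha_i^{\mathrm{tot}} := \sum_j \alpha_{ij}$ is odd on some unlabeled index $i>L$. The multinomial identity $\sum_{|\beta|=d}\prod_j z_j^{\beta_j}/\beta_j! = (\sum_j z_j)^d/d!$ then collapses the inner sum over $\alpha$ to
\begin{align*}
  \sum_{\substack{d_1,\ldots,d_{L+n}\geq 0,\ \sum d_i\leq D\\ d_i\text{ even for }i>L}}\prod_{i=1}^{L+n}\frac{t^{d_i}}{d_i!},\qquad t := \langle\bm\mu^S,\bm\mu^{S'}\rangle = \frac{\lambda\,|S\cap S'|}{k}.
\end{align*}
Without the degree cap this would factorize as $e^{Lt}\cosh^n(t)$; the cap is essential because the unconstrained expression diverges in our regime.

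The next step is to split the expectation over $I := |S\cap S'|\sim\mathrm{Hypergeometric}(p,k,k)$ at a threshold $I_0 = \Theta(\log p)$. On the bulk event $\{I\leq I_0\}$, I drop the truncation (valid since all summands are nonnegative) and use $\cosh(u)\leq e^{u^2/2}$ to bound the inner sum by $\exp(Lt + nt^2/2)$. With the stated scalings, this exponent equals $2\beta I\log(p-k) + \tfrac12 c_2\, I^2 k^{\gamma-2}$, and since $k=c_1 p^\alpha$ and $\gamma<2$ the quadratic piece is $o(1)$ throughout the bulk. Combining with the standard hypergeometric estimate $\Prob[I=i]\leq (k^2/(p-k+1))^i/i!$ makes the $i$-th bulk term of order $p^{\,i(2\alpha+2\beta-1)}/i!$ up to constants; the assumption $\beta<\tfrac12-\alpha$ forces the exponent to be negative, giving geometric decay in $i$ and a bulk contribution of $1+o(1)$.

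The delicate step is the tail $I>I_0$, where $Lt$ and $nt^2$ both blow up and the unconstrained $e^{Lt}\cosh^n(t)$ diverges; this is exactly where the low-degree cap $D=(\log p)^2$ is needed. Since all summands are nonnegative, I upper bound the constrained sum by the product of the separately truncated $e^{Lt}$ and $\cosh^n(t)$ series, and apply a Stirling-type bound of the form $\sum_{d\leq D}(Lt)^d/d! \leq (D+1)(eLt/D)^D$. For $I$ on the scale of $\log p$, this yields a per-pair factor of size $p^{O(\log p)}$, which is dominated by the super-polynomial hypergeometric decay $\Prob[I\geq i]\leq e^{-\Omega((1-2\alpha)\,i\log p)}$ available once $\alpha<\tfrac12$. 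The main obstacle is calibrating $I_0$: it must be small enough for the simple $e^{Lt+nt^2/2}$ overcount to remain $O(1)$ after averaging, yet large enough that, for $I>I_0$, the hypergeometric decay beats the polynomial blowup coming from the truncation. Choosing $I_0$ proportional to $\log p$ with a sufficiently large constant threads this needle; summing the bulk and tail contributions then gives $\|\mathcal{L}_{L+n}^{\leq D}\|^2 = O(1)$, which proves the theorem.
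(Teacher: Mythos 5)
Your proposal is correct in substance and arrives at the same starting object as the paper, but it exploits the degree truncation in a genuinely different way. Via the Hermite expansion you recover exactly the quantity the paper obtains from Theorem 1 of Kunisky et al.\ for additive Gaussian models, namely the degree-$\leq D$ truncation of $e^{Lt}\cosh^n(t)$ averaged over the overlap $t=\lambda|S\cap S'|/k$; at that point the two arguments diverge. The paper keeps the sum over the total degree $d$ and bounds the Rademacher factor uniformly by $\E\bigl[(L+\sum_i R_i)^d\bigr]\leq (L+nd/(2L))^d\leq (L+nD/(2L))^d$, which re-exponentiates into a single bound $\sum_m \binom{k}{m}(k/(p-k))^m\exp\bigl(m(2\beta+\tfrac{c_2}{2\beta}k^{\gamma-2})\log(p-k)\bigr)$; the conditions $\gamma<2$ and $\beta<\tfrac12-\alpha$ then finish the proof with no case analysis over the overlap. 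You instead split the overlap at $I_0=\Theta(\log p)$: in the bulk you may indeed discard the truncation and use $e^{Lt+nt^2/2}$ together with the hypergeometric bound (the same $\binom{k}{m}(k/(p-k))^m$ estimate the paper uses), and in the tail you use truncated-series (Stirling-type) bounds plus the super-polynomial hypergeometric decay. Both routes work; the paper's moment bound is slicker and uniform, while your split makes transparent exactly where the truncation is indispensable (namely the $\cosh^n$ factor once $I\gtrsim k^{2-\gamma}\log p$, which is precisely the source of the UL computational gap for $\gamma\in(1,2)$).

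Two loose ends you should tighten when writing this up. First, the bound $\sum_{d\leq D}(Lt)^d/d!\leq (D+1)(eLt/D)^D$ is only valid when $Lt\geq D$ (for $Lt\ll D$ the right-hand side is far smaller than the left), so either choose $I_0\geq C\log p$ with $C\geq 1/(2\beta)$ so that $Lt\geq 2\beta I_0\log(p-k)\geq D$ throughout the tail, or case-split and use $e^{Lt}$ when $Lt<D$; in fact in this regime the labeled factor never needs truncation at all, since $e^{Lt}=(p-k)^{2\beta I}$ is already absorbed by the decay $p^{-(1-2\alpha)I}$ because $2\beta<1-2\alpha$. Second, the tail must be verified over the whole range $I_0<I\leq k$, not only ``$I$ on the scale of $\log p$'': for the truncated $\cosh^n$ factor one needs that $\tfrac{D}{2}\bigl(1+\log(ent^2/D)\bigr)=O\bigl((\log p)^3\bigr)$ (using $nt^2\leq c_2 c_1^{\gamma} p^{\alpha\gamma}$), which is indeed dominated by the net decay $(1-2\alpha-2\beta)I\log p\gtrsim p^{\alpha(2-\gamma)}(\log p)^2$ in the regime where truncation is needed; this is a short computation but it is the heart of the tail argument and should be made explicit.
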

Theorem \ref{thm:comp_lb_ssl} together with the hardness conjecture \ref{conj:low_degree} extends to the SSL case previous computational lower bounds that were derived only in 
UL settings ($\beta=0$) 
\citep{fan2018curse, loffler2022computationally}. 
Next, we make several remarks regarding the theorem. 

%\paragraph
\noindent {\bf SSL statistical-computational gap.}
    %By Theorem \ref{thm:comp_lb_ssl}, 
    In the rectangular region $\beta<\frac12-\alpha$ and $1<\gamma<2$, depicted in orange in Figure \ref{fig:ssl_map}, under the hardness conjecture \ref{conj:low_degree}, distinguishing between $\mathbb{P}$ and $\mathbb{Q}$ is computationally hard. 
    Since testing is easier than variable selection and classification
    \citep{verzelen2017detection,fan2018curse},
    % recovering the support $S$ 
    in this region these tasks are computationally hard as well.

%\paragraph
\noindent {\bf Tightness of condition $\gamma<2$ in 
    Theorem \ref{thm:comp_lb_ssl}.}
%\begin{remark}
 %   
    This condition is sharp, since 
    for 
    $\gamma\geq2$, 
    namely $n\gtrsim \frac{k^2}{\lambda^2}$, 
    the support can be recovered by a polynomial-time algorithm, such as
    thresholding the
    covariance matrix followed by PCA, 
    see \citet{deshpande2014sparse} 
    and \citet{krauthgamer2015semidefinite}.
%\end{remark}

\noindent {\bf Tightness of condition $\beta<\tfrac12-\alpha$.} This condition is tight for detection, though not necessarily for feature selection or classification. The reason is that 
for $\beta>\frac12-\alpha$, it is possible to distinguish between $\mathbb{P}$
and $\mathbb{Q}$, using only the labeled data  \citep{Ingster_97,donoho2004higher}.

Combining Theorems \ref{thm:labeled_info_theo}-\ref{thm:comp_lb_ssl} leaves a rectangular region $1< \gamma < 2$ and $\frac12-\alpha < \beta < 1-\alpha$ where SSL support recovery is feasible from an information view, but we do not know if it possible in a computationally efficient manner. In the next section we present a polynomial time SSL method that in part of this rectangle, depicted in blue in Figure \ref{fig:ssl_map}, is guaranteed to recover $S$
and construct an accurate classifier.
    We conclude with the following conjecture regarding the remaining white region:

\begin{conj}
    \label{conj:ssl}
    Let $\D_L, \D_n$ be sets of $L$ and $n$ i.i.d. labeled and unlabeled samples from the model \eqref{eq:sparse_mixture_model}.
Assume, as in Theorem \ref{thm:comp_lb_ssl} that
    $k\propto p^\alpha, L = \lfloor\frac{2\beta k}{\lambda}\log(p-k)\rfloor$ and $n \propto \frac{k^\gamma}{\lambda^2}$.
    Then in the white region depicted in Figure \ref{fig:ssl_map}, no polynomial-time algorithm is able to recover the support $S$ or to construct an accurate classifier.
\end{conj}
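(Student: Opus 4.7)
Since the statement is a conjecture, rather than a full proof I can only sketch a plausible route. The natural strategy is to extend the low-degree bound of Theorem \ref{thm:comp_lb_ssl} from its current rectangle $\beta<\tfrac12-\alpha$, $\gamma<2$ to the entire white region in Figure \ref{fig:ssl_map}, and then invoke a recovery-style variant of Conjecture \ref{conj:low_degree}. The key observation is that in the white region, detection of $\mathbb P_{L+n}$ vs.\ $\mathbb Q_{L+n}$ can be \emph{easy} from the labeled data alone (since $\beta$ may exceed $\tfrac12-\alpha$), so the detection-based conjecture used for Theorem \ref{thm:comp_lb_ssl} is no longer the right instrument; we must instead prove hardness of the finer task of support recovery / classification.

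\textbf{Step 1: shift from detection to recovery.} I would adopt the low-degree \emph{estimation} framework, in which computational hardness follows when, for every coordinate $j\in[p]$, the best degree-$D$ polynomial predictor of $\ind\{j\in S\}$ under $\mathbb P_{L+n}$ has correlation $o(1)$ with the target. A symmetric variant with predictor of the label $y$ of a fresh sample handles classification directly, linking the two clauses of Conjecture \ref{conj:ssl}.

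\textbf{Step 2: Hermite expansion and combinatorial bookkeeping.} Expand the target in the Hermite basis on $\mathbb R^{p\times(L+n)}$ and write the low-degree correlation as a sum over multi-indices, split by whether each factor acts on a labeled or an unlabeled sample. Labeled factors contribute linearly in $\bmu^S$ (each incidence brings $\sqrt{\lambda/k}$ with the observed sign $y_i$), while unlabeled factors must appear in pairs in order to integrate out the unobserved $y_i$, and so each pair contributes a factor $\lambda/k$. A multi-index with $a$ labeled incidences and $b$ unlabeled pairs then yields a term scaling as $\bigl(\tfrac{\lambda L}{k}\bigr)^{a}\bigl(\tfrac{\lambda^2 n}{k^2}\bigr)^{b}$ times a combinatorial factor counting supports compatible with that pattern. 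Balancing this against $\log|\mathbb S|\asymp k\log p$ is what should produce the conjectured boundary curve $\beta+\gamma\alpha=1$, matching the boundary of the blue region.

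\textbf{Main obstacle.} The bookkeeping of mixed labeled--unlabeled multi-indices is the crux. Pure UL analyses \citep{loffler2022computationally} use only even, unlabeled contributions, and the pure SL case is covered by the classical sparse normal-means calculation of \citet{Ingster_97, donoho2004higher}; neither treats the cross-terms in which a monomial spans both labeled and unlabeled coordinates. A naive Cauchy--Schwarz split across the two factor types would decouple these contributions and only reproduce $\max(\beta,\gamma\alpha/2)<\tfrac12-\alpha$, which is essentially Theorem \ref{thm:comp_lb_ssl}, losing the sharp curve $\beta+\gamma\alpha=1$. Obtaining the tight curve will likely require a joint bound in the spirit of the graph-matrix / trace moment machinery used in \citet{schramm2022computational}, which can track both contribution types simultaneously without decoupling them. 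This is the step I expect to be genuinely hard, and a careful analysis here is also what would be needed to promote Conjecture \ref{conj:ssl} into a theorem.
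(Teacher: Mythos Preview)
The paper contains no proof of this statement: it is stated as a conjecture and left open. The only justification the paper offers is a brief paragraph of intuition after Theorem \ref{thm:ssl_algo}, pointing to the supervised detection--recovery gap of \citet{donoho2004higher} and arguing informally that ``adding a few unlabeled samples should not resolve this gap.'' That is the entirety of the paper's treatment.

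Your proposal is therefore not comparable to a paper proof, because none exists; rather, it goes substantially beyond the paper's discussion. You correctly diagnose why the detection-based bound of Theorem \ref{thm:comp_lb_ssl} cannot cover the white region (once $\beta>\tfrac12-\alpha$, detection is easy from the labeled data alone, so Conjecture \ref{conj:low_degree} no longer bites), and you propose the natural fix: pass to a low-degree \emph{estimation}/recovery framework targeting $\ind\{j\in S\}$ or the label of a fresh point. Your Hermite-expansion sketch, with labeled incidences contributing factors of order $(\lambda L/k)$ and unlabeled pairs contributing factors of order $(\lambda^2 n/k^2)$, and your identification of the mixed labeled--unlabeled cross-terms as the technical crux, are all reasonable; the target boundary $\beta+\gamma\alpha=1$ you aim for is precisely the lower edge of the blue region in Figure \ref{fig:ssl_map}, so the program is internally consistent with the paper's picture.

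In short, there is nothing in the paper to check your sketch against. What you have written is a coherent research outline, more concrete than anything the authors provide, but---as you yourself acknowledge---the cross-term analysis that would pin down the curve $\beta+\gamma\alpha=1$ is the genuine open problem, and your sketch does not resolve it.
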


%%%%%%%%%%%%%%%%%%%%%%%%%%%%%%%%%%%%%%%%%%%%%%%%%%%%%%%
%%%%%%%%%   NEW SECTION: RECOVERY ALGORITHM   %%%%%%%%%
%%%%%%%%%%%%%%%%%%%%%%%%%%%%%%%%%%%%%%%%%%%%%%%%%%%%%%%
\section{Semi-Supervised Learning Scheme
}
\label{sec:algo}

We present a SSL scheme, denoted \texttt{LSPCA}, for the model \eqref{eq:mixture2}, that 
is simple and 
has polynomial runtime. 
In subsection \ref{sec:LSPCA_recovery_guarantee} we prove that in the blue region of Figure \ref{fig:ssl_map} it recovers the support, and thus constructs an accurate
classifier. In this region, under the hardness conjecture \ref{conj:low_degree}, computationally efficient algorithms that rely solely on either labeled or unlabeled data would fail.

\paragraph{Preliminaries.} To motivate the derivation of \texttt{LSPCA}, we first briefly review some properties of the sparse model \eqref{eq:mixture2}. 
First, note that the covariance
matrix of $\x$ is $\Sigma_x =  \tfrac{1}{4}{\Delta\bm \mu}{\Delta\bm \mu}^\top + {\bf I}_p$. This is a
rank-one spiked covariance model, whose leading eigenvector is $\Delta{\bm \mu}$, up to a $\pm$ sign. 
Hence, with enough unlabeled data, $\Delta\bm\mu$ can be estimated by 
% the leading eigenvector of 
vanilla PCA on the sample covariance  or by some sparse-PCA procedure taking advantage of the sparsity of $\Delta\bm\mu$.
Unfortunately, in high dimensions with a limited number of samples, these procedures may output quite inaccurate estimates, see for example 
\citet{nadler2008finite,birnbaum2013minimax}. 
The main idea of our approach is to run these procedures after an initial variable screening step that uses the labeled
data to reduce the dimension.

\subsection{The \texttt{LSPCA} Scheme}
\label{sec:algo_description}
Our SSL scheme, denoted  \texttt{LSPCA}, stands for Label Screening PCA. 
As described in Algorithm 1, \texttt{LSPCA} has two input parameters: the sparsity $k$ and a 
variable screening factor 
{$\tilde\beta<1$}.
The scheme consists of two main steps: 
(i) removal of noise variables using the labeled samples; 
(ii) support estimation 
from the remaining variables
using the unlabeled samples via PCA.
% and thresholding. 
Finally, a linear classifier is constructed via the leading eigenvector of the covariance matrix on the estimated support.

The first stage screens variables using only the labeled samples.
While our setting is different, this stage is similar in spirit
to Sure Independence Screening (SIS), which was developed for high-dimensional regression \citep{fan2008sis}.  
To this end, our scheme first constructs the vector, 
\begin{align}
    \label{eq:labeled_mu_est}
        {\bm \w}_L = \frac{1}{L_{+}} 
        \sum_{i:y_i=1} \bm {x}_i - 
        \frac{1}{L_{-}}
        \sum_{i:y_i=-1} \bm {x}_i
        ,
        % {\bm \w}_L = \frac{1}{L} \sum_{i=1}^L y_i \bm {x}_i.
\end{align}
where $L_{+}=|\{i\in[L]:y_i=1\}|$ and $L_{-}=L-L_+$.
With a balanced mixture $\Prob(Y=\pm1) = \tfrac{1}{2}$, it follows that ${\bm \w}_L \approx \Delta\bm \mu + 
\frac{2}{\sqrt{L}} N(\bm0, \mathbf{I}_p)$. 
Hence, ${\bm \w}_L$ can be viewed as a noisy estimate of $\Delta\bm \mu$.
If the number of labeled samples were large enough, then the top $k$ coordinates of $\w_L$ would coincide with the support of $\Delta\bm \mu$. With few labeled samples, while not necessarily the top-$k$, the entries of $\w_L$ at the support indices still have relative large magnitudes. 
Given the input parameter {$\tilde\beta>0$}, the scheme retains the indices that correspond to the 
largest $p^{  1-{\tilde\beta}  }$ entries in absolute value of ${\bm \w}_L$.
We denote this set by $S_L$.
Note that for any $\tilde \beta>0$, this step significantly reduces the dimension
(as $\tilde\beta>0$ then $p^{1-\tilde\beta}\ll p$). In addition, 
as analyzed theoretically
in the next section, for
some parameter regimes, this step 
retains 
in $S_L$ (nearly all of) the $k$ support indices. 
These two properties are essential for the success of the second stage, which we now describe. 

The second step estimates the support $S$ using the unlabeled data. 
Specifically, \texttt{LSPCA} constructs the sample covariance matrix restricted to the
set $S_L$, 
\begin{align}
\label{eq:rest_sigma}
    \hat{\Sigma}|_{S_L} = \frac{1}{n} \sum_{i=L+1}^{n+L} (\x_i - \Bar{\x})|_{{S}_L}(\x_i -\Bar{\x})|_{{S}_L}^\top,
\end{align}
where $\bar \x = \frac{1}{n} \sum_{i=L+1}^{n+L} \x_i$ is the empirical mean of the unlabeled data.
Next, it computes the leading eigenvector $\hat{\bf v}_{\text{PCA}}$ of $\hat{\Sigma}|_{S_L}$.
The output support set $\hat S$
consists of the $k$ indices
of $\hat{\bf v}_{\text{PCA}}$
with largest magnitude. Finally, the vector $\Delta\bm\mu$ is (up to scaling) estimated by the leading
eigenvector of $\hat\Sigma$ restricted to $\hat S$, with its sign determined by the labeled data.

\begin{algorithm}[t]
    \caption{LSPCA}
    \label{algirthm:2steps}
    \begin{algorithmic}
    \State \textbf{Input:} $\D_L = \{(\x_i, y_i)\}_{i=1}^L, \, \, \D_n = \{\x_i\}_{i=L+1}^{L+n}$, parameters $ k, \tilde \beta$
    % \State Labeled elimination step:
    %\begin{algorithmic}%[1]

    \State {\bf Step I: Labeled Data}
    \State Compute $\w_L$ via Eq. \eqref{eq:labeled_mu_est}
        %= \frac{1}{L} \sum_{\ell = 1}^L {u_\ell \x_\ell}$
    \State Select its top $p^{(1-\tilde\beta)}$  entries, denoted by $S_L$

        \State {\bf Step II: Unlabeled Data}
   \State Compute the  sample 
    covariance $\hat{\Sigma}|_{S_L}$ by Eq. \eqref{eq:rest_sigma}
           
    \State Compute  leading eigenvector 
    $\hat{\bf v}_{\text{PCA}}$ of $\hat{\Sigma}|_{S_L}$

    \State Set $\hat{S}$ to be the top $k$  entries of 
    $|\hat{\bf v}_{\text{PCA}}|$

    \State Compute leading eigenvector $\hat{\bf v}$ of $\hat{\Sigma}|_{\hat S}$ 
%    \end{algorithmic}
    \State \textbf{Output:} Estimated support $\hat{S}$, and vector $\hat{\bf v}$.   
    
\end{algorithmic}
\end{algorithm}

\begin{remark}
After the removal of variables in the first step, the input dimension to the second step is much lower, $\tilde p = p^{1-\tilde \beta}$.
Despite this reduction in dimension, as long as the vector $\Delta{\bm \mu}$ is sufficiently sparse with $k\ll \tilde p$, or equivalently $\alpha < 1-\tilde \beta$, 
then in the second step our goal is still to find a sparse eigenvector. Hence, instead of vanilla PCA, we may replace the second step by 
any suitable (polynomial time) sparse-PCA procedure.
We refer to this approach as \texttt{LS$^2$PCA} (Labeled Screening Sparse-PCA). 
As illustrated in the simulations, for finite sample sizes, this can lead to improved support recovery and lower classification errors. 
\end{remark}

\subsection{{Support Recovery and Classification Guarantees for \texttt{LSPCA}}}
\label{sec:LSPCA_recovery_guarantee}
Before presenting our main result, we first recall two standard evaluation metrics.
The classification error of a classifier $C: \mathbb R^p \to  \{\pm1\}$ is defined as 
%the probability that it mislabels a sample
%\[
$\mathcal R (C) = \Prob (C(\x)\neq y).
$
%\]
Its excess risk is defined as
\begin{align}
    \label{eq:excess_risk}
    \mathcal{E}(C) = \mathcal{R}(C) - \mathcal{R}^* = \mathcal{R}(C) - \inf_{C'}\mathcal{R}(C').
\end{align}
As in \citet{verzelen2017detection}, 
the accuracy of a support estimate $\hat S$ 
is defined by its normalized overlap with the true support,
namely ${|\hat{S}\cap S|}/{k}$.
To simplify the analysis, we focus on the symmetric setting where $\bm \mu_1 = -\bm \mu_{-1} = \bm \mu$.
The next theorem presents theoretical guarantees for
\texttt{LSPCA}, in terms of support recovery and the excess risk of the corresponding classifier. 
\begin{theorem}
    \label{thm:ssl_algo}
    % Let $\{(\x_i, y_i)\}_{i=1}^L, \{\x_i\}_{i={L+1}}^{L+n}$ 
    Let $\D_L,\D_n$ be labeled and unlabeled sets
    of $L$ and $n$ i.i.d. samples 
%    be $L$ and $n$ i.i.d. labeled and unlabeled samples 
    from %the model
    \eqref{eq:sparse_mixture_model}
    with a $k$-sparse $\bm\mu$ whose non-zero entries are $\pm\sqrt{\lambda/{k}}$.
    Suppose that $k= \lfloor c_1p^\alpha\rfloor, 
    L = \lfloor\frac{2\beta k}{\lambda}\log(p-k)\rfloor, n = \lfloor c_2\frac{k^\gamma}{\lambda^2}\rfloor$, 
    for some fixed
    $0<\alpha <1/2, 0<\beta<1-\alpha, \gamma>1$ and $ \lambda,c_1,c_2 \in \mathbb{R}_{+}$.
    Let $\hat S, \hat{\bm v}$ be the output of Algorithm 1 with input $k$ and screening factor $\tilde\beta$.
    If {$\beta> 1-\gamma\alpha$}
    and $\tilde\beta \in ( 1-\gamma\alpha, \beta)$, then 
    \begin{align}
        \lim_{p \to \infty} \Prob\Big(\tfrac{|S \cap \hat{S}|}{k} \geq 1-\epsilon\Big) = 1, \quad \forall \epsilon > 0,
    \end{align}
    and the excess risk of the corresponding classifier $C(\x) = \sign\langle\hat {\bf v},\x\rangle$, satisfies 
    \begin{align}
        \lim_{p \to \infty} \mathcal E (C) = 0.
    \end{align}
\end{theorem}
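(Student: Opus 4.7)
The plan is to analyze the two stages of \texttt{LSPCA} separately, exploiting the independence of $\D_L$ and $\D_n$. For Stage I, I would write $\w_L = 2\bm\mu + (2/\sqrt L)\bm g$, where, conditional on near-balanced class counts (which hold with high probability by Hoeffding), $\bm g$ is a standard Gaussian on $\mathbb R^p$. Coordinates in $S$ then have magnitude $\approx 2\sqrt{\lambda/k}$, while standard Gaussian order-statistic estimates place the $(p^{1-\tilde\beta})$-th largest noise magnitude at $\frac{2}{\sqrt L}\sqrt{2\tilde\beta\log p}$. Substituting $L=\lfloor 2\beta k\log(p-k)/\lambda\rfloor$ this threshold equals $2\sqrt{\tilde\beta\lambda/(\beta k)}(1+o(1))$, which is strictly below the signal level iff $\tilde\beta<\beta$. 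A Gaussian tail combined with a union bound over the $k$ support indices then yields $\Prob(S\subseteq S_L)\to 1$.

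\textbf{Stage II (PCA on $\D_n$).} Conditioning on $S\subseteq S_L$, the restricted population covariance is a rank-one spike $\Sigma|_{S_L} = \mathbf I_{\tilde p}+\bm\mu\bm\mu^\top$ of strength $\lambda$ in dimension $\tilde p = p^{1-\tilde\beta}$. The hypothesis $\tilde\beta>1-\gamma\alpha$ forces $\tilde p/n = \lambda^2 p^{1-\tilde\beta-\gamma\alpha}\to 0$, comfortably above the BBP threshold, so Davis--Kahan combined with an operator-norm concentration bound for $\hat\Sigma|_{S_L}-\Sigma|_{S_L}$ yields $\|\hat{\bm v}_{\text{PCA}}\pm\bm\mu/\sqrt\lambda\|_2^2 \to 0$ in probability (with the sign eventually fixed by $\w_L$). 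A short pairing argument then converts this $\ell_2$ bound into support overlap: writing $e = \hat{\bm v}_{\text{PCA}} - \bm\mu/\sqrt\lambda$, the top-$k$ rule forces $|e_{j'}|+|e_j|\geq 1/\sqrt k$ for every $j'\in\hat S\setminus S$ and $j\in S\setminus\hat S$, hence $e_{j'}^2+e_j^2\geq 1/(2k)$. Summing over any bijection $\hat S\setminus S\to S\setminus\hat S$ gives $|\hat S\setminus S|\leq 2k\,\|e\|_2^2\to 0$, so $|\hat S\cap S|/k\to 1$.

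\textbf{Classification and main obstacle.} Applied to $\hat\Sigma|_{\hat S}$, the same spiked-covariance argument (still sub-critical since $k/n = \lambda^2/k^{\gamma-1}\to 0$), together with $|\hat S\cap S|/k\to 1$ (so the effective spike strength is $(1-o(1))\lambda$), yields $\cos\theta := \langle\hat{\bm v},\bm\mu\rangle/\sqrt\lambda\to 1$, with the sign of $\hat{\bm v}$ fixed by $\sign\langle\hat{\bm v},\w_L\rangle$. A direct Gaussian-mixture computation then gives $\mathcal R(C) = \Phi(-\sqrt\lambda\,\cos\theta)$ and $\mathcal R^{\ast} = \Phi(-\sqrt\lambda)$, so $\mathcal E(C)\to 0$. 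I expect the chief technical difficulty to be twofold: turning asymptotic spiked-model intuition into a genuine non-asymptotic high-probability bound on $\|\hat{\bm v}_{\text{PCA}}-\bm\mu/\sqrt\lambda\|_2$ in the regime where $\tilde p/n$ is only polynomially small in $p$; and handling the dependence introduced by the data-dependent set $S_L$ when applying sample-covariance concentration on $\D_n$ restricted to $S_L$. Independence of $\D_L$ and $\D_n$ helps, but a uniform bound over realized $S_L$ — or a union bound over the $\binom{p}{p^{1-\tilde\beta}}$ possible choices at the log-probability level — is likely required to conclude cleanly.
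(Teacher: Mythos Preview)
Your two-stage plan mirrors the paper's, but Stage~I contains a genuine error: the claim $\Prob(S\subseteq S_L)\to 1$ via a union bound over the $k$ support indices is false for the parameter range the theorem covers. With the noise threshold at roughly $2\sqrt{\tilde\beta\lambda/(\beta k)}$ and signal level $2\sqrt{\lambda/k}$, the probability that a given support index drops below threshold is of order $p^{-(\sqrt\beta-\sqrt{\tilde\beta})^2}$; the union bound over $k=\lfloor c_1 p^\alpha\rfloor$ indices therefore requires $\alpha<(\sqrt\beta-\sqrt{\tilde\beta})^2$. This fails throughout the allowed range $\tilde\beta\in(1-\gamma\alpha,\beta)$: for instance $\alpha=0.3$, $\gamma=1.5$, $\beta=0.6$ forces $\tilde\beta\in(0.55,0.6)$, and then $(\sqrt\beta-\sqrt{\tilde\beta})^2<0.002\ll\alpha$. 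The correct statement---and all that is needed downstream---is \emph{near}-full containment, $|S\cap S_L|/k\to 1$ in probability. This is what the paper proves (its key lemma for Stage~I): each support index survives with probability $q_1\to 1$, and a Chernoff bound on the binomial count of survivors gives the result. Your Stage~II argument then carries through with $\bm\mu$ replaced by $\bm\mu|_{S_L}$, whose squared norm is $(1-o(1))\lambda$ on the near-containment event; your pairing argument for converting $\ell_2$ closeness into support overlap is in fact cleaner than the paper's contradiction argument and works unchanged.

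Separately, the ``main obstacle'' you flag regarding the data-dependent set $S_L$ is not an obstacle at all, and the union bound over $\binom{p}{p^{1-\tilde\beta}}$ subsets you propose would be hopelessly loose. Since $S_L$ is $\sigma(\D_L)$-measurable and $\D_n$ is independent of $\D_L$, conditioning on $S_L$ leaves the restricted samples $\x_i|_{S_L}$ for $i>L$ as i.i.d.\ Gaussians with covariance $\Sigma|_{S_L}=\mathbf I_{\tilde p}+\bm\mu|_{S_L}\bm\mu|_{S_L}^\top$; your Davis--Kahan plus operator-norm concentration bound (or the paper's direct appeal to spiked-model eigenvector consistency in the regime $\tilde p/n\to 0$) applies to this conditional distribution without any uniformity over the realized $S_L$.
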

The interesting region where Theorem \ref{thm:ssl_algo} provides a non-trivial recovery guarantee
is the triangle depicted in blue in Figure \ref{fig:ssl_map}. 
Indeed, in this region, \texttt{LSPCA} recovers the support and constructs an accurate classifier. 
In contrast, any SL algorithm would fail, and under the low degree hardness conjecture, 
any computationally efficient UL scheme would fail as well. 
To 
 the best of 
our knowledge, our work is the first to rigorously prove the computational benefits of SSL, in bridging the computational-statistical gap in high dimensions. 
As mentioned above, we conjecture that in the remaining white region
in Figure \ref{fig:ssl_map}, it is not possible to construct in polynomial time an accurate SSL classifier.
 The intuition underlying this conjecture is based on the work of \citet{donoho2004higher}, where
in the fully supervised (SL) setting, the authors show that there is a detection-recovery gap.
Namely for a range of number of labeled samples, it is possible to detect that a sparse signal is present, but it is not possible to reliably recover its support. Intuitively, adding a few unlabeled samples should not resolve this gap.

%%%%%%%%%%%%%%%%%%%%%%%%%%%%%%%%%%%%%%%%%%%%%%%%%%%%%%%%%%%%%%%%%%%%%%%%%%
%%%%%%%%%%%%%%%%%%%%%%%%%%%%%%%%%%%%%%%%%%%%%%%%%%%%%%%%%%%%%%%%%%%%%%%%%%
\section{Simulation Results}

    \label{sec:simulations}

We illustrate via  several simulations some of our theoretical findings. Specifically, 
we compare \texttt{LSPCA} and \texttt{LS$^2$PCA} to various SL, UL and 
% self-training
SSL
schemes, in terms of both accuracy of support recovery  and classification error. 
The sparse PCA method used in \texttt{LS$^2$PCA} was iteratively proxy update (\texttt{IPU}) \citep{tian2020learning} .
We generate $L+n$ labeled and unlabeled samples according to the model \eqref{eq:sparse_mixture_model} with  
a $k$-sparse $\bm\mu$ whose non-zero entries are $\pm\sqrt{\lambda/k}$.
The quality of a support estimate $\hat S$ is measured by its normalized accuracy $|\hat S \cap S|/k$.
For all methods compared we assume the sparsity $k$ is known. Hence, 
each method outputs a $k-$sparse unit norm vector $\hat{\bm \mu}$, so its corresponding linear classifier is $\x \mapsto \sign{\langle \hat{\bm \mu} ,\x \rangle}$. 
Given the model \eqref{eq:sparse_mixture_model}, its generalization error is
$\Phi^c({ \langle\hat{\bm \mu} ,\bm \mu \rangle})$.
We run our SSL schemes with $\tilde\beta = \beta-(\beta-(1-\gamma\alpha))/4$ which satisfies the requirements of Theorem \ref{thm:ssl_algo}. 
We present experiments with $p=10^5, k= p^{0.4}=100$ and $ \lambda=3$, though the behavior is similar for other settings as well. 
The error of the Bayes classifier is $\Phi^c(\sqrt{\lambda})\approx 0.042$.
% For various values of $n$ and $L$ 
We report the average (with $\pm 1$ standard deviation)
of the support recovery accuracy and the classification error 
% averaged 
% (with their standard deviation)  
over $M=50$ random realizations.
% The error bars are the standard deviation over these runs.
All experiments were run on a  Intel i7 CPU 2.10 GHz.

%%%%%%%%%%%%%%%%%%%%%%%%%%%%%%%%%%%%%%%%%%%%%%%%%%%%%%%%%%%%%%%%%%%%%%%%%%%%%%%%%%%%
%%%%%%% L fixed, n runs %%%%%%%%%%
We empirically evaluate the benefit of $L=200$ labeled samples in addition to $n$ unlabeled ones.
We compare our SSL schemes \texttt{LSPCA} and \texttt{LS$^2$PCA} to the following UL methods, taking all $L+n$ samples as unlabeled:
\texttt{ASPCA} \citep{birnbaum2013minimax}, and \texttt{IPU} \citep{tian2020learning}.
The SSL methods that we compare are \texttt{LSDF} \citep{zhao2008locality}, and self-training (\texttt{self-train}).
The self-training algorithm is 
similar to the approach in 
\citet{oymak2021theoretical}, but explicitly accounts for the known sparsity $k$: (i) compute $\bm \w_L$ of \eqref{eq:labeled_mu_est} using the labeled samples, and keep its $k$ largest entries,
% set to zero its $p-k$ smallest entries in absolute value, 
denote the result by $\w_L^{(k)}$; (ii) 
compute the dot products $c_i = \langle \w_L^{(k)}, \x_i \rangle$ and the pseudo labels $\tilde y_i = \sign(c_i)$;
(iii) let $n_\text{eff}$ be the cardinality of the set $\{i: |c_i|> \Gamma\}$, for some threshold value $\Gamma\geq0$;
(iv)  estimate the support by the top-$k$ coordinates in absolute value of the following vector:
\begin{align}
    \w_{\text{self}} = \frac{1}{L+n_{\text{eff}}} \left(\sum_{i=1}^{L}y_i\x_i + \sum_{i=L+1}^{L+n}
    \ind\{|c_i|>\Gamma\}
    \tilde y_i \x_i \right) \nonumber
\end{align}
 In the experiments  we used $\Gamma = 0.8$, which gave the best 
performance.
Also, we implemented the SL scheme \texttt{Top-K Labeled} that selects the indices of the top-$k$ entries of $|\bm w_L|$ of  \eqref{eq:labeled_mu_est}. 
As shown in the supplementary, 
this is the maximum-likelihood estimator for $S$ based on the labeled data.

Figure \ref{fig:simulations}
illustrates our key theoretical result - that in certain cases SSL
can yield accurate classification and feature selection where SL and UL simultaneously fail.
The left panel of Figure \ref{fig:simulations} shows the average accuracies of support estimation for the different schemes as a function of number of unlabeled samples $n$.
Except at small values of $n$, \texttt{ LS$^2$PCA} achieved the best accuracy out of all methods compared.
 The right panel shows the classification errors of the different methods.
 The black horizontal line is the error of the Bayes optimal (\texttt{Oracle}) classifier.
 As seen in the figure, our SSL schemes come close to the Bayes error while SL and UL schemes have much higher errors.

%%%%%%%%%%%%%%%%%%%%%%%%%%%%%%%%%%%%%%%%%%%%%%%%%%%%%%%%%%%%%%%%%%%%%%%%%%%%%%%%%%%%

%%%%%%%%%%%%%%%%%%%%%%%%%%%%%%%%%%%%%%%%%%%%%%%%%%%%%%%%%%%%%%%%%%%%%%%%%%%%%%%%

\begin{figure}[t]
    \centering
    \includegraphics[width=1.\textwidth]{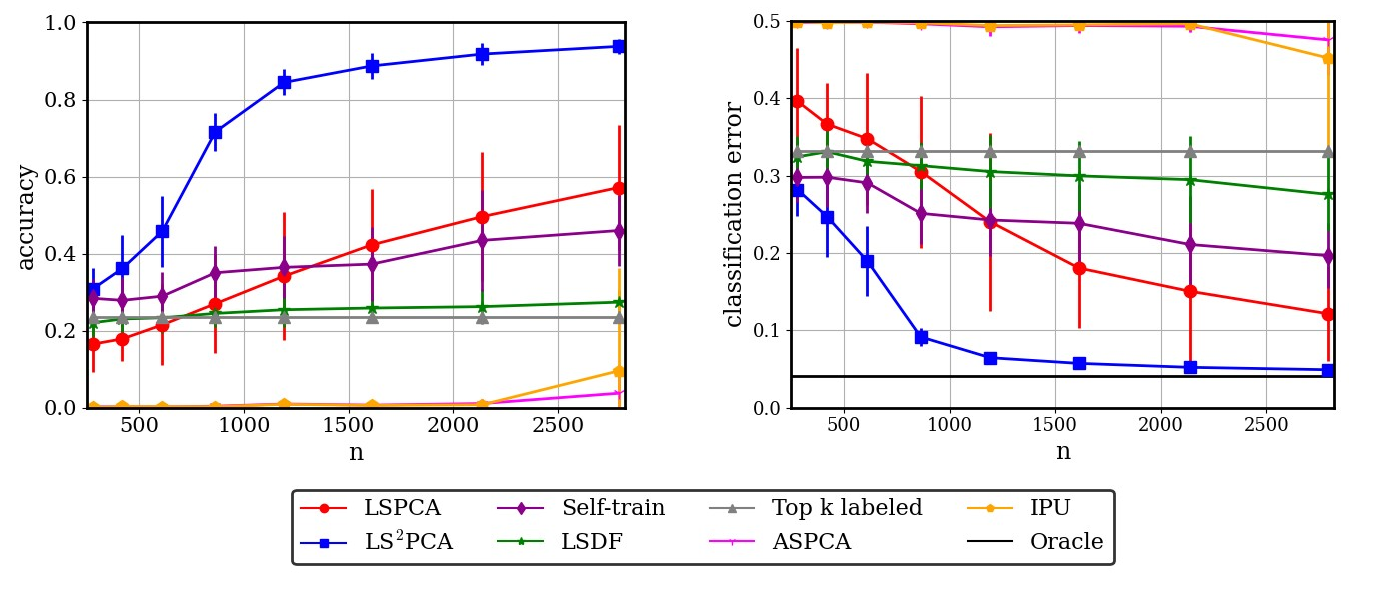}
    \caption{Empirical simulation results. (Left) Support recovery, (Right) Classification error. }
    \label{fig:simulations}
\end{figure}

We present further experiments that  empirically illustrate the benefit of using a fixed number of $n=1000$ unlabeled samples while varying the number of labeled samples $L$. 
Specifically, we compare our SSL algorithms \texttt{LSPCA} and \texttt{LS$^2$PCA} to the SSL methods
\texttt{self-train} and \texttt{LSDF}, as well as to the SL scheme \texttt{Top-K Labeled}, which uses only the $L$ labeled samples.
Figure \ref{fig:simulations_appendix} illustrates the support recovery accuracies and the classification error as a function of the number of labeled samples $L$.
As seen in the figure, adding $n=1000$ unlabeled samples significantly improves the classification and  support recovery accuracies.

\begin{figure}[t]
    \centering
    \includegraphics[width=1.\textwidth]{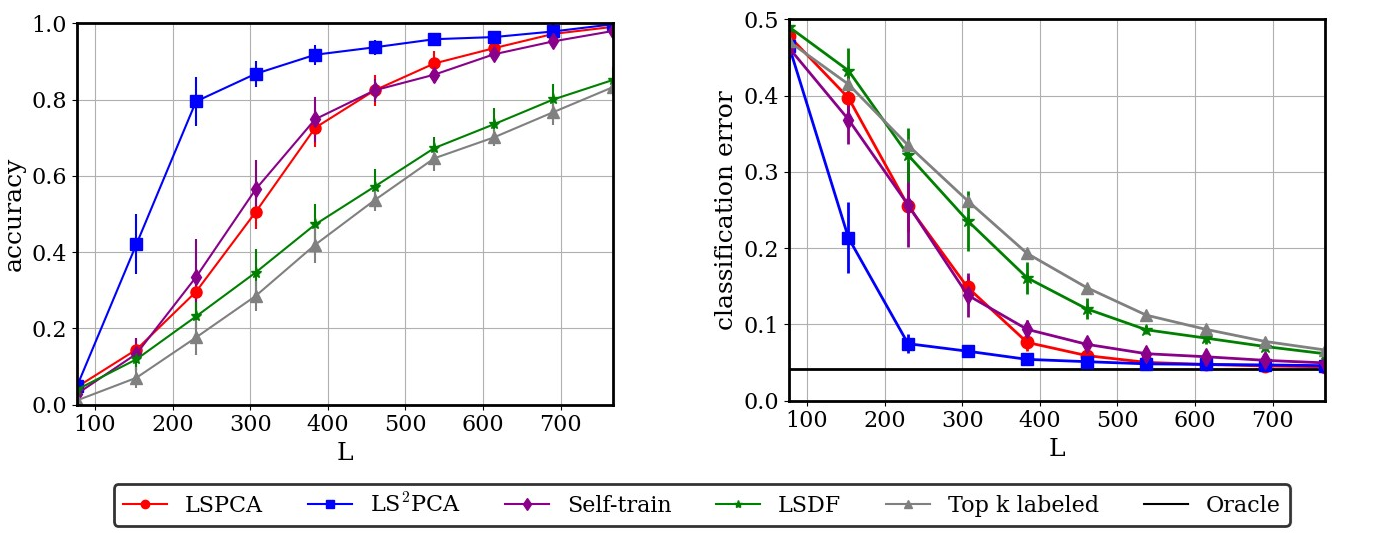}
    \caption{Empirical simulation results. (Left) Support recovery, (Right) Classification error. }
    \label{fig:simulations_appendix}
\end{figure}

%%%%%%%%%%%%%%%%%%%%%%%%%%%%%%%%%%%%%
%%%%%%%%%%%%%%%%%%%%%%%%%%%%%%%%%%%%%
%%%%%%%%% Discussion Section %%%%%%%%
%%%%%%%%%%%%%%%%%%%%%%%%%%%%%%%%%%%%%
%%%%%%%%%%%%%%%%%%%%%%%%%%%%%%%%%%%%%
%\section{Discussion}
\section{Summary and Discussion}
\label{sec:discussion}
In this work, we analyzed classification of a mixture of two Gaussians in a sparse high dimensional setting. Our analysis highlighted provable computational benefits of SSL. 
Two notable limitations of our work are that we studied  a mixture of  only two components, both of which are spherical Gaussians. It is thus of interest to extend our analysis to more components and to other distributions. 

From a broader perspective, many SSL methods for 
    feature selection have been proposed and shown empirically to be beneficial,
    see for example the review by \citep{sheikhpour2017survey}. An interesting open problem is  to theoretically prove their benefits, 
    over purely SL or UL. In particular it would be interesting to find cases where SSL improves over both SL and UL in its error rates, not only computationally.

\section*{Acknowledgements}
The research of B.N. was partially supported by ISF grant 2362/22.
B.N. is an incumbent of the William Petschek professorial chair of mathematics.

%%%%%%%%%%%%%%%%%%%%%%%%%%%%%%%%%%%%%%%%%%%%%%%%%%%%%%%%%%%%

\bibliographystyle{neurips_2024}
\bibliography{neurips_2024}

%%%%%%%%%%%%%%%%%%%%%%%%%%%%%%%%%%%%%%%%%%%%%%%%%%%%%%%%%%%%%%%%%%%%%%%%%%%%%%%
%%%%%%%%%%%%%%%%%%%%%%%%%%%%%%%%%%%%%%%%%%%%%%%%%%%%%%%%%%%%%%%%%%%%%%%%%%%%%%%
% APPENDIX
%%%%%%%%%%%%%%%%%%%%%%%%%%%%%%%%%%%%%%%%%%%%%%%%%%%%%%%%%%%%%%%%%%%%%%%%%%%%%%%
%%%%%%%%%%%%%%%%%%%%%%%%%%%%%%%%%%%%%%%%%%%%%%%%%%%%%%%%%%%%%%%%%%%%%%%%%%%%%%%
% \newpage
\appendix
\onecolumn
% \section{You \emph{can} have an appendix here.}

\section{Auxiliary Lemmas}

We first present several auxiliary lemmas used to prove our theorems.  We denote the complement of the standard normal cumulative distribution function by $\Phi^c(t) = \Prob\left(Z > t\right)$, where $Z \sim \mathcal{N}(0, 1)$. The following lemma states a well known upper bound on $\Phi^c$.
% , and the complement of the error function by $erfc(t) = 2\Phi^c(\sqrt{2}t)$ 
%In our proofs we shall use the following well-known auxiliary lemmas.

\begin{lemma}
    \label{lemma:Gaussian_tail_bounds}
    For any $t>1$,
    \begin{align}
        \Phi^c(t) \leq \frac{1}{\sqrt{2\pi}t} e^{-t^2/2}.
    \end{align}
    % and hence
    % \begin{align}
    %     erfc(t) \geq \frac{1}{2\sqrt{\pi}t} e^{-t^2}.
    % \end{align}
\end{lemma}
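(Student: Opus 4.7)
The plan is to establish this standard Mills-ratio-type bound via the classical multiplicative trick of bounding the integrand from above by $x/t$ times itself on the interval of integration. There is no genuine obstacle here; this is a short direct calculation. The hypothesis $t>1$ is in fact stronger than needed (the bound holds for all $t>0$), but I keep the stated hypothesis as given.

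First, I would expand the definition of $\Phi^c$ as a Gaussian integral:
\begin{align}
\Phi^c(t) \;=\; \Prob(Z>t) \;=\; \frac{1}{\sqrt{2\pi}} \int_t^\infty e^{-x^2/2}\, dx.
\end{align}
Next, I would use the elementary observation that for every $x \geq t > 0$ one has $1 \leq x/t$, so the integrand satisfies $e^{-x^2/2} \leq (x/t)\, e^{-x^2/2}$ on $[t,\infty)$. Substituting this pointwise inequality into the integral yields
\begin{align}
\Phi^c(t) \;\leq\; \frac{1}{t\sqrt{2\pi}} \int_t^\infty x\, e^{-x^2/2}\, dx.
\end{align}

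Finally, I would evaluate the remaining integral in closed form using the antiderivative $-e^{-x^2/2}$ of $x\, e^{-x^2/2}$:
\begin{align}
\int_t^\infty x\, e^{-x^2/2}\, dx \;=\; \Bigl[-e^{-x^2/2}\Bigr]_t^\infty \;=\; e^{-t^2/2}.
\end{align}
Combining the two displays gives
\begin{align}
\Phi^c(t) \;\leq\; \frac{1}{\sqrt{2\pi}\, t}\, e^{-t^2/2},
\end{align}
which is exactly the claimed bound. The only step worth flagging is the $1 \leq x/t$ comparison, which is where the positivity of $t$ (and hence the stated hypothesis $t>1$) enters; every other step is a routine manipulation of a Gaussian integral.
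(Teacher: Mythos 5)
Your proof is correct. The paper states this lemma without proof, describing it only as a well-known bound, and your argument is the standard Mills-ratio derivation: the pointwise comparison $1 \leq x/t$ on $[t,\infty)$ followed by the exact antiderivative $-e^{-x^2/2}$ is exactly right, and your remark that positivity of $t$ suffices (so the hypothesis $t>1$ is stronger than needed) is also accurate.
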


\begin{lemma}[\citet{chernoff_1952}]
\label{lemma:chernoff}
     Suppose $\{x_i\}_{i=1}^n$ are i.i.d. Bernoulli random variables, with $\Pr[x_i = 1]=q$.  
     Let $X$ denote their sum. 
     Then, for any $\delta\geq0$,
     \begin{align}
     \label{eq:chernoff_1}
         \Prob\left(X \geq(1+\delta)
         nq\right) \leq e^{-\frac{\delta^2 nq}{2+\delta}},
     \end{align}
    and for any $\delta \in [0,1]$
     \begin{align}
     \label{eq:chernoff_2}
         \Prob\left(X \leq(1-\delta)
         nq\right) \leq e^{-\frac{\delta^2 nq}{2}}.
     \end{align}
\end{lemma}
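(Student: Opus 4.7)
The plan is to prove both bounds by the standard Chernoff method, namely applying Markov's inequality to the moment generating function (MGF) of $X$, bounding the per-coordinate MGF, optimizing over the exponential parameter, and finally reducing the resulting transcendental expression to the cleaner form stated in the lemma.

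For the upper tail, I would fix $t>0$ and write
\begin{align*}
\Prob\bigl(X\geq(1+\delta)nq\bigr)=\Prob\bigl(e^{tX}\geq e^{t(1+\delta)nq}\bigr)\leq e^{-t(1+\delta)nq}\,\E[e^{tX}].
\end{align*}
By independence, $\E[e^{tX}]=\prod_{i=1}^n\E[e^{tx_i}]=(1-q+qe^t)^n$. Using $1+u\leq e^u$ with $u=q(e^t-1)$ gives $\E[e^{tX}]\leq e^{nq(e^t-1)}$. Substituting and choosing the optimizer $t=\ln(1+\delta)$ yields the ``classical'' form
\begin{align*}
\Prob\bigl(X\geq(1+\delta)nq\bigr)\leq\left(\frac{e^{\delta}}{(1+\delta)^{1+\delta}}\right)^{nq}.
\end{align*}
To arrive at the form stated in \eqref{eq:chernoff_1}, it then suffices to establish the real-variable inequality
\begin{align*}
(1+\delta)\ln(1+\delta)-\delta\;\geq\;\frac{\delta^{2}}{2+\delta}\qquad\text{for all }\delta\geq 0.
\end{align*}
I would verify this by setting $g(\delta)=(1+\delta)\ln(1+\delta)-\delta-\delta^{2}/(2+\delta)$, noting $g(0)=0$, and checking that $g'(\delta)\geq 0$ on $[0,\infty)$ after simplifying $g'$ to a rational-logarithmic expression whose sign is transparent (for instance, by showing $g'$ is the integral of a nonnegative function, or by a second differentiation argument reducing to $\ln(1+\delta)\geq$ an explicit rational function).

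The lower tail \eqref{eq:chernoff_2} is symmetric in spirit: I would apply Markov's inequality to $e^{-sX}$ with $s>0$, factor the MGF, bound $\E[e^{-sx_i}]\leq e^{q(e^{-s}-1)}$, and optimize at $s=-\ln(1-\delta)$ (well-defined since $\delta\in[0,1)$; the boundary $\delta=1$ is trivial because the probability is at most $\Prob(X=0)=(1-q)^n\leq e^{-nq/2}$). This yields
\begin{align*}
\Prob\bigl(X\leq(1-\delta)nq\bigr)\leq\left(\frac{e^{-\delta}}{(1-\delta)^{1-\delta}}\right)^{nq},
\end{align*}
and the task reduces to the analytic inequality
\begin{align*}
-\delta-(1-\delta)\ln(1-\delta)\;\leq\;-\frac{\delta^{2}}{2}\qquad\text{for }\delta\in[0,1].
\end{align*}
Equivalently, $h(\delta):=\delta+(1-\delta)\ln(1-\delta)-\delta^{2}/2\geq 0$. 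I would prove this from the power-series identity $(1-\delta)\ln(1-\delta)=-\delta+\sum_{k\geq 2}\delta^{k}/\bigl(k(k-1)\bigr)$, giving $h(\delta)=\sum_{k\geq 3}\delta^{k}/\bigl(k(k-1)\bigr)\geq 0$ termwise for $\delta\in[0,1]$.

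The only nontrivial steps are the two real-variable inequalities at the end; everything preceding them is mechanical (Markov, independence, and $1+u\leq e^u$). The main obstacle is thus the inequality $(1+\delta)\ln(1+\delta)-\delta\geq\delta^{2}/(2+\delta)$, since it does not admit a one-line power-series proof (the denominator $2+\delta$ prevents a termwise comparison). I would handle it via the monotonicity argument on $g$ sketched above, which I expect to be the longest calculation in the proof.
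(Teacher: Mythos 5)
Your proof is correct: the Markov/MGF argument, the bound $\E[e^{tX}]\leq e^{nq(e^t-1)}$, the optimizers $t=\ln(1+\delta)$ and $s=-\ln(1-\delta)$, and the two reduction inequalities all check out (for the upper tail, $g''(\delta)=\tfrac{1}{1+\delta}-\tfrac{8}{(2+\delta)^3}\geq 0$ reduces to $\delta^3+6\delta^2+4\delta\geq 0$, and with $g(0)=g'(0)=0$ this closes the argument; the lower-tail power-series identity is also right). Note that the paper itself gives no proof of this lemma --- it is stated as a classical result cited to Chernoff (1952) --- so there is no in-paper argument to compare against; yours is the standard derivation and is sound.
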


A common approach to prove lower bounds is using Fano's inequality. Here, we use the 
following version of Fano’s lemma, see \citet{yang1999information}.

\begin{lemma}
    \label{lemma:fano}
    Let $\theta$ be a random variable uniformly distributed over a finite set $\Theta$. 
    Let $\z_1,\z_2,\ldots, \z_n$ be $n$ i.i.d. samples from a density $f(\z|\theta)$. 
    Then, for any estimator $\hat{\theta}(\z_1,\ldots,\z_n) \in \Theta$,
    % that maps from the sample space of $\z_1,\z_2,\ldots \z_n$ to $\Theta$,
    \begin{align}
        \Prob \left(\hat{\theta} \neq \theta \right) \geq 1 - \frac{I(\theta; Z^n) + \log 2}{\log |\Theta|},
    \end{align}
    where $I(\theta; Z^n)$ is the mutual information between $\theta$ and the samples $Z^n = (\z_1,\z_2,\ldots, \z_n)$.
\end{lemma}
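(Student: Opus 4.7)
The plan is to follow the standard derivation of Fano's inequality, now applied to an estimator that depends on the samples $Z^n$ only through the data-processing chain $\theta \to Z^n \to \hat\theta$. First I would introduce the binary error indicator $E = \mathbbm{1}\{\hat\theta \neq \theta\}$ and expand the joint conditional entropy $H(E,\theta \mid \hat\theta)$ in two different ways using the chain rule. On one hand,
\begin{align*}
H(E,\theta \mid \hat\theta) = H(\theta \mid \hat\theta) + H(E \mid \theta, \hat\theta) = H(\theta \mid \hat\theta),
\end{align*}
since $E$ is a deterministic function of $(\theta,\hat\theta)$. On the other hand,
\begin{align*}
H(E,\theta \mid \hat\theta) = H(E \mid \hat\theta) + H(\theta \mid E,\hat\theta) \leq \log 2 + H(\theta \mid E,\hat\theta),
\end{align*}
using that $E$ is binary.

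Next I would bound $H(\theta \mid E,\hat\theta)$ by conditioning on the value of $E$. When $E=0$ we have $\theta=\hat\theta$ so the conditional entropy vanishes, and when $E=1$ the variable $\theta$ lives in a set of at most $|\Theta|-1 \leq |\Theta|$ elements, giving
\begin{align*}
H(\theta \mid E,\hat\theta) \leq \Prob(\hat\theta \neq \theta)\,\log |\Theta|.
\end{align*}
Combining the two chain-rule expansions yields $H(\theta \mid \hat\theta) \leq \log 2 + \Prob(\hat\theta \neq \theta)\log|\Theta|$.

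Finally I would bring in the uniform prior and the data-processing inequality. Since $\theta$ is uniform on $\Theta$, $H(\theta) = \log|\Theta|$, so $H(\theta \mid \hat\theta) = \log|\Theta| - I(\theta;\hat\theta)$. Because $\hat\theta$ is a (possibly randomized) function of $Z^n$, the Markov chain $\theta \to Z^n \to \hat\theta$ and the data-processing inequality give $I(\theta;\hat\theta) \leq I(\theta;Z^n)$. Substituting and rearranging yields
\begin{align*}
\Prob(\hat\theta \neq \theta) \geq 1 - \frac{I(\theta;Z^n) + \log 2}{\log |\Theta|},
\end{align*}
which is the claimed bound.

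This proof is entirely standard and contains no real obstacle; the only subtle points are remembering to invoke data processing to replace $I(\theta;\hat\theta)$ by the more useful quantity $I(\theta;Z^n)$, and being slightly loose in bounding $\log(|\Theta|-1)$ by $\log|\Theta|$ so the final expression depends only on $|\Theta|$. No additional assumptions beyond the uniform prior and the i.i.d. sampling model (which are given in the lemma) are needed.
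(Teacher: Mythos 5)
Your proof is correct: it is the standard textbook derivation of Fano's inequality (error indicator, two chain-rule expansions of $H(E,\theta\mid\hat\theta)$, the uniform prior giving $H(\theta)=\log|\Theta|$, and data processing along $\theta\to Z^n\to\hat\theta$), and every step is valid. The paper does not actually prove this lemma --- it cites it from Yang and Barron (1999) --- so there is nothing to diverge from; your argument is exactly the intended one and fills in the omitted details correctly.
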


In our proofs we use several well known properties of the entropy function. 
For convenience we here state some of them. First, we recall the explicit expression for
the entropy of a multivariate Gaussian. 
\begin{lemma}
    Let $\x\sim \mathcal{N}(\bm\mu, \Sigma)$.
    Then, its entropy is given by 
    \begin{align}
    \label{eq:Gauss_entropy}
        H(\x) = \frac{p}{2}(1+\log(2\pi)) + \frac{1}{2}\log\text{det} (\Sigma).
    \end{align}
\end{lemma}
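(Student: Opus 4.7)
The plan is to compute the differential entropy directly from its definition $H(\x) = -\E[\log f(\x)]$, where $f$ is the Gaussian density. First I would write out the explicit form of the density,
\[
f(\x) = (2\pi)^{-p/2}(\det \Sigma)^{-1/2}\exp\!\Big(-\tfrac{1}{2}(\x-\bmu)^\top \Sigma^{-1}(\x-\bmu)\Big),
\]
and take its logarithm, giving $\log f(\x) = -\tfrac{p}{2}\log(2\pi) - \tfrac{1}{2}\log\det\Sigma - \tfrac{1}{2}(\x-\bmu)^\top \Sigma^{-1}(\x-\bmu)$. Substituting into the definition of $H(\x)$ leaves two deterministic constants plus one expectation to evaluate.

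The only nontrivial step is computing $\E[(\x-\bmu)^\top \Sigma^{-1}(\x-\bmu)]$. I would handle this via the standard trace identity: since the quadratic form is a scalar, it equals its own trace, and cyclicity of the trace together with linearity of expectation yields
\[
\E\big[(\x-\bmu)^\top \Sigma^{-1}(\x-\bmu)\big] = \mathrm{tr}\!\big(\Sigma^{-1}\,\E[(\x-\bmu)(\x-\bmu)^\top]\big) = \mathrm{tr}(\Sigma^{-1}\Sigma) = \mathrm{tr}(I_p) = p.
\]

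Assembling the three contributions gives $H(\x) = \tfrac{p}{2}\log(2\pi) + \tfrac{1}{2}\log\det\Sigma + \tfrac{p}{2}$, which is exactly the claimed expression $\tfrac{p}{2}(1+\log(2\pi)) + \tfrac{1}{2}\log\det\Sigma$. There is no real obstacle here; the proof is a routine textbook calculation, and the only substantive ingredient is the trace trick for evaluating the expected quadratic form. One could alternatively diagonalize $\Sigma = U\Lambda U^\top$ and reduce to a product of independent one-dimensional Gaussians whose entropies are $\tfrac{1}{2}\log(2\pi e \lambda_i)$, then sum, but the direct approach above is cleaner and avoids invoking invariance of entropy under orthogonal transformations.
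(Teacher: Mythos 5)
Your proof is correct and complete: the density computation, the trace identity $\E[(\x-\bmu)^\top\Sigma^{-1}(\x-\bmu)] = \mathrm{tr}(\Sigma^{-1}\Sigma) = p$, and the final assembly all check out. The paper states this lemma without proof, simply recalling it as a standard fact, so there is no authorial argument to compare against; your derivation is the usual textbook one and fills the gap appropriately.
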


Next, the following lemma  states that the multivariate Gaussian distribution maximizes the
entropy over all continuous distributions with the same covariance
\citep[pg. 254]{cover1991information}.
\begin{lemma}
\label{lemma:max_entropy}
    Let $\x$ be a continuous random variable with mean $\bm \mu\in \mathbb R^p$ and covariance $\Sigma \in \mathbb R^{p\times p}$, and let $\bm y \sim \mathcal{N}(\bm \mu, \Sigma)$. 
    If the support of $\x$ is all of $\mathbb R^p$ then
    \begin{equation}
        H(\x) \leq H(\bm y).
    \end{equation}
\end{lemma}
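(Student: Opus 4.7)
\medskip

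\noindent\textbf{Proof proposal for Lemma \ref{lemma:max_entropy}.}

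The plan is to use the classical argument based on non-negativity of the Kullback--Leibler divergence, together with the key observation that $-\log g$ is a quadratic function of its argument, so its expectation depends on the underlying distribution only through its mean and covariance. Let $f$ denote the density of $\x$ (which exists and is supported on all of $\mathbb{R}^p$ by assumption) and let $g$ denote the Gaussian density of $\bm y$, namely
\begin{equation*}
    g(\bm u) = \frac{1}{(2\pi)^{p/2}\sqrt{\det \Sigma}} \exp\!\Bigl(-\tfrac{1}{2}(\bm u-\bmu)^\top \Sigma^{-1}(\bm u-\bmu)\Bigr).
\end{equation*}

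First I would write the KL divergence between $f$ and $g$ as
\begin{equation*}
    D(f\|g) = \int f(\bm u)\log\frac{f(\bm u)}{g(\bm u)}\,d\bm u = -H(\x) - \E_{\x\sim f}[\log g(\x)],
\end{equation*}
and invoke Gibbs' inequality to conclude $D(f\|g)\geq 0$, so $H(\x) \leq -\E_{\x\sim f}[\log g(\x)]$. The support assumption on $\x$ ensures that $g(\x)>0$ almost surely, so $\log g(\x)$ is well defined.

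Next I would compute $\E_{\x\sim f}[\log g(\x)]$ explicitly. Since
\begin{equation*}
    -\log g(\bm u) = \tfrac{p}{2}\log(2\pi) + \tfrac{1}{2}\log\det\Sigma + \tfrac{1}{2}(\bm u-\bmu)^\top\Sigma^{-1}(\bm u-\bmu),
\end{equation*}
its expectation under any distribution with mean $\bmu$ and covariance $\Sigma$ equals
\begin{equation*}
    \tfrac{p}{2}\log(2\pi) + \tfrac{1}{2}\log\det\Sigma + \tfrac{1}{2}\operatorname{tr}(\Sigma^{-1}\Sigma) = \tfrac{p}{2}(1+\log(2\pi)) + \tfrac{1}{2}\log\det\Sigma,
\end{equation*}
which is precisely $H(\bm y)$ by Eq.~\eqref{eq:Gauss_entropy}. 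The crucial point is that this quantity is the same whether the expectation is taken under $f$ or under $g$, since both share mean $\bmu$ and covariance $\Sigma$, and $-\log g$ is an affine function of the entries of $(\bm u-\bmu)(\bm u-\bmu)^\top$.

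Combining the two displays yields $H(\x) \leq -\E_{\x\sim f}[\log g(\x)] = H(\bm y)$, as desired. There is essentially no obstacle here; the only minor point to handle carefully is the well-definedness of the integrals, which is why the lemma assumes $f$ is supported on all of $\mathbb{R}^p$ (so $\log g$ is finite $f$-a.s.) and implicitly that $\Sigma$ is positive definite so that $g$ is a genuine density.
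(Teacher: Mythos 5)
Your proof is correct and is exactly the standard Kullback--Leibler argument from Cover and Thomas, which is the source the paper cites for this lemma (it does not reprove it). Nothing further is needed.
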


The next lemma states the sub-additive property of the entropy function.
\begin{lemma}
    \label{lemma:entropy_sub_add}
    Let $\x$ and $\bm y$ be jointly distributed random variables. Then,
    \begin{equation}
        H(\x,\bm y) \leq H(\x) + H(\bm y).
    \end{equation}
\end{lemma}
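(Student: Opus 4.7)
The plan is to prove the subadditivity $H(\x,\bm y)\le H(\x)+H(\bm y)$ by relating it to the non-negativity of mutual information (equivalently, the non-negativity of the Kullback-Leibler divergence between the joint distribution and the product of marginals). This is the cleanest route and avoids any case split between the discrete and continuous settings beyond choosing the correct base measure.

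First I would write the mutual information in its two equivalent forms. Let $p(\x,\bm y)$ denote the joint density (or pmf) of $(\x,\bm y)$ with marginals $p(\x)$ and $p(\bm y)$. By the chain-rule identity
\begin{equation}
I(\x;\bm y)\;=\;H(\x)+H(\bm y)-H(\x,\bm y),\nonumber
\end{equation}
which one verifies by expanding each entropy as an expectation of $-\log$ and collecting terms. Thus the desired inequality is equivalent to $I(\x;\bm y)\ge 0$.

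Next I would establish $I(\x;\bm y)\ge 0$ by recognizing it as a Kullback-Leibler divergence,
\begin{equation}
I(\x;\bm y)\;=\;\E\!\left[\log\frac{p(\x,\bm y)}{p(\x)\,p(\bm y)}\right]\;=\;D\bigl(p(\x,\bm y)\,\big\|\,p(\x)\,p(\bm y)\bigr),\nonumber
\end{equation}
and applying Jensen's inequality to the convex function $-\log$: with $R=p(\x)p(\bm y)/p(\x,\bm y)$, we have $\E_{p(\x,\bm y)}[-\log R]\ge -\log \E_{p(\x,\bm y)}[R]=-\log 1=0$, so $D(\cdot\|\cdot)\ge 0$. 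Substituting back into the chain-rule identity yields $H(\x,\bm y)\le H(\x)+H(\bm y)$, with equality iff $\x$ and $\bm y$ are independent.

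There is no real obstacle here; the only thing to be careful about is making sure the statement is interpreted with a common reference measure so that all the $\log$ terms and expectations are well defined (e.g.\ the joint density is absolutely continuous with respect to the product of marginals, which is automatic in the standard discrete or continuous settings used elsewhere in the paper). Once that convention is fixed, the argument is a one-line consequence of Jensen's inequality via the KL-divergence representation of mutual information.
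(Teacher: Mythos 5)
Your proof is correct: reducing subadditivity to $I(\x;\bm y)\ge 0$ via the identity $I(\x;\bm y)=H(\x)+H(\bm y)-H(\x,\bm y)$ and then establishing non-negativity of the KL divergence by Jensen's inequality is the canonical argument, and your caveat about absolute continuity with respect to the product of marginals is the right technical point to flag for the differential-entropy setting used in the paper. The paper itself states this lemma without proof as a standard fact, so there is no alternative argument to compare against; your write-up supplies exactly the standard justification.
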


To prove Theorem \ref{thm:unlabeled_info_theo} we use the following lemma.

\begin{lemma} 
\label{lemma:tanh_ub}
Let $\lambda \in (0,1)$, and let $k$ be a positive integer.
Then, for $w \sim N(0, \frac{k-1}{k})$ 
\begin{align}
    \label{eq:tanh_ub}
    \E \left[\tanh (\lambda + \sqrt{\lambda}w ) - \frac{1}{2}\tanh^2\left(\lambda +\sqrt{\lambda}w \right)\right] \leq 
    \frac{1}{2}\left(\lambda + 
    % \frac{\lambda^2}{3} 
    3\sqrt{\lambda/k} \right)
    \end{align}
\end{lemma}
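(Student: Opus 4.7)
The plan is to perform a second-order Taylor expansion of $f(x) := \tanh(x) - \tfrac{1}{2}\tanh^2(x)$ about $\lambda$, then use case analysis. I will write $X := \lambda + \sqrt\lambda\, w$ and $U := X - \lambda \sim \mathcal N(0, V)$ with $V := \lambda(k-1)/k = \lambda - \lambda/k$. The variance deficit $\lambda - V = \lambda/k$ is what sources the $\sqrt{\lambda/k}$ correction in the target bound.

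First I would differentiate to get $f'(x) = \text{sech}^2(x)(1 - \tanh(x))$ and $f''(x) = -\text{sech}^2(x)(1 + 2\tanh(x) - 3\tanh^2(x))$, both uniformly bounded on $\mathbb R$ since they are polynomials in $\tanh(x)$ and $\text{sech}^2(x)$; the same reasoning gives $\|f'''\|_\infty < \infty$. By Taylor's theorem with mean-value remainder, together with $\E[U] = 0$ and the Gaussian moment $\E[|U|^3] = 2\sqrt{2/\pi}\, V^{3/2}$,
\begin{align*}
\E[f(X)] \;\leq\; f(\lambda) \,+\, \tfrac{V}{2}\, f''(\lambda) \,+\, C\, \lambda^{3/2}
\end{align*}
for an absolute constant $C$.

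The central calculation is to bound $f(\lambda) + \tfrac V 2 f''(\lambda)$ by $\tfrac{\lambda}{2} + \tfrac{\lambda}{2k}$. Splitting $V = \lambda - \lambda/k$ rewrites this as $[f(\lambda) + \tfrac{\lambda}{2} f''(\lambda)] - \tfrac{\lambda}{2k} f''(\lambda)$: the bracketed piece is a function of $\lambda$ alone (via $\tanh(\lambda)$ and $\text{sech}^2(\lambda)$), and the elementary estimates $\tanh(\lambda) \leq \lambda$, $\tanh(\lambda) \geq \lambda - \lambda^3/3$, $\text{sech}^2(\lambda) \leq 1$ collapse it to something at most $\lambda/2$ on $(0,1)$; the leftover $-\tfrac{\lambda}{2k} f''(\lambda)$ is controlled by $\tfrac{\lambda}{2k} \|f''\|_\infty$, producing the $\lambda/(2k)$ term.

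Combining yields $\E[f(X)] \leq \tfrac{\lambda}{2} + \tfrac{\lambda}{2k} + C \lambda^{3/2}$, and it remains to absorb the last two terms into $\tfrac{3}{2}\sqrt{\lambda/k}$. The inequality $\tfrac{\lambda}{2k} \leq \tfrac{1}{2}\sqrt{\lambda/k}$ always holds since $\lambda \leq k$. For the Taylor remainder, when $\lambda \lesssim k^{-1/3}$ one has $C\lambda^{3/2} \lesssim \sqrt{\lambda/k}$ directly; for larger $\lambda$ I would refine by pushing the Taylor expansion to higher order (odd Gaussian moments vanish, making the next correction only $O(V^2)$), or by splitting the expectation into $|U| \leq t$ (Taylor) and $|U| > t$ (global bound $f \leq \tfrac 1 2$ with a Gaussian tail estimate). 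In the extreme case where $\lambda$ is near $1$, the RHS already exceeds $\tfrac 1 2$ whenever $k(1-\lambda)^2 \leq 9\lambda$, disposing of that regime via $f \leq \tfrac 1 2$ trivially. The main obstacle is precisely the uniform control of the Taylor remainder across regimes: the naive bound $C\lambda^{3/2}$ from the third derivative is too crude for $\lambda$ close to $1$, and the factor $3$ in $\tfrac{3}{2}\sqrt{\lambda/k}$ is exactly the slack needed to simultaneously absorb the $\lambda/(2k)$ correction and the Taylor remainder (after refinement) across all $\lambda \in (0,1)$ and integers $k$.
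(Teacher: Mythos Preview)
Your Taylor-expansion approach has a structural gap that none of the proposed patches can close. The third-order remainder you carry, $C\lambda^{3/2}$ (equivalently $CV^{3/2}$), depends only on $\lambda$ and not on $k$, whereas the correction $\tfrac32\sqrt{\lambda/k}$ in the target bound vanishes as $k\to\infty$. Concretely: fix any $\lambda\in(0,1)$ and send $k\to\infty$; the lemma then requires $\E[f(X)]\le\tfrac{\lambda}{2}+o_k(1)$, but your bound stays at $\tfrac{\lambda}{2}+C\lambda^{3/2}+o_k(1)$. Pushing Taylor to higher order only replaces $\lambda^{3/2}$ by $\lambda^{2}$ or $\lambda^{5/2}$, still $k$-free; tail splitting does not help because the $O(\lambda^{3/2})$ error lives in the bulk, not the tail; and the trivial bound $f\le\tfrac12$ handles the regime $k(1-\lambda)^2\le 9\lambda$, i.e.\ only \emph{small} $k$, which is the wrong end. (Incidentally, your threshold is off: $\lambda^{3/2}\le\sqrt{\lambda/k}$ iff $\lambda\le k^{-1/2}$, not $k^{-1/3}$.)

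The missing idea is that the $k$-correction must come from comparing the variance $\lambda(k-1)/k$ to the limiting variance $\lambda$, \emph{after} establishing the main-term inequality exactly at variance $\lambda$. The paper does this via the decomposition $f=\tfrac12\tanh+\tfrac12(\tanh-\tanh^2)$. For $X\sim N(\lambda,\lambda)$ one has the exact identity $\E[\tanh(X)-\tanh^2(X)]=0$: writing the $N(\lambda,\lambda)$ density as $c\,e^{-x^2/(2\lambda)}e^{x}$, one checks that $(\tanh x-\tanh^2 x)\,e^{x}=\tanh(x)\,\mathrm{sech}(x)$ is odd, so the integral vanishes. A Lipschitz comparison between $w\sim N(0,\tfrac{k-1}{k})$ and $w+z/\sqrt{k}\sim N(0,1)$ then produces the $O(\sqrt{\lambda/k})$ correction for this piece; the remaining piece $\tfrac12\E[\tanh(\lambda+\sqrt{\lambda}w)]\le\tfrac\lambda2$ follows from $\tanh(x)\le x$ on $x\ge 0$ after symmetrizing. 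Your second-order Taylor truncation $f(\lambda)+\tfrac{\lambda}{2}f''(\lambda)$ is the wrong ``main term'': the correct one is the full Gaussian expectation at variance $\lambda$, which is not a polynomial in $\lambda$ and cannot be captured by any finite Taylor truncation with a $k$-independent remainder.
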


\begin{proof}[Proof of Lemma \ref{lemma:tanh_ub}]

    Let  $q(w) = \tanh (\lambda + \sqrt{\lambda}w ) - \tanh^2 (\lambda + \sqrt{\lambda}w )$.
    % the following function
    % \begin{align} 
    % q(w) =\tanh (\lambda + \sqrt{\lambda}w ) - \tanh^2 (\lambda + \sqrt{\lambda}w ). \nonumber
    % \end{align}
    In terms of $q(w)$, the left hand side of \eqref{eq:tanh_ub} may be written as follows
    \begin{align}
    \label{eq:Eh}
         \E \left[\tanh (\lambda + \sqrt{\lambda}w ) - \frac{1}{2}\tanh^2\left(\lambda +\sqrt{\lambda}w \right)\right] = 
         \frac{1}{2}\left(\E\left[\tanh (\lambda + \sqrt{\lambda}w) \right]  +\E\left[q(w)\right]\right).
    \end{align}
     We now upper bound the two terms in the RHS of \eqref{eq:Eh}.
     We start by showing that $\E\left[q(w)\right]\leq 3\sqrt{\lambda/k}$. 
     Let $L_q = \max_{w\in \mathbb{R}}|q'(w)|$ be the Lipschitz constant of the function $q(w)$.
     It is easy to show that  $L_q \leq 3\sqrt{\lambda}$.
     Let $z\sim \N(0,1)$ be independent of $w$.
     Using the first order Taylor expansion yields
    \begin{align}
    \label{eq:gdiff_ub}
        \E_{w,z}\left[q\left(w \right) - q\left(w +\frac{1}{\sqrt{k}}z\right)\right] \leq L_q\frac{1}{\sqrt{k}}\E_z\left[\left|z\right|\right] = 
        3\sqrt{\frac{\lambda}{k}}\sqrt{\frac{2}{\pi }} \leq 3\sqrt{\lambda/k}.
    \end{align}
    Next, note that $(w +\frac{1}{\sqrt{k}}z) \sim N(0,1)$.
    Therefore 
    % the expectation $E[q(v)]$ is given by
    \begin{align}
        \E_{w,z}\left[q\left(w +\frac{1}{\sqrt{k}}z\right)\right] 
        &= \int_{-\infty}^\infty q(v) \frac{e^{-v^2/2}}{\sqrt{2\pi}}dv \nonumber \\
        &= \int_{-\infty}^\infty \left(\tanh (\lambda + \sqrt{\lambda}v ) - \tanh^2 (\lambda + \sqrt{\lambda}v )\right) \frac{e^{-v^2/2}}{\sqrt{2\pi}}dv. \nonumber
    \end{align}
    Making a change of variables $x = \lambda + \sqrt{\lambda}v$, gives
    \begin{align}
    % \label{eq:E_zero}
        \E_{w,z}\left[q\left(w +\frac{1}{\sqrt{k}}z\right)\right] =  \int_{-\infty}^\infty \left(\tanh (x) - \tanh^2 (x)\right) \frac{e^{-(x-{\lambda})^2/2\lambda}}{\sqrt{2\pi \lambda}}dx. \nonumber
    \end{align}
    Define $t_\lambda(x) = \left(\tanh (x) - \tanh^2 (x)\right) \frac{e^{-(x-{\lambda})^2/2\lambda}}{\sqrt{2\pi \lambda}}$. 
    Note that $t_\lambda (x)$ is an absolutely integrable function which satisfies $t_\lambda(x) = -t_\lambda(-x)$ for any $\lambda$.
    Therefore, the above integral is equal to zero, namely $\int_{-\infty}^\infty t_\lambda (x)dx = 0$.
    Inserting this into \eqref{eq:gdiff_ub} gives
    \begin{align}
    \label{eq:Eq_ub}
        \E_w\left[q\left(w \right)\right] \leq 3\sqrt{\lambda/k}.
    \end{align}

    Next, we upper bound the first term on the RHS of  Eq. \eqref{eq:Eh}.
    % , $\frac{1}{2}\E\left[\left(\tanh (\lambda + \sqrt{\lambda}w )\right)\right]$.
    Denote by $ f_W(w)$ the probability density function of $w\sim  \N(0, \tfrac{k-1}{k})$.
    Then, writing the expectation explicitly gives
    \begin{align}
        \E\left[\tanh (\lambda + \sqrt{\lambda}w )\right]= \int_{-\infty}^{\infty}\tanh (\lambda + \sqrt{\lambda}w ) f_W(w) dw. \nonumber 
    \end{align}
    Making the change of variables $x = \lambda + \sqrt{\lambda}w$ yields
    \begin{eqnarray}
        \E\left[\tanh (\lambda + \sqrt{\lambda}w )\right] 
        & =  &\frac{1}{\sqrt{\lambda}}\int_{-\infty}^{\infty}\tanh(x) 
        f_W\left(\tfrac{x-\lambda}{\sqrt{\lambda}}\right)dx \nonumber \\
        %%%%%%%%%%%%%%%%%%%%%%%%%%%%%
        &= &\frac{1}{\sqrt{\lambda}}\int_{0}^{\infty}\tanh(x) 
        \left(f_W\left(\tfrac{x-\lambda}{\sqrt{\lambda}}\right)- f_W\left(\tfrac{x+\lambda}{\sqrt{\lambda}}\right) \right)dx.
                % \label{eq:Etanh}
                \nonumber
    \end{eqnarray}   
     % Since $f_W$ is the density of a Gaussian random variable  $ \N(0, \tfrac{k-1}{k})$ and 
     Since $\lambda>0$, it follows that $f_W\left(\tfrac{x-\lambda}{\sqrt{\lambda}}\right)- f_W\left(\tfrac{x+\lambda}{\sqrt{\lambda}}\right) \geq 0$, for all $x\geq 0$.
    Therefore, to bound this expectation it suffices to construct a function $g(x)$ such that $g(x) \geq \tanh(x)$ for any $x\geq0$ and for which we can compute explicitly the corresponding integral.
    Consider the function $g(x) = x$. It is well-known that $x\geq \tanh(x)$, for all $x\geq0$.
    Thus,
    \begin{align}
         \E\left[\tanh (\lambda + \sqrt{\lambda}w )\right]
     &\leq 
     \frac{1}{\sqrt{\lambda}}\int_{0}^{\infty}x
        \left(f_W\left(\tfrac{x-\lambda}{\sqrt{\lambda}}\right)- f_W\left(\tfrac{x+\lambda}{\sqrt{\lambda}}\right) \right)dx 
        = \frac{1}{\sqrt{\lambda}}\int_{-\infty}^{\infty}x
        f_W\left(\tfrac{x-\lambda}{\sqrt{\lambda}}\right)dx
         \nonumber
        \end{align}
     Substituting  $w = (x-\lambda)/\sqrt{\lambda}$, yields
    \begin{align}
     \label{eq:Etanh_ub}
        \E\left[\tanh (\lambda + \sqrt{\lambda}w )\right]\leq 
        \int_{-\infty}^{\infty} (\lambda + \sqrt{\lambda}w ) f_W(w) dw = \E\left[ \lambda + \sqrt{\lambda}w \right] = \lambda.
    \end{align}
     Combining \eqref{eq:Eh},\eqref{eq:Eq_ub} and \eqref{eq:Etanh_ub} gives
    \[
    \E \left[\tanh (\lambda + \sqrt{\lambda}w ) - \frac{1}{2}\tanh^2\left(\lambda +\sqrt{\lambda}w \right)\right] \leq 
    \frac{1}{2}\left(\lambda+ 3\sqrt{\lambda/k}\right).
    \]

\end{proof}

{

\begin{lemma}
\label{lemma:excess_risk}
    Consider a sequence of classification problem of the form, 
    \[
    y \sim \text{Unif}\{\pm 1\},\,\,\,
    \x|y \sim N(y\bm \mu, \mathbf{I}_p),
    \]
    where the dimension $p \to \infty$ but $\|\bmu\|=\sqrt{\lambda}$ is fixed.
    Let $\hat{\bmu}_p$ be a sequence of unit norm estimators and $T_p(\x) = \sign(\hat{\bm \mu}_p,\x) $ the corresponding classifier.
    Assume that every $\epsilon>0$ 
    \begin{align}
    \label{eq:in_product}
            \lim_{p \to \infty}\Prob(\langle\bm{\mu}/\sqrt{\lambda} ,\, \hbmu_p\rangle>1-\epsilon ) = 1.
    \end{align}
    Then, the excess risk of the classifier $T_p(\x) = \sign\langle\hat{\bm \mu}_p,\x\rangle$ tends to zero as $p$ tends to infinity.
\end{lemma}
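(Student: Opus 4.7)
The plan is to first derive a closed form for the conditional risk of $T_p$ given the estimator $\hat{\bmu}_p$. Since $\hat{\bmu}_p$ is constructed from training data and is independent of the fresh test sample $(\x,y)$, conditional on $\hat{\bmu}_p$ and $y$ the quantity $y\langle \hat{\bmu}_p,\x\rangle = \langle \hat{\bmu}_p,\bmu\rangle + y\langle \hat{\bmu}_p,\bm \xi\rangle$ is Gaussian with mean $\langle \hat{\bmu}_p,\bmu\rangle$ and, using $\|\hat{\bmu}_p\|=1$, unit variance. Hence
\[
\Prob\bigl(T_p(\x)\neq y\mid \hat{\bmu}_p\bigr) = \Phi^c\bigl(\langle \hat{\bmu}_p,\bmu\rangle\bigr),
\]
which does not depend on the value of $y$. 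Since the Bayes risk is $\mathcal{R}^* = \Phi^c(\sqrt{\lambda})$, marginalizing over $\hat{\bmu}_p$ yields
\[
\mathcal{E}(T_p) = \E\bigl[\Phi^c(\langle \hat{\bmu}_p,\bmu\rangle)\bigr] - \Phi^c(\sqrt{\lambda}).
\]

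Next, I would fix an arbitrary $\epsilon\in(0,1)$ and split the expectation on the event $E_p^\epsilon = \{\langle \hat{\bmu}_p,\bmu/\sqrt{\lambda}\rangle > 1-\epsilon\}$. On $E_p^\epsilon$, the monotonicity of $\Phi^c$ gives $\Phi^c(\langle \hat{\bmu}_p,\bmu\rangle)\leq \Phi^c(\sqrt{\lambda}(1-\epsilon))$, while on its complement the trivial bound $\Phi^c\leq 1$ applies. Combining these,
\[
\mathcal{E}(T_p) \leq \Phi^c\bigl(\sqrt{\lambda}(1-\epsilon)\bigr) - \Phi^c(\sqrt{\lambda}) + \Prob\bigl((E_p^\epsilon)^c\bigr).
\]

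By assumption \eqref{eq:in_product}, $\Prob((E_p^\epsilon)^c)\to 0$ as $p\to\infty$, so $\limsup_{p\to\infty}\mathcal{E}(T_p)\leq \Phi^c(\sqrt{\lambda}(1-\epsilon))-\Phi^c(\sqrt{\lambda})$. Letting $\epsilon\to 0^+$ and using the continuity of $\Phi^c$ at $\sqrt{\lambda}$, the right-hand side vanishes. Since $\mathcal{E}(T_p)\geq 0$ by the definition of the Bayes risk, we conclude $\lim_{p\to\infty}\mathcal{E}(T_p)=0$.

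The argument is essentially a routine calculation rather than a deep result; the only point requiring care is deriving the closed form for the conditional risk, which hinges on two facts: the independence between $\hat{\bmu}_p$ (a function of training data) and the fresh test sample, and the unit-norm assumption on $\hat{\bmu}_p$ so that the projected Gaussian noise $\langle \hat{\bmu}_p,\bm \xi\rangle$ has unit variance. Once the conditional risk is identified with $\Phi^c(\langle \hat{\bmu}_p,\bmu\rangle)$, the rest is a standard truncation-plus-continuity argument.
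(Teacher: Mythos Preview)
Your proof is correct and follows essentially the same approach as the paper: identify the conditional risk as $\Phi^c(\langle \hat{\bmu}_p,\bmu\rangle)$ via the independence of $\hat{\bmu}_p$ from the test noise and the unit-norm assumption, then split on the high-inner-product event and use continuity of $\Phi^c$. If anything, your version is a bit tidier, since you explicitly take the expectation over $\hat{\bmu}_p$ before bounding, whereas the paper's argument bounds the (random) conditional excess risk directly.
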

\begin{proof}
    By definition, the excess risk of the classifier $T_p$ that corresponds to $\hbmu_p$ can be written as 
    \begin{align}
    %\label{eq:ex_risk_bound}
        \mathcal{E} (T_p) &= 
        \Prob_{\Bxi\sim N( 0,\mathbf{I}_p)}(\langle \hbmu_p, \bmu + \Bxi\rangle <0 ) - \Phi^c(\sqrt \lambda).  \nonumber
    \end{align}
    Since $\Bxi$ is independent of $\hbmu_p$ and
    $\hbmu_p$ has unit norm, then $z=\langle \hbmu_p,\Bxi\rangle \sim \mathcal N(0,1)$, and the excess risk may be written as
    \[
          \mathcal{E} (T_p) = \mathcal{P}(z > 
          \langle \hbmu_p,\bmu \rangle) - \Phi^c(\sqrt{\lambda})
    \]
    Let $\epsilon>0$, and consider the event $\mathcal{A}_\epsilon = \{\langle\bm{\mu}/\sqrt{\lambda} ,\, \hbmu_p\rangle>1-\epsilon\}$.
    Then, 
    \[
          \mathcal{E} (T_p) \leq 
          \left\{
          \begin{array}{cl}
                \Phi^c(\sqrt{\lambda}(1-\epsilon))
                - \Phi^c(\sqrt{\lambda}) & \mbox{with probability } \mathcal P(\mathcal A_\epsilon) \\
                1 & \mbox{with probability }
                    1-P(\mathcal A_\epsilon)
          \end{array}
          \right.
    \]
    Since by assumption $\mathcal P(\mathcal A_\epsilon)\to 1$  for any $\epsilon>0$, then
    the excess risk tends to zero as $p \to \infty$.
\end{proof}

}

\section{Lower Bounds - Proofs of Results in Section \ref{sec:theory}}

%###############################################################
%##################  Proof of thm 1  ###########################
%###############################################################
\subsection{Proof of Theorem \ref{thm:labeled_info_theo}}

Our proof relies on Fano's inequality, and is conceptually similar to the proof of Theorem 3 in \citet{Amini_Wainwright_2009}. 
First, note that for any sub-collection $\tilde{\mathbb{S}} \subset \mathbb{S}$, we have the following
\begin{align}
% \label{eq:lb_prob}
    \max_{S \in {\mathbb{S}}} \Prob \left(\hat{S} \neq S\right)\geq \frac{1}{|\tilde{\mathbb{S}}|} \sum_{S \in \tilde{\mathbb{S}}}\Prob \left(\hat{S} \neq S\right). \nonumber
\end{align}
The right hand side in the above display is  the error probability of an estimator
$\hat S$, where $S$ is considered as a random variable uniformly distributed over the set 
$\tilde{\mathbb{S}}$. In other words, the right hand side may be written as follows, 
\begin{align}
% \label{eq:prob_error}
    \Prob\left( \text{error}\right) = 
    \sum_{s \in \Tilde{\mathbb{S}}}  
    \Prob\left( \hat{S} \neq s\,|\, S=s\right) \cdot \Prob \left(S=s\right). \nonumber
 %= 
\end{align}
In our proof, we consider the following sub-collection 
% denoted as $\Tilde{\mathbb{S}}$. This sub-collection, is defined as:
\begin{align}
\label{eq:sub_supp_set}
\Tilde{\mathbb{S}} := \left\{T \in \mathbb{S}: {1, \ldots, k-1} \in T \right\},
\end{align}
which consists of all $k$-element subsets that contain the first $k-1$ support indices $\{1,\ldots ,k -1\}$ 
and one from $\{k,\ldots,p\}$. 
% Let $S$ be a subset chosen uniformly at random from $\Tilde{\mathbb{S}}$.
To lower bound the probability of error, we focus on a specific class of means: given the support $S$, the mean entries have the form 
$\mu_j = \sqrt{\tfrac{\lambda}{k}} \ind\{j \in S\}$. 
So, with $S$ known, $\bm \mu$ is deterministic and we write it as $\bm \mu^S$.

In the proof we consider an equivalent model of \eqref{eq:sparse_mixture_model}, whose observations are divided by $\sqrt{\lambda}$,
\begin{align}
    % \label{eq:equiv_sparse_mixture_model}
        \bm {x}_i =  y_i  \bm \theta^S + \sigma\bm{\xi}_i, \quad i= 1,\ldots,L , \nonumber
\end{align}
where $\bm{\theta}^S(j) = {\tfrac{1}{\sqrt k}} \ind\{j \in S\}, \sigma = \frac{1}{\sqrt{\lambda}}$ and $ \Bxi_i \sim \mathcal{N}(0, \mathbf{I}_p)$. 
% Recall that given the support $S$, $\bm \mu^S$ is deterministic such that $\mu^S_j = \frac{1}{\sqrt{k}}\, \ind\{j \in S\}$.
Since for each observation ${\bm x}_i$ we also know its corresponding label 
$y_i\in\{-1,1\}$, we may consider the following transformed observations
\begin{align}
    \label{eq:equiv_sparse_mixture_model2}
        \tilde\x_i = y_i \x_i = \bm \theta^S + \sigma \tilde{\bm{\xi}}_i, \quad i= 1,\ldots,L.
    \end{align}
where $\tilde{\bm{\xi}}_i=y_i {\bm{\xi}}_i$ has the same distribution as $\bm{\xi}_i$. 
Denote by $X^L, \TX^L, Y^L$ the sets of $L$ i.i.d. samples $\{\x_i\}_{i=1}^L, \{\Tx_i\}_{i=1}^L$ and $\{y_i\}_{i=1}^L$, respectively.
To apply Fano's lemma,
we consider the joint mutual information $I\left((X^L, Y^L); S \right)$. 
Note that,
\begin{align}
    I\left( \left(X^L, Y^L\right); S \right) = I\left(\left(\TX ^L, Y^L\right);S\right) =I\left(\TX ^L;S\right), \nonumber
\end{align}
where the last equality follows from the fact that the labels $Y^L$ are independent of the support $S$ and the observations $\TX^L$. 
Hence, it is enough to consider the samples $\TX^L$ from the model in \eqref{eq:equiv_sparse_mixture_model2}.

Let $S$ be a subset chosen uniformly at random from $\Tilde{\mathbb{S}}$. 
Then, from Lemma \ref{lemma:fano}, it follows that
\begin{align}
\label{eq:fano}
    \Prob\left( \text{error}\right) \geq 1 - \frac{I(\TX^L; S) + \log 2}{\log|\Tilde{\mathbb{S}}|} = 1 - \frac{I(\TX^L; S) + \log 2}{\log(p-k+1)}.
\end{align}

We now derive an upper bound on $I(\TX^L ; S)$. 
First, from the relation between mutual information and conditional entropy, $I(\TX^L ; S) = H(\TX^L) - H(\TX^L|S)$.
By the sub-additivity of the entropy function $H(\TX^L) \leq L H(\Tx)$. 
Also, since the samples $\Tx_i$ are conditionally independent given $S$, the joint entropy $H(\TX^L|S)$ can be expressed as 
\begin{align}
    H(\TX^L|S) = \sum_{i\in[L]}H(\Tx_i|S) = L H(\Tx|S). \nonumber \nonumber
\end{align}
where $\Tx$ is a single observation from the model (\ref{eq:equiv_sparse_mixture_model2}). 
Therefore,
\begin{align}
\label{eq:mutual_info}
    I(\TX^L ; S) \leq L \left(H(\Tx) - H(\Tx|S)\right).
\end{align}

By the definition of conditional entropy, 
\begin{align}
  % \label{eq:cond_entropy}
  H(\Tx|S) &=  %\nonumber \\
  -  \sum_{s \in \Tilde{\mathbb{S}}}  P(S=s) \int 
  f\left(\Tx |S\right) \log f(\Tx|S) d\Tx , \nonumber
\end{align}
where $f(\Tx|S)$  the probability density function of a single random sample $\Tx$ given $S$. 
For any $S \in \Tilde{\mathbb{S}}$, the vector $(\Tx\,|\, S)$ is a $p$-dimensional Gaussian with 
mean $\bm{\theta}^S$ and covariance matrix $\sigma^2 \mathbf{I}_p$. Its entropy is independent of 
its mean, and is given by $\tfrac{p}2\left(1 + \log(2\pi \sigma^2) \right) $.
Hence,
\begin{align}
  \label{eq:cond_entropy2}
  H(\Tx|S) =  \frac{p}{2}\left(1 + \log(2\pi) + \log(\sigma^2)  \right).
\end{align}

The final step is to upper bound $H(\Tx)$. To this end, note that $\Tx$ is distributed as a mixture of $(p-k+1)$ Gaussians, each centered at $\bm{\theta}^S$ for $S\in\mathbb{\tilde S}$. 
Let us denote its mean and covariance by $\bm \nu_x = \E \left[\Tx \right]$ and $\Sigma = \E \left[ (\Tx-\bm \nu_x)(\Tx- \bm \nu_x)^T\right]$, respectively. 
By the maximum entropy property of the Gaussian distribution (Lemma \ref{lemma:max_entropy}), 
and Eq. \eqref{eq:Gauss_entropy} for the entropy of a multivariate Gaussian, we have 
\begin{align}
\label{eq:ub_entropy}
    H(\Tx) \leq H(\mathcal N(\bm\nu_x,\Sigma)) =\frac{p}{2}\left(1 + \log (2\pi) \right) + \frac{1}{2}\log \text{det}\left(\Sigma\right) .
\end{align}

Combining \eqref{eq:mutual_info}, \eqref{eq:cond_entropy2} and \eqref{eq:ub_entropy}  gives
\begin{equation}
\label{eq:ub_mutual_info}
    I(\TX^L;S) \leq \frac{L}2 \left(
    \log \det (\Sigma) -  p \log \sigma^2
    \right) .
\end{equation}
The following lemma, proved in Appendix \ref{appendix:additional_lemmas}, provides an upper bound for $\log \det (\Sigma)$.
%%%%%%%%%%%%%%%%%%%%%%%%%%%%%%%%%%%%%%%%%%%%%%%%%%%%%%%%%%%%%
%%%%%%%%%%%%%%%%%%%%%% Lemma %%%%%%%%%%%%%%%%%%%%%%%%%%%%%%%%
%%%%%%%%%%%%%%%%%%%%%%%%%%%%%%%%%%%%%%%%%%%%%%%%%%%%%%%%%%%%%
\begin{lemma}
    \label{lemma:ub_deteminant}
    Let $\Sigma$ be the covariance matrix of the random vector $\Tx$ of Eq. \eqref{eq:equiv_sparse_mixture_model2}, with the set $S$ uniformly distributed on $\mathbb{\Tilde{S}}$ of Eq. \eqref{eq:sub_supp_set}. 
    Then, 
    \begin{align}
    \label{eq:ub_determinant}
        \log \text{det}\left(\Sigma\right) \leq p \log(\sigma^2) + (p-k+1)\log  \left(1+ \frac{1}{k(p-k+1)\sigma^2} \right).
    \end{align}
\end{lemma}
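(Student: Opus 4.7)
Since the noise $\tilde{\bm\xi}$ in \eqref{eq:equiv_sparse_mixture_model2} is independent of $S$, the covariance decomposes as $\Sigma = \sigma^2 \mathbf{I}_p + C$, where $C := \mathrm{Cov}(\bm\theta^S)$. Hence the task reduces to computing, or at least sharply upper bounding, $\log\det(\sigma^2\mathbf{I}_p + C)$. My plan is to exploit the very simple structure of the set $\tilde{\mathbb{S}}$ in \eqref{eq:sub_supp_set} to diagonalize $C$ explicitly and then read off $\log\det(\Sigma)$ directly.

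The first step is to observe that for coordinates $j\in\{1,\dots,k-1\}$, the entry $\bm\theta^S(j)=1/\sqrt{k}$ is deterministic, so the corresponding rows and columns of $C$ are zero. All the randomness sits in the last $M:=p-k+1$ coordinates, indexed by $\{k,\dots,p\}$, where $\bm\theta^S(j)=\tfrac{1}{\sqrt{k}}\mathbbm{1}\{j\in S\}$ and exactly one of these $M$ indicators equals $1$. A short calculation using $\mathbb{P}(j\in S)=1/M$ and $\mathbb{P}(j,j'\in S)=0$ for $j\neq j'$ in this block yields
\begin{equation*}
   C|_{\{k,\dots,p\}} \;=\; \frac{1}{kM^2}\bigl(M\mathbf{I}_M - \mathbf{J}_M\bigr),
\end{equation*}
where $\mathbf{J}_M$ is the $M\times M$ all-ones matrix. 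This is (up to scaling) the standard centering projector, whose spectrum is transparent.

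Next I diagonalize: $M\mathbf{I}_M-\mathbf{J}_M$ has eigenvalue $M$ on the subspace orthogonal to $\mathbf{1}_M$ (multiplicity $M-1$) and eigenvalue $0$ along $\mathbf{1}_M$. Thus $C$ contributes eigenvalues $\tfrac{1}{kM}$ with multiplicity $M-1$ and $0$ with multiplicity $1$ in the last block, and is identically zero on the first $k-1$ coordinates. Consequently $\Sigma=\sigma^2 \mathbf{I}_p+C$ has eigenvalue $\sigma^2+\tfrac{1}{kM}$ with multiplicity $M-1=p-k$ and eigenvalue $\sigma^2$ with the remaining multiplicity $p-(p-k)=k$, giving
\begin{equation*}
   \log\det(\Sigma) \;=\; p\log\sigma^2 + (p-k)\log\!\Bigl(1+\tfrac{1}{kM\sigma^2}\Bigr).
\end{equation*}

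The final step is to weaken $p-k$ to $p-k+1=M$, which is valid because the log term is nonnegative. This yields exactly the bound \eqref{eq:ub_determinant}. I do not anticipate any real obstacle: the only mildly delicate point is the covariance computation in the $\{k,\dots,p\}$ block, which must correctly account for the fact that $\bm\theta^S$ is not mean-zero, and the eigendecomposition of $M\mathbf{I}_M-\mathbf{J}_M$. Both are routine. The looseness $p-k\le p-k+1$ in the last step is what makes the stated bound slightly suboptimal, but it keeps the final expression clean for use in \eqref{eq:ub_mutual_info}.
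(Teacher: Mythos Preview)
Your proposal is correct and follows essentially the same route as the paper: both compute the covariance $C=\mathrm{Cov}(\bm\theta^S)$, recognize its block structure $\tfrac{1}{kM^2}(M\mathbf I_M-\mathbf J_M)$ on the last $M=p-k+1$ coordinates, and then bound $\log\det(\sigma^2\mathbf I_p+C)$. The only cosmetic difference is that the paper drops the rank-one negative term $-\tfrac{1}{kM^2}\bm u\bm u^\top$ via a positive-semidefinite relaxation and then takes the determinant of the simpler matrix (yielding the factor $p-k+1$ directly), whereas you diagonalize exactly to get the sharper factor $p-k$ and then relax to $p-k+1$ at the end.
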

%\begin{proof}
 %In Appendix A.    
%\end{proof}
%%%%%%%%%%%%%%%%%%%%%%%%%%%%%%%%%%%%%%%%%%%%%%%%%%%%%%%%%%%%
%%%%%%%%%%%%%%%%%%%%%%%%%%%%%%%%%%%%%%%%%%%%%%%%%%%%%%%%%%%%
Substituting this upper bound into \eqref{eq:ub_mutual_info} leads to
\begin{align}
\label{eq:ub_mutual_info2}
    I(\TX^L;S) = \frac{L}{2} (p-k+1)\log  \left(1+ \frac{1}{k(p-k+1)\sigma^2} \right)
    \leq \frac{L}{2k\sigma^2} = \frac{L\lambda}{2k}
\end{align}
where the last inequality follows from $\log(1+x) \leq x$, for all $x>0$. 
Inserting \eqref{eq:ub_mutual_info2} into  \eqref{eq:fano}, implies that a sufficient condition for the error probability to be greater than $\delta - \frac{\log 2}{\log(p-k+1)}$ is $ \frac{L\lambda}{k} < 2(1-\delta){\log({p-k+1})}$, which completes the proof.

%###############################################################
%##################  Proof of thm 2  ###########################
%###############################################################

\subsection{Proof of Theorem \ref{thm:unlabeled_info_theo}}
To prove Theorem \ref{thm:unlabeled_info_theo} we use the following lemma,  proven in Appendix \ref{appendix:additional_lemmas}.
%
%%%%%%%%%%%%%%%%%%%%%%%%%%%%%%%%%%%%%%%%%%%%%%%%%%%%%%%%%%%%%
%%%%%%%%%%%%%%%%%%%%%% Lemma %%%%%%%%%%%%%%%%%%%%%%%%%%%%%%%%
%%%%%%%%%%%%%%%%%%%%%%%%%%%%%%%%%%%%%%%%%%%%%%%%%%%%%%%%%%%%%
\begin{lemma}
\label{lemma:mutual_info_ub_unlabeled}
    Let $\x$ be a random vector from the model \eqref{eq:sparse_mixture_model},
    with a vector $\bm\mu^S$, where the random variable $S$ is uniformly distributed over the set $\mathbb{\tilde S}$ of Eq. \eqref{eq:sub_supp_set}, and let $I(\x;S)$ be their mutual information. 
    Consider an asymptotic setting where $p \to \infty$ and $k/p \to \infty$.
    Then, for $\lambda <1$, and for $p$ and $k$ sufficiently large with $k/p $ sufficiently small 
    \begin{align}
    \label{eq:mi_unlabeled_LB}
        I(\x;S) \leq \frac{1}{2}\frac{\lambda^2}{k}(1+o(1)).
    \end{align}
\end{lemma}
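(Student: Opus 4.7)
The plan is to upper bound $I(\x;S)$ by the KL divergence $D_{KL}(\mathbb{P}_T\|\mathbb{P}_0)$ against a convenient reference distribution, and then reduce that KL to a one-dimensional integral that can be controlled by the same antisymmetry trick used in Lemma \ref{lemma:tanh_ub}. Since the sets in $\tilde{\mathbb{S}}$ are in bijection with their unique ``free'' coordinate $T \in \{k,\ldots,p\}$, we have $I(\x;S)=I(\x;T)$. For any reference distribution $Q$ the identity $I(\x;T)=\mathbb{E}_T D_{KL}(\mathbb{P}_T\|Q)-D_{KL}(\mathbb{P}_\x\|Q)$ together with non-negativity of KL yields $I(\x;T)\le \mathbb{E}_T D_{KL}(\mathbb{P}_T\|Q)$. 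I choose $Q=\mathbb{P}_0$, the balanced two-component Gaussian mixture with means $\pm\bm{\mu}_0$, where $\bm{\mu}_0$ equals $\nu:=\sqrt{\lambda/k}$ on the ``core'' indices $\{1,\ldots,k-1\}$ and zero elsewhere. By permutation symmetry $D_{KL}(\mathbb{P}_T\|\mathbb{P}_0)$ is independent of $T$.

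Next I compute the density ratio explicitly. Writing $b:=\x_T$ and $A:=\x_1+\cdots+\x_{k-1}$, and using the orthogonality $\langle \bm{\mu}_0,\bm{e}_T\rangle=0$ (which holds because $T\ge k$), a direct calculation gives $p_T(\x)/p_0(\x)=e^{-\nu^2/2}[\cosh(\nu b)+\tanh(\nu A)\sinh(\nu b)]$. The identity $\cosh u+\tanh s\,\sinh u=\cosh(u+s)/\cosh s$ collapses this to $p_T/p_0=e^{-\nu^2/2}\cosh(\nu(A+b))/\cosh(\nu A)$. Taking $\mathbb{E}_{\mathbb{P}_T}$ of the log, using that under $\mathbb{P}_T$ one has $\nu(A+b)\mid y\sim\mathcal{N}(y\lambda,\lambda)$ and $\nu A\mid y\sim\mathcal{N}(y\lambda_0,\lambda_0)$ with $\lambda_0:=\lambda(k-1)/k$, and using evenness of $\cosh$ to erase the outer average over $y$, yields
\[
D_{KL}(\mathbb{P}_T\|\mathbb{P}_0)=-\frac{\lambda}{2k}+F(\lambda)-F(\lambda_0),\qquad F(t):=\mathbb{E}_{Z\sim\mathcal{N}(0,1)}[\log\cosh(t+\sqrt{t}\,Z)].
\]

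Finally I bound $F'$. Differentiating inside the expectation and applying Stein's identity $\mathbb{E}[Z\tanh(t+\sqrt{t}\,Z)]=\sqrt{t}\,\mathbb{E}[1-\tanh^2(t+\sqrt{t}\,Z)]$ gives $F'(t)=\tfrac12+\mathbb{E}[\tanh S-\tfrac12\tanh^2 S]$ with $S\sim\mathcal{N}(t,t)$. Splitting $\tanh-\tfrac12\tanh^2=\tfrac12\tanh+\tfrac12(\tanh-\tanh^2)$, the two ingredients behind Lemma \ref{lemma:tanh_ub} apply: (i) $\mathbb{E}\tanh S\le t$, by the symmetrization argument combined with $\tanh x\le x$ on $[0,\infty)$; and (ii) $\mathbb{E}[\tanh S-\tanh^2 S]=0$ \emph{exactly}, because $[\tanh x-\tanh^2 x]e^{-(x-t)^2/(2t)}$ is antisymmetric about $x=0$ (this is a direct consequence of $(1-\tanh x)/(1+\tanh x)=e^{-2x}$ together with the matching of mean and variance of $S$). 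Thus $F'(t)\le (1+t)/2$, and integrating from $\lambda_0$ to $\lambda$:
\[
D_{KL}(\mathbb{P}_T\|\mathbb{P}_0)\le \frac{(1+\lambda)^2-(1+\lambda_0)^2}{4}-\frac{\lambda}{2k}=\frac{\lambda(\lambda+\lambda_0)}{4k}=\frac{\lambda^2}{2k}\Big(1-\frac{1}{2k}\Big),
\]
which is $\tfrac{\lambda^2}{2k}(1+o(1))$ as $k\to\infty$.

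The main obstacle is the antisymmetry step used to pass from (ii) to the exact identity $\mathbb{E}[\tanh S-\tanh^2 S]=0$: it relies on the fortunate coincidence that $S\sim\mathcal{N}(t,t)$ has variance equal to its mean, which is precisely what causes the Gaussian factor to cancel the $e^{-2x}$ coming from the tanh identity. Once that is established, the rest is bookkeeping. A secondary point is verifying that the differentiation inside the expectation defining $F'(t)$ is legitimate, which follows from uniform Gaussian tail bounds and boundedness of $\tanh$ and $\tanh^2$.
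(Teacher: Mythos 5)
Your proof is correct, but it takes a genuinely different route from the paper's. The paper computes $I(\x;S)=H(\x)-H(\x|S)$ head-on: $H(\x|S)$ reduces cleanly to $\E[\log\cosh(\lambda+\sqrt{\lambda}w+\sqrt{\lambda/k}\xi)]$, but $H(\x)$ requires handling the full $(p-k+1)$-component marginal mixture, which the paper tames with a central-limit approximation carrying $O_P(\sqrt{\lambda}/\sqrt{kp})$ error terms, followed by a third-order Taylor expansion of $g_w(t)=\log\cosh(\lambda+\sqrt{\lambda}w+t)$ and an appeal to Lemma \ref{lemma:tanh_ub}. You sidestep the marginal-entropy computation entirely via the variational inequality $I(\x;T)\le\E_T D_{KL}(\mathbb{P}_T\|Q)$ with the ``core-only'' mixture as reference; the density ratio then collapses by the addition formula for $\cosh$, and the whole problem becomes bounding $F(\lambda)-F(\lambda_0)$ for $F(t)=\E[\log\cosh(t+\sqrt{t}Z)]$. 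Notably, both arguments bottom out in the same two facts — $\E[\tanh S]\le\E[S]$ by folding the integral about the origin, and the exact vanishing of $\E[\tanh S-\tanh^2 S]$ for $S$ Gaussian with variance equal to mean via $(1+\tanh x)/(1-\tanh x)=e^{2x}$ — but in your version the second identity applies \emph{exactly} to $S\sim\mathcal N(t,t)$, whereas the paper's $\lambda+\sqrt{\lambda}w$ with $w\sim\mathcal N(0,\tfrac{k-1}{k})$ has mismatched variance and needs the extra Gaussian smoothing and the Lipschitz correction $3\sqrt{\lambda/k}$ of Lemma \ref{lemma:tanh_ub}. What your approach buys: a fully non-asymptotic bound $I(\x;S)\le\frac{\lambda^2}{2k}\bigl(1-\frac{1}{2k}\bigr)$ with no $o(1)$, no CLT step, no requirement that $k/p\to 0$, and no restriction to $\lambda<1$ (the stated hypothesis ``$k/p\to\infty$'' is a typo for $k/p\to 0$ in any case, and your argument does not need it). The only price of the variational bound is discarding $D_{KL}(\mathbb{P}_\x\|Q)\ge 0$, which is harmless since only an upper bound is sought. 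Two small points to tie down: differentiation under the expectation defining $F'$ (dominated convergence with the bound $|\tanh|\le 1$ and $t$ bounded away from $0$, which holds since $\lambda_0=\lambda\tfrac{k-1}{k}>0$ for $k\ge 2$), and the monotonicity $\phi_{t,t}(x)\ge\phi_{t,t}(-x)$ for $x\ge 0$ used in $\E[\tanh S]\le t$ — both routine.
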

%
%%%%%%%%%%%%%%%%%%%%%%%%%%%%%%%%%%%%%%%%%%%%%%%%%%%%%%%%%%%%%
%%%%%%%%%%%%%%%%%%%%%%%%%%%%%%%%%%%%%%%%%%%%%%%%%%%%%%%%%%%%%

First, note that for  $\lambda\geq1$, the information lower bounds proven
in Theorem $\ref{thm:labeled_info_theo}$ and those we aim to prove in Theorem $\ref{thm:unlabeled_info_theo}$  coincide. 
Clearly, Theorem \ref{thm:labeled_info_theo} which considers all possible estimators based on $\{(\x_i, y_i)\}_{i=1}^L$, includes in particular all unsupervised estimators that ignore the labels.
Hence, for $\lambda\geq 1$, Theorem \ref{thm:unlabeled_info_theo} follows from Theorem \ref{thm:labeled_info_theo}.
Therefore, we consider the case $\lambda<1$.

The proof, similar to that of Theorem $1$, is also based on Fano’s inequality. 
To lower bound the probability of error, we view $S$ as a subset  uniformly 
distributed over $\Tilde{\mathbb{S}}$, where $\Tilde{\mathbb{S}}$ is the 
sub-collection of support sets defined in \eqref{eq:sub_supp_set}. 
Then, using the same arguments as in the proof of Theorem $1$,
\begin{align}
\label{eq:fano_unlabeled}
    \max_{S \in {\mathbb{S}}} \Prob \left(\hat{S} \neq S\right)\geq \frac{1}{|\tilde{\mathbb{S}}|} \sum_{S \in \tilde{\mathbb{S}}}\Prob \left(\hat{S} \neq S\right) \geq 1 - \frac{I(X^n;S) + \log 2}{\log |\Tilde{\mathbb{S}}|},
\end{align}
where $X^n = (\x_1,...,\x_n)$ and $I(X^n;S)$ is the mutual information between the $n$ unlabeled samples and $S$.

We now derive an upper bound for $I(X^n;S)$. 
The sub-additivity of the entropy function, and  
the fact that $\x_i$ are conditionally independent given $S=s$, imply
\begin{align}
\label{eq:mutual_info_unlabeled2}
    I(X^n;S) \leq n\left(H(\x) - H(\x |S)\right) = n I(\x;S),
\end{align}
where $\x$ is a single sample from the model \eqref{eq:sparse_mixture_model}.  
Hence, for $p$ and $k$ sufficiently large, with $k/p$ sufficiently small, combining \eqref{eq:mi_unlabeled_LB} and \eqref{eq:mutual_info_unlabeled2}  gives 
%\begin{align}
    % \label{eq:mutual_info_unlabeled3}
$    I(X^n;S) \leq \frac{n\lambda^2}{2k}(1+o(1))$. 
    % \min \left\{1,\,\lambda \right\}.
%\end{align}
By  Fano's bound in \eqref{eq:fano_unlabeled}, the error probability is at least $\delta$ if 
% $n\leq \frac{k}{\lambda^2} \log(\frac{p-k+1}{4})$.
$n< \frac{2(1-\delta)k}{\lambda^2 } \log({p-k+1})$
.

%###############################################################
%##################  Proof of thm 3  ###########################
%###############################################################
\subsection{Proof of Theorem \ref{thm:ILB_merge_data}}
Recall that the random variable $S$ is uniformly distributed over the set $\mathbb{S}$. 
For $X^ \Na = \{\x_i\}_{i=1}^\Na$ and $Z^\Ma = \{\z_i\}_{i=1}^\Ma$,
their combined mutual information with the random variable $S$ is
\begin{align}
    I(X^\Na, Z^\Ma ; S) &= H(X^\Na,Z^\Ma) - H(X^\Na,Z^\Ma|S).  \nonumber
\end{align}
Since the two sets of samples $\{\x_i\}_{i=1}^\Na,\,\{\z_j\}_{j=1}^\Ma $ are conditionally independent given $S=s$, then
\begin{align}
\label{eq:mi_upper_bound1}
    I(X^\Na, Z^\Ma ; S) &= H(X^\Na,Z^\Ma) - H(X^\Na|S) - H(Z^\Ma|S)  \nonumber \\
    &= H(X^\Na, Z^\Ma) - \Na H(x|S) - \Ma H(z|S).
\end{align}
By the sub-additivity property of the entropy function, 
\begin{align}
    \label{eq:sub_add_entropy}
     H(X^\Na,Z^\Ma) &\leq H(X^\Na) + H(Z^\Ma)
    \leq \Na H(x) + \Ma H(z).
\end{align}
% $H(X^N) \leq NH(x), \, H(Z^M) \leq MH(z)$. 
Hence,  combining \eqref{eq:mi_upper_bound1} and \eqref{eq:sub_add_entropy} yields
\begin{align}
\label{eq:mi_upper_bound2}
    I(X^\Na, Z^\Ma ; S) &\leq 
    % NH(x) - NH(x|S) + MH(z) - MH(z|S) \nonumber \\
    % &=
    \Na\cdot I_x + \Ma\cdot I_z.
\end{align}
Combining Fano's inequality with the upper bound in \eqref{eq:mi_upper_bound2} gives 
    \begin{align}
        \Prob\left( \hat{S} \neq S\right) &\geq 1 - \frac{\Na \cdot I_x + \Ma \cdot I_z +\log2}{\log|\mathbb{S}|} .
%         \geq  \frac{1}{2}. 
         \nonumber
    \end{align}
    Finally, the assumption $\max\{N\cdot I_\x,\, M\cdot I_\z\} < (1-\delta)\log{|{\mathbb{S}}|}$ yields that $\Prob\left( \hat{S} \neq S\right)> \delta-\frac{\log2}{\log|\mathbb S|}$
    \hfill$\Box$

%###############################################################
%##################  Proof of corallary 1  #####################
%###############################################################
\subsection{Proof of Corollary \ref{cor:ILB_SSL}}
To lower bound the error probability, we view $S$ as a random variable uniformly distributed over 
the discrete set $\Tilde{\mathbb{S}}$ defined in \eqref{eq:sub_supp_set}. 
By the same arguments as in the proofs of Theorems 1 and 2, 
\begin{align}
    \max_{S \in {\mathbb{S}}} \Prob \left(\hat{S} \neq S\right)\geq 
    \Prob_{S \sim U(\Tilde{\mathbb{S})}} 
    \left(\hat{S} \neq S\right). \nonumber
\end{align}
We apply Theorem \ref{thm:ILB_merge_data}, with the set $\x_i$ of i.i.d. unlabeled samples from \eqref{eq:sparse_mixture_model}, 
and the second set $z_i = (\x_i,y_i)$ of i.i.d. labeled samples from model \eqref{eq:sparse_mixture_model}.

Next, in the proofs of Theorems \ref{thm:labeled_info_theo} and \ref{thm:unlabeled_info_theo} 
the following upper bounds for $I_\z$ and $I_\x$ were derived, 
\begin{align}
\label{eq:mi_ubs}
    I_\z \leq \frac{\lambda}{2k},\quad I_\x \leq \frac{\lambda}{2k}\min\{1,\lambda\}(1+o(1)).
\end{align}
Finally, by the conditions of the Corollary, 
the number of labeled and unlabeled samples satisfy $L = \lfloor q L_0\rfloor $ and 
$n = \lfloor(1-q) n_0\rfloor $, 
with $L_0< \frac{2(1-\delta)k}{\lambda}\log\left(p-k+1 \right)$ and $n_0 <\frac{2(1-\delta)k}{\lambda^2}\log\left(p-k+1  \right)\max\{1,\lambda\}$.
Hence, for sufficient large $p$, combining these conditions with \eqref{eq:mi_ubs} gives
\begin{align}
    L_0 \cdot I_\z, \, n_0 \cdot I_\x \leq 
    (1-\delta)\log(p-k+1)
        \nonumber        
\end{align}
Therefore, by Theorem \ref{thm:ILB_merge_data} the error probability is at least $\delta$.
\hfill$\Box$

%###############################################################
%##################  Proof of thm 4  ###########################
%###############################################################
\subsection{Proof of Theorem \ref{thm:comp_lb_ssl}}

Even though we  analyze a SSL setting, the observed data still belongs to the 
additive Gaussian noise model (see  Section 2 in \citet{Kunisky22}). 
This key point allows to simplify the low-degree  norm in our setting.
Specifically, let $Z = (\z_1, \ldots, \z_{L+n})$ be the following set of random vectors, which
are the noise-free underlying signals from the alternative $\mathbb{P}_{L+n}$
 of Eq. (\ref{eq:P}) in the main text, 
\begin{align}
%\label{eq:signal}
   \z_i =  \quad  \bigg\{
    \begin{aligned}
        & {\bm \mu}^S  ,\quad\, i\in[L],  \\
        &  y_i {\bm \mu}^S ,\,\, L< i\leq L+n.
    \end{aligned}
        \nonumber
\end{align}
In the equation above, $S$ is uniformly distributed on $\mathbb{S}$ 
(the set of all size-$k$ subsets over $p$ variables),  
$\bm \mu^S$ is a $k$-sparse vector with support $S$ and non-zero entries $\sqrt{\lambda/k}$ and $y_i$ are Rademacher random variables. 
Similarly, let $\tilde{Z} = (\Tz_1, \ldots, \Tz_{L+n})$ be an independent set of 
the underlying noise-free signals, with a possibly different  support $\tilde S$, and independent labels $\Ty_i$. 
Then, by Theorem 1 in \citet{Kunisky22} the low degree norm $\|\mathcal{L}_{L+n}^D\|^2$ can be expressed as
\begin{align}
%\label{eq:LDLR1}
    \|\mathcal{L}_{L+n}^D\|^2 = \E_{Z,\TZ} \left[\sum_{d=0}^D 
    \frac{1}{d!}\left( \sum_{i=1}^{L+n} \langle\z_i, \Tz_i \rangle \right)^d
    \right].
        \nonumber
\end{align}
Inserting the expressions for $\z_i$ and $\Tz_i$ into the equation above, gives
\begin{align}
    \label{eq:LDLR2}
    \|\mathcal{L}_{L+n}^D\|^2 
     &= \sum_{d=0}^D 
     \frac{1}{d!}
     \E\left[
     \left( 
     \sum_{i=1}^{L} \left\langle {\bm \mu}^S ,  {\bm \mu}^{\tilde S}  \right\rangle  
     +
     \sum_{i=L+1}^{L+n} \left\langle y_i{\bm \mu}^S ,  \Ty_i{\bm \mu}^{\tilde S}   \right\rangle  
     \right)^d
    \right]\nonumber,
    \end{align}
where the expectation is over the two random sets $S,\tilde S$ and over the random labels $y_i$ and $\Ty_i$.
Since all these random variables are independent, the right hand side above simplifies to 
\begin{align}
    & \sum_{d=0}^D 
     \frac{1}{d!}
     \E\left[
     \left( 
      L \left\langle {\bm \mu}^S,  {\bm \mu}^{\tilde S} \right\rangle  +
     \sum_{i=L+1}^{L+n} y_i \Ty_i\left\langle {\bm \mu}^S,  {\bm \mu}^{\tilde S} \right\rangle 
     \right)^d
    \right] \nonumber\\
    =& \sum_{d=0}^D 
     \frac{1}{d!}
     \E\left[\left( 
     \left\langle {\bm \mu}^S,  {\bm \mu}^{\tilde S} \right\rangle 
     \left(L +
     \sum_{i=L+1}^{L+n} y_i \Ty_i
     \right)
    \right)^d
    \right] \nonumber.
\end{align}    
Denote $R_i = y_i\Ty_i$, and note that $R_i$ are Rademacher random variables, independent of $S$ and $\tilde S$.
Thus, the expectation above can be factored into the product of two separate expectations, 
\begin{equation}
  \|\mathcal{L}_{L+n}^D\|^2     = \sum_{d=0}^D 
     \frac{1}{d!}
     \E_{S,\tilde{S}}\left[ 
     \left\langle {\bm \mu}^S,  {\bm \mu}^{\tilde S} \right\rangle ^d \right]
     \E_{\{R_j\}_j}\left[\left(L +
     \sum_{i=L+1}^{L+n}R_i
    \right)^d
    \right].
                    \label{eq:L_product_2}
\end{equation}

We now separately analyze each of these two expectations, starting from the second one. 
By the Binomial formula,
\begin{align}
    \E\left[\left(L +
     \sum_{i=L+1}^{L+n}R_i
    \right)^d
    \right] &= 
     \sum_{\ell = 0}^{d}
     \binom{d}{\ell}L^{d-\ell}
     \E\left[
     \left(
     \sum_{i=L+1}^{L+n}R_i
    \right)^{\ell}\,
    \right] \nonumber.
\end{align}
Note that for any odd integer $\ell$,  the $\ell$-th moment of the Rademacher's sum is zero. Therefore,
\begin{align}
    \E\left[\left(L +
     \sum_{i=L+1}^{L+n}R_i
    \right)^d
    \right] 
    &= \sum_{\ell = 0}^{\lfloor{d}/{2}\rfloor}
     \binom{d}{2\ell}L^{d-2\ell}
     \E\left[
     \left(
     \sum_{i=L+1}^{L+n}R_i
    \right)^{2\ell}\,
    \right] \nonumber.
\end{align}
As analyzed in \citet[pg. 1274]{loffler2022computationally}
\begin{align}
         \E\left[
     \left(
     \sum_{i=L+1}^{L+n}R_i
    \right)^{2\ell}\,
    \right] 
    \leq 
          \,n^\ell \,(2\ell - 1)!! 
      \nonumber,
\end{align}
where $(2\ell -1 )!! = (2\ell - 1)(2\ell - 3)\cdots 3\cdot 1 =  
\frac{(2\ell)!}{2^\ell\, \ell!}$. Thus, 
%
%$(2\ell)! = 2^\ell\, \ell! \,(2\ell - 1)!! $, hence,
%
\begin{align}
    \E\left[\left(L +
     \sum_{i=L+1}^{L+n}R_i
    \right)^d
    \right] 
    &\leq L^d\sum_{\ell = 0}^{\lfloor{d}/{2}\rfloor}
     \frac{d!}{(d-2\ell)!\, \ell!}
     \left(\frac{n}{2L^{2}}\right)^\ell 
     %\nonumber \\
    = L^d\sum_{\ell = 0}^{\lfloor{d}/{2}\rfloor}
     \binom{d}{\ell}
     \left(\frac{n}{2L^{2}}\right)^\ell 
        \frac{(d-\ell)!}{(d-2\ell)!}.
     \nonumber 
\end{align}
Since $
        \frac{(d-\ell)!}{(d-2\ell)!}
 = (d-\ell)\cdots (d-2\ell+1)\leq d^\ell$, then
\begin{align}
\label{eq:rademacher_ub}
    \E\left[\left(L +
     \sum_{i=L+1}^{L+n}R_i
    \right)^d
    \right] 
   & \leq  L^d\sum_{\ell = 0}^{\lfloor{d}/{2}\rfloor}
     \binom{d}{\ell}
     \left(\tfrac{n\,d}{2L^{2}}\right)^\ell \nonumber\\
     &\leq L^d\sum_{\ell = 0}^{d}
     \binom{d}{\ell}
     \left(\tfrac{n\,d}{2L^{2}}\right)^\ell 
     =L^d\left(1 +\tfrac{nd}{2L^2} \right)^d
     = \left(L + \frac{nd}{2L}\right)^d.
\end{align}
Next, we analyze the first expectation 
$\E_{S,\tilde{S}}\big[ 
     \langle {\bm \mu}^S,  {\bm \mu}^{\tilde S} \rangle ^d \big]$
     in \eqref{eq:L_product_2}. 
Recall that $\mu^S_j = \sqrt{\tfrac{\lambda}{k}}\ind\{{j \in S}\}$. Hence,
$\langle {\bm \mu}^S,  {\bm \mu}^{\tilde S} \rangle = \tfrac\lambda k |S \cap \tilde{S}|$.
Denote by $G = |S \cap{\tilde S}|$ the size of the overlap between the  sets. 
Then $G$ is a hypergeometric random variable with the following probability distribution,
for $0\leq m\leq k,$
\begin{align}
    % \label{eq:HG}
    \Prob(G = m) = \binom{k}{m} \binom{p-k}{k-m} \binom{p}{k}^{-1}. \nonumber
\end{align}
From \citep[pg.268]{johnson2005univariate} this probability is upper bounded as follows
\begin{align}
%    \label{eq:HG_ub}
    \Prob(G = m) \leq \binom{k}{m} \left(\frac{k}{p-k}\right)^m.
        \nonumber
\end{align}
Therefore, 
\begin{equation}
\E_{S,\tilde{S}}\big[ 
     \langle {\bm \mu}^S,  {\bm \mu}^{\tilde S} \rangle ^d \big] 
     = \frac{\lambda^d}{k^d}
     \E\left[ 
     | S \cap{\tilde S} | ^d \right] = 
     \frac{\lambda^d}{k^d}\sum_{m=0}^k m^d \Prob(G=m)
     \leq
     \frac{\lambda^d}{k^d}
     \sum_{m=0}^k m^d \binom{k}{m} \left(\frac{k}{p-k}\right)^m.
            \label{eq:overlap_bound}
    \end{equation}

Inserting \eqref{eq:rademacher_ub} and \eqref{eq:overlap_bound} into  \eqref{eq:L_product_2} gives
\begin{equation}
    \|\mathcal{L}_{L+n}^D\|^2   \leq \sum_{d=0}^D \frac{\lambda^d}{d! k^d}    
    \left(L+\frac{nd}{2L}\right)^d \sum_{m=0}^k m^d \binom{k}{m} \left(\frac{k}{p-k}\right)^m \nonumber.
\end{equation}
In the above expression, since $d\leq D$, we may upper bound $nd/2L$ by $nD/2L$. Furthermore, 
changing the order of summation between the two sums above, gives
\begin{align}
% \label{eq:LDLR5}
    \|\mathcal{L}_{L+n}^D\|^2   
      &\leq 
      \sum_{m=0}^k  \binom{k}{m} \left(\frac{k}{p-k}\right)^m 
      \sum_{d=0}^D 
     \frac{1}{d!}
     \left(m\left(\frac{L\lambda}{k} + \frac{n\lambda D}{2Lk}\right) \right)^d
      \nonumber \\
      &\leq \sum_{m=0}^k  \binom{k}{m} \left(\frac{k}{p-k}\right)^m 
      \exp\left(m\left(\frac{L\lambda}{k} + \frac{n\lambda D}{2Lk}\right) \right)
      \nonumber.
\end{align}

According to the conditions of the Theorem, $L = \lfloor\frac{2\beta k}{\lambda}\log(p-k)\rfloor$
and $n = \lfloor c_2\frac{k^\gamma}{\lambda^2}\rfloor$
for some $c_2 > 0$ and $\gamma < 2$. 
Hence, for sufficiently large $p$, $L>\frac{\beta k}{\lambda}\log(p-k)$.
Then, inserting these values into the above gives
\begin{align}
     \|\mathcal{L}_{L+n}^D\|^2 &\leq \sum_{m=0}^k  \binom{k}{m} \left(\frac{k}{p-k}\right)^m 
      \exp\left(m\left(2\beta \log(p-k) + \frac{c_2 k^\gamma D}{2 \beta k^2 \log(p-k)}\right) \right)  \nonumber.
\end{align}
Setting $D = (\log(p-k))^2$, yields
\begin{align}
     \|\mathcal{L}_{L+n}^D\|^2 \leq \sum_{m=0}^k  \binom{k}{m} \left(\frac{k}{p-k}\right)^m 
      \exp
      \left(m \log(p-k)\left(2\beta + \frac{c_2}{2\beta}\frac{1}{k^{2-\gamma}}  \right) \right).
      \nonumber
\end{align}
Since $\gamma<2$,  for any fixed $\epsilon > 0$  it follows that
$ \frac{c_2k^\gamma}{2\beta k^2} \leq \epsilon$ 
for sufficiently large $k$.
Therefore
\[
\exp
      \left(m \log(p-k)\left(2\beta + \frac{c_2}{2\beta}\frac{1}{k^{2-\gamma}}  \right) \right)
      \leq \exp\left(m \log(p-k)\left(2\beta +\epsilon\right) \right) 
      =(p-k)^{m(2\beta +\epsilon)}
\]
Combining the above two displays gives
\begin{align}
    \|\mathcal{L}_{L+n}^D\|^2
    &\leq 
    \sum_{m=0}^k  \binom{k}{m} \left(\frac{k}{p-k}\right)^m 
    (p-k)^{m(2\beta +\epsilon)} \nonumber\\
     &=  \sum_{m=0}^k  \binom{k}{m} \left(\frac{k}{(p-k)^{1-2\beta-\epsilon}}\right)^m 
     =\left(1+\frac{k}{(p-k)^{1-2\beta-\epsilon}}\right)^k \nonumber.
\end{align}
Finally, recall that by the assumptions of the theorem, $ k =\lfloor c_1 p^\alpha \rfloor$ for some $\alpha \in (0,\frac{1}{2}), c_1>0$ and that $ \beta < \frac{1}{2} - \alpha$.
Choosing $\epsilon = \frac12 -\alpha -\beta>0$, gives 
\begin{align}
    \|\mathcal{L}_{L+n}^D\|^2
    &\leq \left(1+\frac{k}{(p-k)^{1/2 + \alpha - \beta}}\right)^k \nonumber.
\end{align}
Since $k=\lfloor c_1 p^\alpha\rfloor$, then as $p\to\infty$, the above behaves as
\[
\left(1+\frac{c_1}{p^{1/2-\beta}(1+o(1))}\right)^{c_1 p^\alpha}
\]
Therefore, for $\beta < \frac12-\alpha$, as $p\to\infty$, 
$ \|\mathcal{L}_{L+n}^D\|^2 \to O(1)$.
\hfill$\Box$

%###############################################################
%##################  Proof of thm 5  ###########################
%###############################################################
\section{SSL Algorithm}
\subsection{Proof of Theorem \ref{thm:ssl_algo}}

We prove the theorem, assuming that 
\texttt{LSPCA} is run with the correct sparsity $k$ and 
with a slightly smaller screening factor $\tilde \beta = \beta-\tilde\epsilon$
for a fixed (though potentially arbitrarily small) $\tilde\epsilon > 0$, which implies the first stage retains a bit
more than $p^{1-\beta}$ of the original $p$ variables.

Our proof relies on the following two key properties: 
(i) The set $S_L$ of size $\tilde p = \lceil p^{1-\tilde\beta} \rceil$, 
which is the output of the first step of \texttt{LSPCA}, 
contains 
nearly all indices of the true support;
(ii) since the reduced dimension $\tilde p \ll n$, the leading eigenvector of PCA is asymptotically consistent, 
thus allowing recovery nearly all support indices. 

The following lemma formally states the first property. Its proof appears in Appendix \ref{appendix:additional_lemmas}.
\begin{lemma}
    \label{lemma:cons_supp_labeled}
    Let $\{(\x_i, y_i)\}_{i=1}^L$ be $L$ i.i.d. labeled samples from the mixture model \eqref{eq:sparse_mixture_model}
    with a vector $\bm\mu$ of sparsity $k=\lfloor c_1 p^\alpha\rfloor$
    and nonzero entries $\pm \sqrt{\lambda/k}$. 
    Suppose that $L = \left\lceil\frac{2\beta k\log (p-k)}{\lambda}\right\rceil$, for some $\beta \in (0,1-\alpha)$.
    Let $\tilde\beta = \beta-\tilde\epsilon$ where $\tilde\epsilon > 0$ is sufficiently small
    so that $\tilde\beta > 0$. 
    Let $S_L$ be the indices of the top $\tilde p = \lceil p^{1-\tilde\beta} \rceil$
    entries of the vector
    $\w_L$ of Eq. \eqref{eq:labeled_mu_est}. 
    Then, for any  $\epsilon >0$,
    \begin{align}
    \label{eq:overlap}
        \lim_{p\to\infty}\Prob\left(\frac{|S \cap  S_L|}{k} \geq 1-\epsilon \right) = 1.
    \end{align}
    % where $ \tilde S _L$ defined in \eqref{eq: supp_L_alg}.
\end{lemma}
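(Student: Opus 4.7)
The plan is to reduce the claim to two coordinate-wise concentration statements by conditioning on the label counts, and then to separate signal from noise via a carefully chosen intermediate threshold. I would first control the class sizes via Chernoff (Lemma~\ref{lemma:chernoff}): with probability $1-o(1)$, both $L_+$ and $L_-$ lie in $[(1-\delta_p)L/2,\,(1+\delta_p)L/2]$ for some $\delta_p \to 0$. Conditioning on this event, the coordinates of $\w_L$ are mutually independent Gaussians with $(\w_L)_j \sim \mathcal{N}(2\mu_j,\sigma_L^2)$, where $\sigma_L^2 = 1/L_+ + 1/L_- = (4/L)(1+o(1))$. Substituting $L = \lceil 2\beta k \log(p-k)/\lambda\rceil$, the signal-to-noise ratio at every support coordinate is
\begin{equation*}
2\sqrt{\lambda/k}\,/\,\sigma_L \;=\; \sqrt{2\beta\log p}\,(1+o(1)).
\end{equation*}

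Since the assumed screening factor $\tilde\beta = \beta-\tilde\epsilon$ lies strictly below $\beta$, I would fix an intermediate exponent $\beta^\star \in (\tilde\beta,\beta)$, say $\beta^\star = \beta - \tilde\epsilon/2$, and take the threshold $\tau^\star = \sigma_L\sqrt{2\beta^\star\log p}$. For a null coordinate, the Gaussian tail bound (Lemma~\ref{lemma:Gaussian_tail_bounds}) yields $\Prob\bigl(|(\w_L)_j|>\tau^\star\bigr) \le p^{-\beta^\star}/\sqrt{\pi\beta^\star\log p}\,(1+o(1))$, so the expected number of null coordinates above $\tau^\star$ is $O\bigl(p^{1-\beta^\star}/\sqrt{\log p}\bigr)$, which is $o(\tilde p)$ since $\beta^\star > \tilde\beta$. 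A Chernoff bound on the corresponding sum of independent Bernoullis then gives that with probability $1-o(1)$, the number of null coordinates exceeding $\tau^\star$ is $o(\tilde p)$.

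For a signal coordinate $j\in S$, the event $|(\w_L)_j|<\tau^\star$ forces a Gaussian deviation of at least $(\sqrt{2\beta}-\sqrt{2\beta^\star})\sqrt{\log p}(1+o(1))$ standard deviations, whose probability is at most $p^{-c+o(1)}$ for $c = (\sqrt\beta-\sqrt{\beta^\star})^2>0$. Hence the expected fraction of missed signals tends to zero, and by a Chernoff bound over independent Bernoullis, for any fixed $\epsilon>0$ at most $\epsilon k$ support coordinates fall below $\tau^\star$ with probability $1-o(1)$. Combining the two concentration steps, the set $A = \{j:|(\w_L)_j|>\tau^\star\}$ contains at least $(1-\epsilon)k$ indices of $S$ and has total size at most $k + o(\tilde p)$. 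Since $\tilde\beta < \beta < 1-\alpha$ gives $\alpha < 1-\tilde\beta$ and hence $k = o(\tilde p)$, we have $|A|\le\tilde p$ for all large $p$, so $A\subseteq S_L$ and therefore $|S\cap S_L| \ge (1-\epsilon)k$.

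The main obstacle is the joint calibration of $\beta^\star$: it must lie strictly above $\tilde\beta$ so that the expected number of false positives is $o(\tilde p)$, and strictly below $\beta$ so that the expected fraction of missed signals tends to zero. The gap $\tilde\epsilon > 0$ between $\tilde\beta$ and $\beta$ is precisely the slack that makes this feasible; if $\tilde\beta$ were allowed to equal $\beta$, both tail estimates would degrade and the argument would break. Beyond this, the conditioning on $(L_+,L_-)$ is routine because the Chernoff good event has probability $1-o(1)$, and the conditional independence of the coordinates of $\w_L$ lets one invoke standard Bernoulli concentration throughout.
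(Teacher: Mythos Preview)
Your proposal is correct and follows essentially the same route as the paper: fix an intermediate threshold, use Gaussian tail bounds to show that with high probability almost all support coordinates of $\w_L$ exceed it while only $o(\tilde p)$ nulls do, and conclude via Chernoff that the top-$\tilde p$ set $S_L$ captures a $(1-\epsilon)$ fraction of $S$. The differences are cosmetic: the paper works with the simplified statistic $\w_L=\tfrac1L\sum_i y_i\x_i$ in the symmetric model (so no conditioning on $(L_+,L_-)$ is needed), and it parametrizes the threshold as $\tau=\sqrt{\lambda/k}\,T$ with $T=1-(2\beta\log(p-k))^{-1/4}$ rather than via a fixed $\beta^\star\in(\tilde\beta,\beta)$; your fixed-$\beta^\star$ choice is arguably cleaner and makes the role of the gap $\tilde\epsilon$ more transparent.
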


\begin{proof}[Proof of Theorem \ref{thm:ssl_algo}]
As described above, we run \texttt{LSPCA} with $\tilde \beta = \beta - \tilde \epsilon$, and denote by $S_L$ the set found by the first step of the algorithm.
By Lemma \ref{lemma:cons_supp_labeled} this set satisfies Eq. \eqref{eq:overlap}.
Denote by ${\Sigma}|_{S_L}$ and $\hat{\Sigma}|_{S_L}$  the population and sample covariance matrices restricted to the set of indices $S_L$.
Note that,
\begin{align}
    {\Sigma}|_{S_L} =  \bm \mu|_{S_L} \bm \mu|_{S_L} ^\top + 
    {\bf I}_{\tilde p}. \nonumber
\end{align}
Hence, up to sign, the leading eigenvector of ${\Sigma}|_{S_L}$ is $ \tfrac{\bm \mu|_{S_L}}  {\|\bm \mu|_{S_L}\|}$.
Denote by $\hat{\bf v}_{\text{PCA}}$ the unit norm leading eigenvector of the sample covariance  $\hat{\Sigma}|_{S_L}$.
We now show that these two eigenvectors are close to each other.
Indeed, since $\tilde   \beta> 1-{\alpha \gamma}$,
we have $n/\Tilde{p} = {p^{\alpha \gamma}}/{\lambda^2 (p^{1-\beta^*} )}\to \infty$, as $p \to \infty$. 
Then, combining this observation with Theorem 2.3 in \citep{nadler2008finite}, implies that with probability tending to $1$, 
\begin{align}
    \lim_{p\to \infty}\left|\left\langle \hat{\bf v}_{\text{PCA}},\,  \frac{\bm  \mu|_{S_L}}{ \|\bm \mu|_{S_L}\|} \right\rangle \right|= 1.  \nonumber
\end{align}
Since $\bm \mu \in \mathbb{R}^p$ is a $k$-sparse with non-zero entries $\pm \sqrt {\tfrac{\lambda}{k}}$,  Eq. \eqref{eq:overlap} implies that $ \|\bm \mu|_{S_L}\| \to \sqrt{\lambda}$, as $p\to \infty$.
Hence,
\begin{align}
\label{eq:inn_product_mu_vpca}
    \lim_{p\to \infty}\left|\left\langle \hat{\bf v}_{\text{PCA}},\,  \frac{\bm  \mu|_{S_L}}{\sqrt\lambda} \right\rangle \right|= 1,
\end{align}
which implies that with the correct sign,
\begin{align}
% \label{eq:norm_2_error}
    \lim_{p\to \infty}\left\| \hat{\bf v}_{\text{PCA}} - \tfrac{\bm \mu|_{S_L}}{\sqrt \lambda}\right\|= 0. \nonumber
\end{align}
From now we extend $\hat{\bf v}_{\text{PCA}}$ which originally had dimension $|S_L$, to a $p-$dimensional vector with zeros in $S_L^c$. Hence, since $\|\bm \mu|_{S_L ^c}\| \to 0$, it follows 
% \[
%     \lim_{p\to \infty}\left|\left\langle \hat{\bm \mu}^{\text{PCA}},\,  \bm \mu \right\rangle \right|= 1,
% \]
\begin{align}
\label{eq:norm_2_error}
    \lim_{p\to \infty}\left\| \hat{\bf v}_{\text{PCA}} - \frac{\bm \mu}{\sqrt{\lambda}}\right\|= 0. 
\end{align}
Next, let us assume by contradiction that there exist $\epsilon_0,\delta_0 \in (0,1)$ such that for every $p\in \mathbb{N}$, with probability at least $\delta_0$ 
\begin{align}
    \frac{|S\cap\hat{S}|}{k}< 1-\epsilon_0. \nonumber
\end{align}
where $\hat S$ is the set of top-k coordinates of $|\hat{\bf v}_{\text{PCA}}|$.
Combining this assumption and Eq. \eqref{eq:norm_2_error}, with probability at least $\delta_0$
\begin{align}
\label{eq:norm_S_hat}
   \lim_{p\to \infty} \left\| \hat{\bf v}_{\text{PCA}} |_{\hat S }\right\|^2 = 
   \lim_{p\to \infty} 
   \frac{1}{\lambda}\left\| {\bm \mu} |_{\hat S }\right\|^2 
    = \lim_{p\to \infty} \frac{1}{\lambda}\left\| {\bm \mu} |_{\hat S \cap S}\right\|^2
    = \lim_{p\to \infty} \frac{|\hat S \cap S|}{k}
    \leq 1-\epsilon_0,
\end{align}
where the last inequality follows from the above assumption and $|\mu_j| = \sqrt{\lambda/k}$, for all $j \in S$.

Next, from \eqref{eq:overlap} and \eqref{eq:norm_2_error} it follows that for any subset $T$ that satisfies $S_L\cap S \subset T \subset S_L$ and $|T|=k$, 
\begin{equation}
\label{eq:norm_T}
\lim_{p\to \infty} \left\| \hat{\bf v}_{\text{PCA}} |_{T}\right\|^2 = \lim_{p\to \infty} \frac{1}{\lambda}\left\| {\bm \mu} |_{T}\right\|^2 = \lim_{p\to \infty}\frac{|S\cap S_L|}{k} = 1.
\end{equation}
However, since $\hat S$ is the set of the top-$k$ indices of $|\hat{\bf v}_{\text{PCA}}|$, for any $|T|=k, T\subset S_L$
\[
\left\|  \hat{\bf v}^{\text{PCA}} |_{\hat S } \right\|
\geq \left\|  \hat{\bf v}^{\text{PCA}} |_{T } \right\|,
% \quad \forall \, |T| = k,\, T\subset S_L,
\]
which is a contradiction to \eqref{eq:norm_S_hat} and \eqref{eq:norm_T}.
Hence, for any $\epsilon>0$, as $p$ tends to infinity, $\Prob\Big(\tfrac{|S \cap \hat{S}|}{k} \geq 1-\epsilon\Big)\to 1$, which  completes the first part of the proof. 

The second part of the proof follows from combining \eqref{eq:inn_product_mu_vpca} and Lemma \ref{lemma:excess_risk}.

\end{proof}

%%%%%%%%%%%%%%%%%%%%%%%%%%%%%%%%%%%%%%%%%%%%%%%%%%%%%%%%%%%%%
%%%%%%%%%%%%%%%%%%%% APPENDIX %%%%%%%%%%%%%%%%%%%%%%%%%%%%%%%
%%%%%%%%%%%%%%%%%%%%%%%%%%%%%%%%%%%%%%%%%%%%%%%%%%%%%%%%%%%%%
\section{Proofs of Additional Lemmas}
\label{appendix:additional_lemmas}

\begin{proof}[Proof of Lemma \ref{lemma:ub_deteminant}]

First, note that the mean $\bm \nu_x = \E \left[\Tx \right] = \E_S \left[ \bm \theta^S \right]$ is given by
\begin{align}
    (\nu_x)_j = \begin{cases}
        \frac{1}{\sqrt{k}} & 1\leq j \leq k-1 \nonumber\\
         \frac{1}{\sqrt{k}} \frac{1}{p-k+1} & k \leq j \leq p.
    \end{cases}
    % \label{eq:mean_x}
\end{align}
To derive an explicit expression for the covariance matrix $\Sigma$ we use the law of total expectation
    \begin{align}
    \label{eq:cov_matrix_labeled_1}
        \Sigma = 
        % \E \left[ (\bm x-\bm \nu_x)(\bm x-\bm \nu_x)^T\right] = 
        {\frac{1}{p-k+1}
        \sum_{j=k}^{p} \E \left[ (\Tx-\bm \nu_x)(\Tx-\bm \nu_x)^\top \,\middle|\, S=s_j\right]}. 
    \end{align}
    where $s_j = [k-1]\cup \{j\}$ is a member of $\mathbb{\tilde S}$.
    By definition, 
    \begin{align}
    \label{eq:diff_1}
        \left((\Tx-\bm \nu_x)\,|\,S=s_j\right) 
        = \left(\bm \theta^{S_j} - \bm \nu_x\right) + \sigma  \Tilde{\bm \xi}
        =\frac{1}{\sqrt{k}}\bm e_j - \frac{1}{\sqrt{k}(p-k+1)} \bm u   + \sigma  \Tilde{\bm \xi}
    \end{align}
    where 
    $\bm u = \begin{bmatrix} \bm 0^\top _{k-1}\,,\, \bm 1^\top _{p-k+1} \end{bmatrix}^\top$ 
    and $\{\bm e_j\}_{j\in [p]}$ denote the standard basis of $\mathbb{R}^p$.
    Since $ \Tilde{\bm \xi}$ is independent of $S$, inserting \eqref{eq:diff_1} into \eqref{eq:cov_matrix_labeled_1} gives
    \begin{align}
        \Sigma &= \frac{1}{p-k+1}
        \sum_{j=k}^{p} \E \left[ 
        \left(\frac{1}{\sqrt{k}}
        \left(\bm e_j - \frac{1}{(p-k+1)} \bm u  \right)+ \sigma  \Tilde{\bm \xi}\right)
        \left(\frac{1}{\sqrt{k}}
        \left(\bm e_j - \frac{1}{(p-k+1)} \bm u  \right)   + \sigma  \Tilde{\bm \xi}\right)^\top
        \right] \nonumber \\
        % &= \frac{1}{k(p-k+1)}
        % \sum_{j=k}^{p}  
        % \left(\bm e_j - \frac{1}{(p-k+1)} \bm u  \right) 
        % \left(\bm e_j - \frac{1}{(p-k+1)} \bm u  \right)^\top
        % + \sigma^2 I_p\nonumber \\
        &=\frac{1}{k(p-k+1)^2} 
        \left( 
        (p-k+1)\sum_{j=k}^p \bm e_j \bm e_j^T
        -\left(\bm u\sum_{j=k}^p\bm e_j^\top
         + \sum_{j=k}^p \bm e_j \bm u^\top
         \right) 
         + \bm u \bm u^\top
         \right) + \sigma^2 \mathbf{I}_p
        \nonumber
    \end{align}
    %%%%%%%%%%%%%%%%%%%%%%%%%%%%%%%%%%%%%%%%
    Note that $\sum_{j=k}^p\bm e_j = \bm u$. Thus,
    \begin{align}
        \Sigma 
        &= \frac{1}{k(p-k+1)^2}  
        \left( 
        (p-k+1)\sum_{j=k}^p \bm e_j \bm e_j^T
        -\bm u \bm u^\top
         \right) + \sigma^2 \mathbf{I}_p
        \nonumber \\
        &\preceq  
        \frac{1}{k(p-k+1)}\sum_{j=k}^p \bm e_j \bm e_j^T + \sigma^2 \mathbf{I}_p 
        \nonumber \\
        %%%%%%%%%%%%%%%%%%%%%%%%%%%%%%%%%%%%%%%%
        &=   \frac{1}{k(p-k+1)}
             \begin{bmatrix}
                   \bm 0_{(k-1)\times (k-1)} &  \bm 0_{(k-1)\times (p-k+1)}\\
                   \bm 0_{(p-k+1)\times (k-1)} &   \bf{I}_{(p-k+1)\times (p-k+1)}
             \end{bmatrix}
              + \sigma^2 \mathbf{I}_p. \nonumber
    \end{align}
    Therefore, 
    \begin{align}
        \log\text{det}\left(\Sigma\right) &\leq \log\left((\sigma^2)^p \left(1+ \frac{1}{k(p-k+1)\sigma^2} \right)^{p-k+1}\right) \nonumber\\
        %&
        &= p \log\left( \sigma^2\right) + (p-k+1)\log\left(1+ \frac{1}{k(p-k+1)\sigma^2} \right)    \nonumber.
    \end{align}
\end{proof}

%%%%%%%%%%%%%%%%%%%%%%%%%%%%%%%%%%%%%%%%%%%%%%%%%%%%%%%%%%%%%%%%%
%%%%%%%%%%%%%%%%%%%%%%%%%%%%%%%%%%%%%%%%%%%%%%%%%%%%%%%%%%%%%%%%%
%%%%%%%%%%%%%%%%%%%%%%%%%%%%%%%%%%%%%%%%%%%%%%%%%%%%%%%%%%%%%%%%%
\begin{proof}[Proof of Lemma \ref{lemma:mutual_info_ub_unlabeled}]
By definition $I(\x;S) = H(\x) - H(\x|S)$. Hence, we first derive expressions for these two terms.
Since $\x$ follows the mixture model 
\eqref{eq:sparse_mixture_model}, it is of the form $\x = y\bm \mu^S + \bm{\xi}$. Given $S=s$, $\bm \mu^s$ is deterministic with $\mu^s_j = {\sqrt{\frac\lambda k}}\, \ind\{j \in s\}$.
%%%%%%%%%%%%%%%%%%%%%%%%%%%%%%%%%%%%%%%%%%%%%%%%%%%%%%%%%%%%%%%
%%%%%%%%%%%%%%%%%%%%%%%%%% H (x|S) %%%%%%%%%%%%%%%%%%%%%%%%%%%%
%%%%%%%%%%%%%%%%%%%%%%%%%%%%%%%%%%%%%%%%%%%%%%%%%%%%%%%%%%%%%%%
Thus, the  vector $(\x | S=s)$ is distributed as a mixture of two  Gaussians with centers $\pm \bm \mu^s$ and identity covariance matrix. Its density is
\begin{align}
\label{eq:cond_pdf}
         f(\x|S=s)  = \frac{1}{(2\pi)^{p/2}} \left( \frac{e^{-\|\x-\bm\mu^s\|^2 /2} + e^{-\|\x+\bm\mu^s\|^2 /2}}{2}\right)  = 
         \frac{e^{-\frac{\|\x\|^2 + \lambda}{2}}}{(2\pi)^{p/2}} 
         \left(\frac{e^{-\langle \x,\bm\mu^s\rangle} + e^{\langle \x,\bm\mu^s\rangle}}{2}
    \right).
\end{align}
By the definition of conditional entropy, 
\begin{align}
  H(\x|S) =  %\nonumber \\
  -  \sum_{s \in \Tilde{\mathbb{S}}}  P(S=s) \int 
  f\left(\x |S=s\right) \log f(\x|S=s)  d\x. \nonumber
\end{align}
Given the structure of the vectors $\bm\mu^s$ for all $s\in \tilde{\mathbb{S}}$, all the integrals in the sum above give the same value. 
Therefore, it suffices to consider a single set 
$s_0 = \{1, \ldots,k\}$,
\begin{align}
\label{eq:cond_entropy_unlabeled}
    H(\x|S) = - \int f\left(\x |S=s_0\right) \log f(\x|S=s_0) d\x.
\end{align}
% where $s_0 = \{1, \ldots,k\}$. 
Inserting \eqref{eq:cond_pdf} into \eqref{eq:cond_entropy_unlabeled}, gives
\begin{align}
     H(\x|S) &= -\E_{\x|s_0} \left[ \log \left( \frac{e^{-\frac{\|\x\|^2 + \lambda}{2}}}{(2\pi)^{p/2}} 
    \cosh (\langle \x,\bm\mu^{s_0}\rangle)
    %\left(\frac{e^{-\langle \x,\bm\mu^{s_0}\rangle} + e^{\langle \x,\bm\mu^{s_0}\rangle}}{2}
    % \right)
    \right) \right] \nonumber .
\end{align}
Note that for any $s \in \mathbb{\tilde S}$, $\E\left[\|\x\|^2\middle| \,\, S=s\right] = \lambda+p$.
Thus, 
\begin{align}
\label{eq:cond_entropy_unlabeled2}
     H(\x|S) &= C(p,\lambda)  
    - \E_{\x|s_0} \left[ \log  
     \cosh (\langle \x,\bm\mu^{s_0}\rangle)
    % \left(\frac{e^{-\langle \x,\bm\mu^{s_0}\rangle} + e^{\langle \x,\bm\mu^{s_0}\rangle}}{2}\right)
    \right].
\end{align}
where $C(p,\lambda) = \lambda +\frac{p}2(1+\log(2\pi))$. 

Consider the following two independent random variables, 
$w = \frac{1}{\sqrt{k}}\sum_{j=1}^{k-1}\xi_j \sim N\left(0, \frac{k-1}{k}\right)$ and $\xi = \xi_k \sim N(0,1)$. 
For $S=s_0$, 
% the inner-product between $\x$ and $\bm \mu^{s_0}$ can be written in terms of these random variables as follows, 
\begin{align}
% \label{eq:correlation}
    \langle \x,\bm \mu^{s_0} \rangle &= \langle y \bm\mu^{s_0} + \bm \xi,\bm\mu^{s_0}\rangle = 
    {\lambda} y + \sqrt{\lambda}w + \frac{\sqrt{\lambda}}{\sqrt{k}}\xi. \nonumber
\end{align}
Inserting the above into \eqref{eq:cond_entropy_unlabeled2} gives
\begin{align}
% \label{eq: cond_entropy_unlabeled3}
    H(\x|S) &= C(p,\lambda)
    -\E \left[ \log  \cosh ({\lambda y + \sqrt{\lambda}w + \sqrt{\lambda/k}\xi})
    % \left( \frac{e^{- \left({\lambda y + \sqrt{\lambda}w + \sqrt{\lambda/k}\xi}\right)}
    % + e^{{\lambda y + \sqrt{\lambda}w + \sqrt{\lambda/k}\xi}}}{2}\right)
    \right], \nonumber
\end{align}
where the expectation is over $w,y$ and $\xi$.
Note that $w,y$ and $\xi$ are independent random variables with zero mean and symmetric distributions around zero.
Further, recall that $y$ attains the values $\pm 1$ with equal probabilities.
Hence, by a symmetry argument we may set $y=1$ and take the expectation 
only over $w$ and $\xi$. This gives
\begin{align}
\label{eq:cond_entropy_unlabeled4}
H(\x|S) = \lambda + \frac{p}{2}\left(1+\log2\pi\right) 
    -\E \left[ \log \cosh\left({\lambda + \sqrt{\lambda}w + \sqrt{\lambda/k}\xi}\right)
    \right].
\end{align}
% 
%%%%%%%%%%%%%%%%%%%%%%%%%%%%%%%%%%%%%%%%%%%%%%%%%%%%%%%%%%%%%%%
%%%%%%%%%%%%%%%%%%%%%%%%%%% H (x) %%%%%%%%%%%%%%%%%%%%%%%%%%%%%
%%%%%%%%%%%%%%%%%%%%%%%%%%%%%%%%%%%%%%%%%%%%%%%%%%%%%%%%%%%%%%%

Next, we derive an expression for $H(\x)$. Recall that $\x$ depends on
a vector $\mu^S$ with $S$ distributed uniformly at random from $\tilde{\mathbb{S}}$
of size $p-k+1$.
Note that $\tilde{\mathbb{S}} = \bigcup_{\ell=k}^p s_\ell$
where $s_\ell = [k-1]\cup \{\ell\}$. 
By the law of total probability 
\begin{align}
    f(\x) = 
     \frac{1}{|\mathbb{\Tilde{S}}|}
     \sum_{s \in \mathbb{\Tilde{S}}} f(\x|S=s) = \frac{1}{p-k+1}
     \sum_{\ell=k}^p f(\x|S=s^\ell) .
     \nonumber 
\end{align}
%%%%%%%%%%%%%%%%%%%%%%%%%%%%%%%%%%%%%%%%
%%%%%%%%%%%%%%%%%%%%%%%%%%%%%%%%%%%%%%%%
Using the same analysis as before, it follows that
\begin{align}
% \label{eq:gmm_pdf}
    f(\x) = 
    \frac{e^{-\frac{\|\x\|^2 + \lambda}{2}}}{(2\pi)^{p/2}}
    \cdot
    \frac{1}{p-k+1}
     \sum_{\ell=k}^p
     \frac{e^{-\langle \x,\bm\mu^{s^\ell} \rangle} + e^{\langle \x,\bm\mu^{s^\ell}\rangle}}{2}. \nonumber
\end{align}
Hence,
\begin{align}
\label{eq:entropy_unlabeled}
    H(\x) = -\E \left[\log f(\x)\right] = C(p,\lambda) - \E \left[ \log\left( \frac{1}{p-k+1}
     \sum_{\ell=k}^p
     \frac{e^{-\langle \x,\bm\mu^{s^\ell} \rangle} + e^{\langle \x,\bm\mu^{s^\ell}\rangle}}{2}\right)\right] 
\end{align}
%%%%%%%%%%%%%%%%%%%%%%%%%%%%%%%%%%%%%%%%
%%%%%%%%%%%%%%%%%%%%%%%%%%%%%%%%%%%%%%%%

We now simplify the expectation in \eqref{eq:entropy_unlabeled}.
First, by a symmetry argument, we may assume the label that corresponds to
$\x$ is simply $y=1$. Let us
simplify the inner product $\langle \x, \bm \mu^{s_\ell}\rangle$.
\begin{align}
    \langle \x,\bm \mu^{s_\ell} \rangle &= \langle \sqrt{\lambda} \bm\mu^S + \bm\xi,\bm \mu^{s_\ell} \rangle 
    %%%%%%%%%%%%%%%%%%%%%%%%%%%%%%%%%%%%%%%%
    =  {\lambda}  \left(\frac{k-1}{k} + \frac{\ind\left\{ \ell \in S\right\}}{k}\right) + \sqrt{\lambda}w + \frac{\sqrt{\lambda}}{\sqrt{k}}\xi_\ell.  \nonumber
    % \label{eq:inner_product1}
\end{align}
Hence, the expectation above can be written as
\[
\E \left[ \log\left( \frac{1}{p-k+1}
     \sum_{\ell =k}^p
     \frac{e^{-\left(\lambda  \frac{(k-1)}{k} + {\sqrt{\lambda}w} + \sqrt{\lambda/k} \xi_\ell + \frac{{\lambda}}{k}\ind\{\ell \in S\}\right)}
     + e^{\left(\lambda  \frac{k-1}{k} + \sqrt{\lambda}w +\sqrt{\lambda/k} \xi_\ell+ \frac{{\lambda}}{k}\ind\{\ell \in S\}\right)}}{2}
     \right)\right]
\]
where the expectation is over $S$, $w$ and $\{\xi_\ell\}_{\ell=k}^p$. 
Since $S$ is uniformly distributed over $\tilde{\mathbb{S}}$ and
for any $S=s^j$ the expectation is the same, we may thus set $S=s^k = [k]$, and take the expectation only over $w$ and $\{\xi_\ell\}_{\ell=k}^p$.

Next, we decompose the sum inside the logarithm as $S_1+S_2$, where  
\begin{eqnarray}
S_1 & = &    \frac{\exp\left(-\lambda \frac{k-1}k - \sqrt{\lambda}w\right)}{2}\cdot \frac{1}{p-k+1}
\sum_{\ell=k}^p e^{-\sqrt{\frac{\lambda}{k} } \xi_\ell-\frac{{\lambda}}k \ind\{\ell = k\} } \nonumber \\
S_2 & = & 
    \frac{\exp\left(\lambda \frac{k-1}k + \sqrt{\lambda}w\right)}{2}\cdot \frac{1}{p-k+1}\sum_{\ell=k}^p e^{\sqrt{\frac{\lambda}{k} } \xi_\ell+\frac{{\lambda}}k \ind\{\ell = k\} }
\nonumber 
    \end{eqnarray}
We now analyze each of these terms in an asymptotic setting
where $p,k \to \infty$ and $k = o(p)$. 
To this end we write the sum in $S_2$ as follows
\[
\frac{1}{p-k+1}
\sum_{\ell=k}^p e^{\sqrt{\frac{\lambda}{k} } \xi_\ell}
+
\frac1{p-k+1} e^{\sqrt{\frac{\lambda}{k}}\xi_k }\left(e^{
\frac{{\lambda}}k } -1 \right)
\]
By the central limit theorem, asymptotically, the first sum may be written as
$e^{\lambda/2k} + O_P(\frac{\sqrt{\lambda}}{\sqrt{k p}})$, which follows from the
fact that $\mathbb{E}_{Z\sim \mathcal N(0,1)} [e^{t Z}] = e^{t^2/2}$. 
The second term above is $O_P(\frac{\sqrt{\lambda}}{k p})$, which is negligible w.r.t. to the
previous $O_P$ term. 
Note that its expectation is finite and given by $\tfrac{e^{\lambda/2k}\left(e^{
\frac{{\lambda}}k } -1 \right)}{p-k+1}$.
The sum $S_1$ can be analyzed similarly. In summary we obtain that
\[
S_1 +S_2 = e^{\lambda/2k} \cdot \cosh\left(\lambda \frac{k-1}k + \sqrt{\lambda} w\right) \cdot 
    \left(1 + O_P\left(\frac{\sqrt{\lambda}}{\sqrt{kp}}\right) \right)
\]
Hence, the expectation above  simplifies to
\[
\E[\log(S_1+S_2)] = \frac{\lambda}{2k} + \E\left[\log 
\cosh
\left(\lambda \frac{k-1}k + \sqrt{\lambda} w\right)\right] + O\left(\frac{\sqrt{\lambda}}{\sqrt{kp}}\right)
\]

Inserting the above into \eqref{eq:entropy_unlabeled} gives
 \begin{align}
    H(\x) = C(p,\lambda) - 
    \E\left[\log 
\cosh
\left(\lambda \frac{k-1}k + \sqrt{\lambda} w\right)\right] + O\left(\frac{\sqrt{\lambda}}{\sqrt{kp}}\right)
        \label{eq:Hx_cosh}.
\end{align}

%%%%%%%%%%%%%%%%%%%%%%%%%%%%%%%%%
%%%%%%%%%%%%%%%%%%%%%%%%%%%%%%%%%

Next, we derive an upper-bound for the mutual information $I(\x;S) = H(\x) - H(\x|S)$. Combining \eqref{eq:cond_entropy_unlabeled4} and \eqref{eq:Hx_cosh}, the constant 
$C(p,\lambda)$ cancels out, and we obtain 
\begin{align}
% \label{eq: mutual_info_ub_1}
    I(\x;S) &=\E \left[ \log  \cosh\left(
    \lambda + \sqrt{\lambda}w + \sqrt{\frac{\lambda}{k}}z
    \right) - \log \cosh\left(\lambda \frac{(k-1)}{k} + {\sqrt{\lambda}w}\right)
    \right] - \frac{\lambda}{2k} + O\left(\frac{\sqrt{\lambda}}{\sqrt{kp}}\right). \nonumber \\
    &=\E\left[  g_w\left(\sqrt{\frac{\lambda}{k}}z \right) - g_w \left(-\frac{\lambda}{k} \right) \right] - \frac{\lambda}{2k} +
     O\left(\frac{\sqrt{\lambda}}{\sqrt{kp}}\right).
        \label{eq:mutual_info_ub_3}
\end{align}
% %%%%%%%%%%%%%%%%%%%%%%%%%%%%%%%%%%%%%%%%
% %%%%%%%%%%%%%%%%%%%%%%%%%%%%%%%%%%%%%%%%
where $g_w(t) =  \log  \cosh \left( \lambda + \sqrt{\lambda}w + t\right)$. 
For future use, note that $\frac{d}{dt} g_w(t) = \tanh(\lambda + \sqrt{\lambda}w + t)$.
To upper bound $I(\x;S)$ we split the expectation in \eqref{eq:mutual_info_ub_3} into two parts as follows, 
\begin{align}
\label{eq:mutual_info_ub_4}
     I(\x;S) = \E \left[ g_w\left(\sqrt{\frac{\lambda}{k}}z \right) - g_w(0)\right] + 
     \E \left[g_w(0) - 
    g_w \left(-\frac{\lambda}{k} \right)
    \right] -\frac{\lambda}{2k} + 
    O\left(\frac{\sqrt{\lambda}}{\sqrt{kp}}\right).
    \end{align}
The Taylor expansion of $g_w(t)$ is given by
\begin{align}
    % \label{eq:logcosh_Taylor}
    g_w(t) = g_w(0) + t \tanh(\lambda + \sqrt{\lambda}w) + \frac{t^2}{2}(1 -  \tanh^2(\lambda + \sqrt{\lambda}w)) + \frac{t^3}{3!}g_{w}^{(3)}(\tau_t). \nonumber
\end{align}
Here $\tau_t$ is some number between $0$ and $t$, and $g_{w}^{(3)}(\tau_t) = -2\tanh(\lambda + \sqrt{\lambda}w + \tau_t) + 2\tanh^3(\lambda + \sqrt{\lambda}w + \tau_t)$.
Note that for all $t\in \mathbb{R}$, 
\[
\frac{t^3}{3!}g_{w}^{(3)}(\tau_t)\leq \frac{2|t^3|\tanh(\lambda + \sqrt{\lambda}w + \tau_t)}{3!} (1 -  \tanh^2(\lambda + \sqrt{\lambda}w +\tau_t))\leq \frac{2|t^3|}{3!}.
\]
Since $z,w$ are independent and $\E[z]=0$, it follows that
\begin{align}
\label{eq:first_term}
     &\E \left[ g_w\left(\sqrt{\frac{\lambda}{k}}z \right) - g_w(0)\right]  \nonumber\\
     \leq& \E \left[ \sqrt{\frac{\lambda}{k}}z \tanh\left( \lambda +\sqrt{\lambda}w\right) + \frac{\lambda}{2k}z^2 \left(1 - \tanh^2 \left(\lambda +\sqrt{\lambda}w \right)\right) +  \frac{2\lambda^{3/2}|z^3|}{3! k^{3/2}} \right] \nonumber \\
     =& \frac{\lambda}{2k}\E \left[1 - \tanh^2\left(\lambda +\sqrt{\lambda}w \right)\right] + O\left(\sqrt{\frac{\lambda^3}{k^3}}\right).  
\end{align}

For the second term in \eqref{eq:mutual_info_ub_4}, for any value of $w$,  
by the mean value theorem, it follows that
\begin{align}
% \label{eq: sec_term}
    g_w(0) - 
    g_w \left(-\frac{\lambda}{k} \right)
    = \frac{\lambda}{k}\tanh (\lambda + \sqrt{\lambda}w +\zeta_w) 
    \leq \frac{\lambda}{k}\tanh (\lambda + \sqrt{\lambda}w) 
    \nonumber
\end{align}
where $\zeta_w \in [-\lambda/k,0]$, and the last inequality follows
from the fact that  
$\tanh(\cdot)$ is a monotonically increasing function. Hence, 
\begin{align}
\label{eq:sec_term2}
    \E \left[g_w(0) - 
    g_w \left(-\frac{\lambda}{k} \right)
    \right]\leq 
    \frac{\lambda}{k}\E\left[\tanh (\lambda + \sqrt{\lambda}w )\right].
\end{align}
Hence, combining \eqref{eq:mutual_info_ub_4},  \eqref{eq:first_term} and \eqref{eq:sec_term2}, yields
\begin{align}
\label{eq:mutual_info_ub_5}
    I(\x;S) &\leq \frac{\lambda}{2k}\E \left[1 - \tanh^2\left(\lambda +\sqrt{\lambda}w \right)\right] + \frac{\lambda}{k}\E\left[\tanh (\lambda + \sqrt{\lambda}w )\right] 
    - \frac{\lambda}{2k} +
    O\left(\sqrt{\frac{\lambda^3}{k^3}} + 
    \frac{\sqrt{\lambda}}{\sqrt{kp}}
   \right) \nonumber \\
    &=\frac{\lambda}{k}\E \left[\tanh (\lambda + \sqrt{\lambda}w ) - \frac{1}{2}\tanh^2\left(\lambda +\sqrt{\lambda}w \right)\right]
     +O\left(\sqrt{\frac{\lambda^3}{k^3}} + 
     \frac{\sqrt{\lambda}}{\sqrt{kp}}
     \right).
\end{align}

{
By Lemma \ref{lemma:tanh_ub}, 
the expectation above can be bounded as follows:
\[
 \E \left[\tanh (\lambda + \sqrt{\lambda}w ) - \frac{1}{2}\tanh^2\left(\lambda +\sqrt{\lambda}w \right)\right] 
 \leq  
  \frac{1}{2}\left(\lambda + 
  % \frac{\lambda^2}{3} +
  3\sqrt{\lambda/k} \right).
 %last version
 % \frac{1}{2}\left( \lambda - \frac{5\lambda^2 \left( 1- \lambda\right)}{6}\frac{k-1}{k}\right) + 3\sqrt{\lambda/k}.
\]
Hence, inserting  this upper bound into \eqref{eq:mutual_info_ub_5}, gives
\begin{align}
    I(\x;S) &\leq 
        \frac{1}{2}\frac{\lambda^2}{k} 
        +O\left( 
        \frac{\sqrt{\lambda}}{\sqrt{kp}}+ \sqrt{\lambda^3/k^3}\right)
        \nonumber \\
\end{align}
Asymptotically, 
as $p,k\to\infty$ with $k/p\to 0$ then $\frac1k \gg \frac{1}{\sqrt{kp}}$. 
Thus, the term 
% $\frac{5\lambda^3}{12k} \frac{k-1}{k}$
$\frac{1}{2}\frac{\lambda^2}{k}$ 
is asymptotically larger than the $O(\cdot)$ terms
in the display above. 
Hence, the inequality \eqref{eq:mi_unlabeled_LB} of the lemma follows.
}
\end{proof}

%%%%%%%%%%%%%%%%%%%%%%%%%%%%%%%%%%%%%%%%%%%%%%%%%%%%
%%%%%%%%%%%%%%%%%%%% Lemma %%%%%%%%%%%%%%%%%%%%%%%%%
%%%%%%%%%%%%%%%%%%%%%%%%%%%%%%%%%%%%%%%%%%%%%%%%%%%%
\begin{proof}[Proof of Lemma \ref{lemma:cons_supp_labeled}]

    The main idea of the proof is to show that with a suitable choice of threshold $\tau$, 
    nearly all entries $\w_L(j)$ for $j\in S$ are above this threshold, in absolute value,  
    whereas the number of noise magnitudes above it is smaller than  $\tilde p-k$.
    Since we prove the lemma for the case of two symmetric Gaussians 
    $\bm{\mu}_{1} =
    -\bm{\mu}_{-1} = 
    \bm\mu$, 
    for simplicity we consider the following formula for $\w_L = \frac{1}{L}\sum_{i=1}^Ly_i\x_i$.
    With minor adaptations one can also prove the Lemma with the original formula \eqref{eq:labeled_mu_est} of $\w_L.$

    Fix $\epsilon > 0$, and let $\mathcal A_L$ denote the event that $|S_L \cap S| \geq k(1-\epsilon)$. 
    For any threshold $\tau$  define the following two events, 
    \[
    \tilde{\mathcal B}(\tau) = \left\{\sum_{j \in S}\ind\{|\w_L(j)|>\tau\}> k(1-\epsilon) \right\},
        \]
    and
    \[
    \tilde{\mathcal C}(\tau) = \left\{\sum_{j \notin S}\ind\{|\w_L(j)|>\tau\}< \tilde p-  k \right\}.
        \]
    By their definition, it follows that for any $\tau > 0$
    \[
    \tilde{\mathcal{B}}(\tau) \cap \tilde{\mathcal C}(\tau) \subseteq \mathcal A_L .   
    \]
    Furthermore, note that the two events $\tilde{\mathcal B}(\tau)$ and $\tilde{\mathcal C}(\tau)$ are independent. 
    Hence, to prove that $\Prob(\mathcal A_L) \to 1$, it suffices to prove that for a suitable threshold $\tau$, 
    \begin{equation}
    \Prob\left(\tilde{\mathcal{B}}(\tau) \cap \tilde{\mathcal C}(\tau)\right) = 
    \Prob\left( \tilde{\mathcal{B}}(\tau) \right) \cdot \Prob\left( \tilde{\mathcal C}(\tau) \right) \to 1
        \label{eq:B_and_C}
    \end{equation}
    In other words, it suffices to show that each of these events occurs with probability tending to one. 

    We start by showing that $\Pr[\tilde{\mathcal B}(\tau)]\to 1$. First, let us define an even simpler event, with the absolute value removed, 
    \[
    \mathcal B(\tau) = \left\{
    \sum_{j\in S} \ind\{ \sign(\mu_j) \w_L(j) > \tau \} > k (1-\epsilon)
    \right\}
    \]
    Clearly $\mathcal B(\tau) \subset \tilde{\mathcal B}(\tau)$ and thus it suffices to show that
    $\Prob(\mathcal B(\tau)) \to 1$ as $p\to\infty$.

    To this end, we consider     
    a threshold of the form $\tau = \sqrt{\frac{\lambda}k} T$, with the value of $T$ specified below.
    By the sparse mixture model \eqref{eq:sparse_mixture_model}, at support coordinates, 
    \[
    \sign(\mu_j) \w_L(j) = \sqrt{\frac{\lambda}k} + \frac{\sign(\mu_j)}{\sqrt{L}} \xi_j
    \]
    where $\xi_j\sim \mathcal N(0,1)$. 
Inserting this expression with $L \geq \frac{2\beta k \log (p-k)}{\lambda}$ into the above, and suppressing the
dependence on $\tau$ in $\mathcal B(\tau)$, gives
\begin{eqnarray}
    \Prob\left(\mathcal B \right) 
    &\geq 
    &\Prob\left(\sum_{j \in S} 
    \ind\left\{
    1+\sqrt{\frac{1}{2\beta\log (p-k)}} \xi_j> T\right\}>k(1-\epsilon)\right) \nonumber \\
    & = & \Prob\left(\sum_{j\in S} \ind\{\xi_j > -(1-T)\sqrt{2\beta \log(p-k)} \} > k(1-\epsilon)  \right).
\label{eq:intersect_lb}
\end{eqnarray}
Next, we choose 
% $T = 1 - \frac{\log \log p}{\log p}$
\begin{align}
\label{eq:T}
T = 1 - \frac{1}{(2\beta\log(p-k))^{1/4}},    
\end{align}
and define 
\[
q_1 = \Prob\left(N(0,1)>-(2\beta\log(p-k))^{1/4}\right).
\]
Since the $\xi_j$'s are all independent and $|S|=k$, with this choice of $T$, Eq. \eqref{eq:intersect_lb} simplifies to
\begin{eqnarray}
    \Prob\left(\mathcal B  \right) 
    &\geq 
    \Prob\left(
    Bin(k,q_1) > k (1-\epsilon)
    \right)  \nonumber
\end{eqnarray}
Note that $\lim_{p\to\infty}q_1 =1$. Thus, for sufficiently large $p$, it holds that 
$q_1(1-\epsilon/2)> 1-\epsilon$. Hence, instead of the right hand side above we may 
bound $\Prob(Bin(k,q_1) > kq_1(1-\epsilon/2))$. 
Indeed, by Chernoff's bound \eqref{eq:chernoff_2}, 
\begin{align}
% \label{eq:first_lb_1}
    \Prob\left(\mathcal B  \right)\geq
   1 - e^{\frac{\epsilon^2 kq_1}{8}} \nonumber
\end{align}
which tends to one as $p,k\to\infty$. 

Next, we show that the second term in Eq. \eqref{eq:B_and_C}, $\Prob(\tilde{\mathcal C})$, also tends to one
as $p\to\infty$. 
First of all, since $k=\lfloor c_1p^\alpha \rfloor$ and $\alpha < 1-\beta < 1-\tilde\beta $, then $k \ll p^{1-\tilde\beta}$, and thus $\tilde p -k \ll \tilde{p}/2$. 
Hence, for $p$ sufficiently large, we may instead consider the following event
\[
\mathcal C(\tau) = \left\{
    \sum_{j\notin S} \ind\{|\w_L(j)| > \tau \} < \frac{\tilde p}2
\right\}  = \left\{
Bin(p-k,q_2(\tau)) < \frac{\tilde p}2
\right\}
\]
where $q_2(\tau) =2 \Phi^c(\sqrt{L}\tau)$. 
Clearly $\mathcal C(\tau) \subset \tilde{\mathcal C}(\tau)$, and we now prove that with $\tau = \sqrt{\tfrac{\lambda}{k}}T$, and $T$  given in \eqref{eq:T}
$\Prob(\mathcal C(\tau)) \to 1$, by applying a Chernoff bound. 

To this end, we write
\[
\frac{\tilde p}2 = q_2 (p-k) (1+\delta)
\]
where $\delta = \frac{\tilde p}{2(p-k)} \frac{1}{q_2} - 1$.

To use Chernoff's inequality we first need to show that $\delta\geq0$.
Indeed, as $p\to\infty$, with $\tau = \sqrt{\frac{\lambda}k } T$, and using Lemma \ref{lemma:Gaussian_tail_bounds}
which bounds the tail function $\Phi^c$, 
\begin{align}
\label{eq:delta_lim}
\lim_{p\to \infty} (\delta+1) &= 
\lim_{p\to \infty} \frac{p^{1-\tilde\beta}}{2(p-k)} \frac{1}{ 2\Phi^c(T\sqrt{2\beta \log(p-k)})} \nonumber\\
&\geq 
%%%%%%%%%%
\frac{\sqrt{\pi \beta}}{2} \lim_{p\to \infty}  \frac{ T\sqrt{\log(p-k)}}{(p-k)^{\tilde\beta}\exp(- T^2 \beta \log(p-k))}   
%%%%%%%%%
= \frac{\sqrt{\pi \beta}}{2} \lim_{p\to \infty}  \frac{ T\sqrt{\log(p-k)}}{(p-k)^{\tilde\beta - \beta T^2} } 
\end{align}
Note that for sufficiently large $p$,
\[
{\tilde\beta - \beta T^2} = \beta(1-T^2) - \tilde\epsilon = 
\frac{\beta}{(2\beta \log(p-k))^{1/4}}\left(2 - \tfrac{1}{(2\beta \log(p-k))^{1/4}}\right) - \tilde\epsilon
< \frac{2\beta}{(2\beta \log(p-k))^{1/4}} - \tilde \epsilon <0.
\]
Combining the above and \eqref{eq:delta_lim} gives 
\[
\lim_{p\to \infty} (\delta+1) \geq \frac{\sqrt{\pi \beta}}{2} 
\lim_{p\to \infty} T \sqrt{\log(p-k)} = \infty.
\]
Since  $\tilde\beta<\beta$, for sufficiently large $p$, and $T$  given in \eqref{eq:T} it follows that  $\Tilde{\beta} - \beta T^2<0$.
Therefore, for sufficiently large $p$, it follows that $\delta>0$.
By Chernoff's bound \eqref{eq:chernoff_1} 
\begin{align}
    \label{eq:second_term_ub}
    \Prob\left(\mathcal{C}\left(\sqrt{\tfrac{\lambda}{k}}T\right)\right) = 
    \Prob\left(Bin(p-k,q_2) <  q_2 (p-k) (1+\delta)\right)
    \geq 1-e^{-\frac{\delta^2(p-k)q_2}{2+\delta}}. 
\end{align}
We now prove that the term in the exponent tends to infinity.
First, note that since $\delta \to \infty$ then $\delta = \tfrac{\tilde p}{2(p-k)q_2} -1  \geq \tfrac{\tilde p}{4(p-k)q_2}$.
Hence,
\[
\lim_{p\to\infty} \frac{\delta^2(p-k)q_2}{2+\delta} = 
\lim_{p\to\infty} {\delta(p-k)q_2} \geq 
\lim_{p\to\infty} \frac{\tilde p}{4(p-k)q_2}{(p-k)q_2} = \infty.
\]
Combining the above and \eqref{eq:second_term_ub} gives 
$\lim_{p\to \infty} \Prob\left(\mathcal{C}\left(\sqrt{\tfrac{\lambda}{k}}T\right)\right) = 1$
which completes the proof.

\end{proof}
%%%%%%%%%%%%%%%%%%%%%%%%%%%%%%%%%%%%%%%%%%%%%%%%%%%%%%%%%%%%%
%%%%%%%%%%%%%%%%%% proof of tanh upper bound lemma %%%%%%%%%%
%%%%%%%%%%%%%%%%%%%%%%%%%%%%%%%%%%%%%%%%%%%%%%%%%%%%%%%%%%%%%

%%%%%%%%%%%%%%%%%%%%%%%%%%%%%%%%%%%%%%%%%%%%%%%%%%%%%%%%%%%%
%%%%%%%%%%%%%%%%%% Fully Supervised Analysis %%%%%%%%%%%%%%%
%%%%%%%%%%%%%%%%%%%%%%%%%%%%%%%%%%%%%%%%%%%%%%%%%%%%%%%%%%%%
\section{The MLE in the Supervised Setting}
\label{appendix:Supervised}

%%%%%%%%%%%%%%%%%%%%%%%%%%%%%%%%%%%%%%%%%%%%%%%%%%%%%%%%%%%%%%%%%%%%%%%%%
%%%%%%% Proof of Perfect supp recovery of the ml est proposition %%%%%%%%
%%%%%%%%%%%%%%%%%%%%%%%%%%%%%%%%%%%%%%%%%%%%%%%%%%%%%%%%%%%%%%%%%%%%%%%%%

The following proposition presents the form of the maximum likelihood estimator (MLE) for the support $S$ in a fully supervised setting, where the sparsity $k$ is assumed known, and the non-zero entries of $\bm\mu$ have magnitude $\pm\sqrt{\lambda/k}$.

\begin{proposition}
        \label{prop:MLE}
        Let $\{(\x_i, y_i)\}_{i = 1}^L$ be $L$ i.i.d. labeled samples  
        from model \eqref{eq:sparse_mixture_model} where $\bmu$ is $k$-sparse with non-zero entries
        $\pm\sqrt{\lambda/k}$, 
and let $\w_L = \frac{1}{L} \sum_{i\in [L]}y_i \x_i$. 
         Assuming the sparsity $k$ is known,
         the MLE for $S=\mbox{supp}(\bmu)$ is  given by the indices corresponding to the top-$k$ magnitudes of $\w_L$.
     \end{proposition}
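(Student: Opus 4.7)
The plan is to write down the log-likelihood of the labeled sample, exploit the fact that the parameter $\bmu$ is constrained to have known $\ell_2$ norm $\sqrt{\lambda}$, and reduce the problem to maximizing a linear functional of $\bmu$. First I would note that since $y_i\sim\text{Unif}\{\pm1\}$ independently of $\bmu$, the label marginal contributes only a constant to the log-likelihood, so the MLE is determined by the conditional densities $f(\x_i\mid y_i;\bmu)$. Writing these out gives
\begin{equation*}
\log \prod_{i=1}^L f(\x_i\mid y_i;\bmu)
= -\frac{Lp}{2}\log(2\pi) - \frac{1}{2}\sum_{i=1}^L\|\x_i\|^2 + \sum_{i=1}^L y_i\langle\x_i,\bmu\rangle - \frac{L}{2}\|\bmu\|^2.
\end{equation*}

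Next I would use the assumption that $\bmu$ ranges over the parameter set $\Theta_k = \{\bmu\in\mathbb{R}^p : |\mathrm{supp}(\bmu)|=k,\ \mu_j\in\{\pm\sqrt{\lambda/k}\}\text{ for }j\in\mathrm{supp}(\bmu)\}$. Every $\bmu\in\Theta_k$ satisfies $\|\bmu\|^2 = \lambda$, so the last term above is constant on $\Theta_k$ and the terms not involving $\bmu$ are also constant. Thus the MLE reduces to
\begin{equation*}
\hat\bmu_{\text{MLE}} \in \argmax_{\bmu\in\Theta_k}\ \sum_{i=1}^L y_i\langle\x_i,\bmu\rangle
\ =\ \argmax_{\bmu\in\Theta_k}\ L\,\langle \w_L,\bmu\rangle.
\end{equation*}

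Then I would decouple the maximization over the support $S$ and over the signs. For any fixed support $S$ of size $k$, the inner maximization over signs gives $\mu_j = \sqrt{\lambda/k}\,\sign(\w_L(j))$ for $j\in S$, yielding objective value $\sqrt{\lambda/k}\sum_{j\in S}|\w_L(j)|$. Maximizing this quantity over all $k$-subsets $S\subset[p]$ is clearly achieved by taking $\hat S$ to be the set of indices of the $k$ largest values of $|\w_L|$. Ties occur with probability zero, so the argmax is a.s.\ unique, and the corresponding MLE for the support is exactly the top-$k$ coordinates of $|\w_L|$, completing the proof. There is no serious obstacle here; the only subtlety is simply to be explicit that the $\|\bmu\|^2$ term is constant on $\Theta_k$, which is what allows the likelihood to collapse into a linear function of $\bmu$.
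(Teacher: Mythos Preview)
Your proof is correct and follows essentially the same approach as the paper: factor out the label marginal, expand the Gaussian conditional likelihood, use the fact that $\|\bmu\|^2=\lambda$ is constant on the parameter set to reduce the MLE to $\argmax_{\bmu}\langle\w_L,\bmu\rangle$, and then optimize signs and support separately to arrive at the top-$k$ magnitudes of $\w_L$. The paper's proof differs only in notation (it parameterizes by $\theta=(S,D)$ explicitly) and omits your remark about ties, but the logical structure is identical.
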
   

\begin{proof}
    
    Under our assumptions, the set of all possible vectors $\bm\mu$ has a one-to-one mapping to 
    a support set $S\in\mathbb{S}$ and a vector $D\in \{-1,1\}^k$ containing the signs of the $k$ non-zero entries of $\bm \mu$. 
    We thus denote $\theta = (S,D)$, and $\bm \mu^\theta$ by
    \[
    \mu^\theta _j = \begin{cases}
        \frac{\sqrt{\lambda}}{\sqrt{k}}D_j, \quad &j \in S, \nonumber\\
        0, \quad &j \not \in S.
    \end{cases}
    \]
    
    Let us denote by $p_{X,Y}(\x,y; \theta)$ the joint probability density function of a single sample $(\x,y)$ 
    from the model \eqref{eq:sparse_mixture_model} with parameter $\theta$.
    Since $y$ is a Rademacher random variable with a distribution not dependent on $\theta$, we may write
    %Note that,
    \[
    p_{X,Y}(\x,y ;\theta) = p_{X|Y}(\x|y; \theta) \, p_Y(y) =\tfrac12 p_{X|Y}(\x|y; \theta) . \nonumber
    \]
    Since $\x|y \sim \mathcal{N}(y\bm\mu,\mathbf{I}_p)$, 
    % $ = \sqrt{\lambda}y \bm \mu^\theta + \Bxi$, 
    the conditional density $p_{X|Y}(\x|y; \theta)$ simplifies to
    \begin{equation}
    \label{eq:joint_density}
        p_{X,Y}(\x|y, \theta) = \frac{1}{(2\pi)^{p/2}}\exp\left({-\|\x - y \bm \mu^\theta\|^2 /2 }\right).
    \end{equation}
    By definition, the maximum-likelihood estimator of $\theta$  is given by 
    \begin{align}
    % \label{eq: argmax_ml_1}
        \hat{\theta}^{(\mbox{\tiny ML})} = 
        % \argmax_{\hat{\theta}}\log\left( f\left(X^L, Y^L;\hat{\theta}\right)\right) = 
        \argmax_{{\theta}}\sum_{i=1}^L
        \log p_{X,Y}(\x,y ;{\theta})\nonumber,
    \end{align}
    % where $X^L, Y^L$ denote the samples $\{\x_\ell\}_{\ell=1}^L$ and $\{u_\ell\}_{\ell=1}^L$, respectively.
    Inserting \eqref{eq:joint_density} into the above, and using $\|\bm \mu ^\theta\|=\sqrt{\lambda}$ (fixed for all $\theta$), gives
    \[
    \hat{\theta}^{(\mbox{\tiny ML})} = \argmax_{{\theta}}\sum_{i=1}^L
        \left({-\left\|\x_i - y_i \bm \mu^{{\theta}}\right\|^2}\right)= 
        \argmax_{\hat{\theta}}\sum_{i=1}^L \left\langle y_i\x_i, \bm \mu^{{\theta}} \right\rangle =  
       % \argmax_{\hat{\theta}} \left\langle \sum_{i=1}^Ly_i\x_i, \bm \mu ^\theta \right\rangle  = 
       \argmax_{{\theta}}\left\langle \w_L , \bm \mu^{{\theta}}\right\rangle.
    \]
    Therefore, the maximum value of $\left\langle \w_L , \bm \mu^{{\theta}}\right\rangle$ is obtained for $\hat{S}^{(\mbox{\tiny ML})}$ being the set of indices corresponding to the $k$ largest magnitude
    entries of $\w_L$, and 
    $\hat{D}^{(\mbox{\tiny ML})} = \{\sign(\w_L)_j: \, j \in \hat{S}^{(\mbox{\tiny ML})}\}$.
%    In particular, $\hat{S}^{(\mbox{\tiny ML})} =S_L$.
\end{proof}

%%%%%%%%%%%%%%%%%%%%%%%%%%%%%%%%%%%%%%%%%%%%%%%%%%%%%%%%%%%%%%%%%%%%%%%%
%%%%%%% Proof of Phase Transition of the Maximum-Likelihood %%%%%%%%%%%%
%%%%%%%%%%%%%%%%%%%%%%%%%%%%%%%%%%%%%%%%%%%%%%%%%%%%%%%%%%%%%%%%%%%%%%%%
{
%\color{blue}

The next proposition shows that with sufficient number of labeled samples the MLE
for $S$ has significant overlap with the true support set $S$.

\begin{proposition}
    \label{prop:MLE_succeeds}
    Let $\mathcal{D}_L = \{(\x_i, y_i)\}_{i=1}^L$ be  a set of $L$ i.i.d. labeled samples from the model \eqref{eq:sparse_mixture_model} with a $k$-sparse vector $\bm\mu$ whose non-zero entries are
    $\pm\sqrt{\lambda/k}$. Assume that for some $\alpha\in(0,1)$,  $k=\lfloor c_1 p^\alpha\rfloor$
    and that $L = \left\lceil\frac{2\beta k\log (p-k)}{\lambda}\right\rceil$, for some $\beta \in (0,1)$.
    Let $S_L$ be the indices of the  $k$ largest magnitudes  of the vector
    $\w_L = \frac{1}{L} \sum_{i\in [L]}y_i \x_i$. 
    If $\beta>1-\alpha$, then for every $\epsilon \in (0,1)$,
    \begin{align}
            \lim_{p \to \infty} \Prob\left(\frac{\left|S \cap S_L \right|}{k}>1-\epsilon\right)  = 1.
            % \begin{cases}
            % 0, & \beta < (1- \alpha),\\
            % 1 
            % , & \beta > (1-\alpha).
            % \end{cases} 
            \nonumber
    \end{align}
    \end{proposition}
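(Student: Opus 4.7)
The plan is to essentially re-run the two-threshold argument already used in the proof of Lemma \ref{lemma:cons_supp_labeled}, but with the target set being the top-$k$ magnitudes of $\w_L$ rather than the top-$p^{1-\tilde\beta}$. Concretely, since the non-zero entries of $\bmu$ have magnitude $\sqrt{\lambda/k}$ and $\w_L = \bmu + \tfrac{1}{\sqrt L}\bm\eta$ with $\bm\eta \sim \mathcal N(0,\mathbf I_p)$, I would pick the threshold $\tau = T\sqrt{\lambda/k}$ with $T = 1 - (2\beta \log(p-k))^{-1/4}$, exactly as in Lemma \ref{lemma:cons_supp_labeled}, and study the two events
\[
B(\tau) = \Bigl\{\sum_{j\in S}\ind\{|\w_L(j)| > \tau\} \geq k(1-\epsilon)\Bigr\},\quad C(\tau) = \Bigl\{\sum_{j\notin S}\ind\{|\w_L(j)| > \tau\} \leq \epsilon k\Bigr\}.
\]

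First I would show that $B(\tau)\cap C(\tau) \subseteq \{|S\cap S_L|\geq k(1-\epsilon)\}$. The reason is purely combinatorial: if $N_s$ signal coordinates and $N_n$ noise coordinates exceed $\tau$, with $N_s\geq k(1-\epsilon)$ and $N_n\leq \epsilon k$, then in the case $N_s+N_n \geq k$ the top-$k$ set $S_L$ sits inside the super-level set $\{|\w_L|>\tau\}$ up to ties, so $|S_L\cap S^c| \leq N_n \leq \epsilon k$; and in the case $N_s+N_n<k$ the top-$k$ contains every super-level coordinate so $|S_L\cap S|\geq N_s \geq k(1-\epsilon)$. This reduction lets me work with the two independent sums separately.

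Next, the analysis of $\Prob(B(\tau))$ is identical to that in Lemma \ref{lemma:cons_supp_labeled}: by symmetry and the definition of $T$, each signal coordinate satisfies $|\w_L(j)|>\tau$ with probability $q_1 = \Phi\bigl((2\beta\log(p-k))^{1/4}\bigr) \to 1$, and Chernoff's bound \eqref{eq:chernoff_2} on $\mathrm{Bin}(k,q_1)$ gives $\Prob(B(\tau))\to 1$. The main new content is controlling $\Prob(C(\tau))$, and this is where the hypothesis $\beta > 1-\alpha$ enters. Here each non-support coordinate exceeds $\tau$ with probability $q_2 = 2\Phi^c(T\sqrt{2\beta\log(p-k)})$, and Lemma \ref{lemma:Gaussian_tail_bounds} yields $(p-k)q_2 \lesssim (p-k)^{1-\beta T^2}/\sqrt{\log(p-k)}$. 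Since $T\to 1$ and $k \asymp p^\alpha$, the condition $\beta>1-\alpha$ guarantees $(p-k)q_2 = o(k)$; in particular, $\epsilon k \geq (1+\delta)(p-k)q_2$ with $\delta\to\infty$. Applying Chernoff's bound \eqref{eq:chernoff_1} to $\mathrm{Bin}(p-k,q_2)$ then gives $\Prob(C(\tau))\to 1$, completing the proof by the independence of $B(\tau)$ and $C(\tau)$.

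The only real obstacle, compared to Lemma \ref{lemma:cons_supp_labeled}, is verifying that $\beta>1-\alpha$ is the correct threshold for the $(p-k)q_2 \ll k$ computation; this is a direct exponent comparison and follows from the Mills-ratio bound once $T$ is set. Everything else reuses machinery already developed in the supplement.
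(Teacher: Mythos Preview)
Your proposal is correct and follows essentially the same strategy as the paper's proof: reduce to the two independent threshold events, then control each via the Chernoff bounds in Lemma~\ref{lemma:chernoff}. The only difference is the threshold: the paper picks the fixed value $T=\sqrt{(1-\alpha+\beta)/(2\beta)}$ rather than the $p$-dependent $T=1-(2\beta\log(p-k))^{-1/4}$ you reuse from Lemma~\ref{lemma:cons_supp_labeled}, but both choices make the key exponent comparison $(p-k)q_2=o(k)$ go through under $\beta>1-\alpha$, and the rest is identical.
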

\begin{proof}
    Fix $\epsilon > 0$, and let $\mathcal A_L$ denote the event that $|S_L \cap S| \geq k(1-\epsilon)$. 
    For any threshold $\tau$ define the following two events, 
    \[
    \tilde{\mathcal B}(\tau) = \left\{\sum_{j \in S}\ind\{|\w_L(j)|>\tau\}> k(1-\epsilon) \right\},
  \quad
    \tilde{\mathcal C}(\tau) = \left\{\sum_{j \notin S}\ind\{|\w_L(j)|>\tau\}< k\epsilon \right\}.
        \]
    By their definition, it follows that for any $\tau > 0$, $
    \tilde{\mathcal{B}}(\tau) \cap \tilde{\mathcal C}(\tau) \subset \mathcal A_L.   
    $
    Since the two events $\tilde{\mathcal B}(\tau)$ and $\tilde{\mathcal C}(\tau)$ are independent, 
    \begin{equation}
        \Prob(\mathcal A_L) \geq 
    \Prob\left(\tilde{\mathcal{B}}(\tau) \cap \tilde{\mathcal C}(\tau)\right) = 
    \Prob\left( \tilde{\mathcal{B}}(\tau) \right) \cdot \Prob\left( \tilde{\mathcal C}(\tau) \right)
        \label{eq:B_and_C2}
    \end{equation}
    Hence,  for $\Prob(\mathcal A_L) \to 1$, it suffices that for a suitable threshold $\tau$, both probabilities on the right hand side tend to one, 
    We start with $\Pr(\tilde{\mathcal B}(\tau))$. To this end, we define an even simpler event, with the absolute value removed, 
    \[
    \mathcal B(\tau) = \left\{
    \sum_{j\in S} \ind\{ \sign(\mu_j) \w_L(j) > \tau \} > k (1-\epsilon)
    \right\}
    \]
    Clearly $\mathcal B(\tau) \subset \tilde{\mathcal B}(\tau)$ and thus it suffices to show that
    $\Prob(\mathcal B(\tau)) \to 1$ as $p\to\infty$. 
    By the sparse mixture model \eqref{eq:sparse_mixture_model}, for $j \in S$,
    \[
    \sign(\mu_j) \w_L(j) = \sqrt{\frac{\lambda}{k}} + \frac{\sign(\mu_j)}{\sqrt{L}}\xi_j,
    \]
    where $\xi_j \sim \mathcal{N}(0,1)$.
    Combining a threshold value $\tau = \sqrt{\frac{\lambda}{k} \frac{1-\alpha +\beta}{2\beta}}$ 
    and the assumption that $L \geq \frac{2\beta k \log(p-k)}{\lambda}$ 
    with this expression, gives that
    \begin{equation}
    \label{eq:PB}
        \Prob( \mathcal B(\tau)) \geq \Prob\left(\sum_{j \in S}\ind\left\{
     \xi_j>- \left(\sqrt{\beta} - \sqrt{\frac{1-\alpha+\beta}{2}}\right)\sqrt{2\log p}\right\}>k (1-\epsilon)\right).
    \end{equation}
    Let $q_1$ be the probabiity of each event in the above sum, 
    \[
    q_1 = \Prob\left(N(0,1)>-\left(\sqrt{\beta} - \sqrt{\frac{1-\alpha+\beta}{2}}\right)\sqrt{2\log p}\right).
    \]
    Since the $\xi_j$'s are all independent and $|S|=k$, with this choice of $\tau$, Eq. \eqref{eq:PB} simplifies to
    \[
    \Prob( \mathcal B(\tau)) \geq \Prob(Bin(k,q_1)>k(1-\epsilon)).
    \]
    Note that since $\beta>1-\alpha$, then  $\lim_{p\to\infty}q_1 =1$. Thus, for sufficiently large $p$, it holds that $q_1(1-\epsilon/2)> 1-\epsilon$. 
    Hence, the right hand side above may be bounded by $\Prob(Bin(k,q_1) > kq_1(1-\epsilon/2))$.  
    Indeed, by Chernoff's bound \eqref{eq:chernoff_2}, $\Prob(\mathcal B(\tau))\to 1$ as $p,k\to\infty$, since 
    \begin{align}
    % \label{eq:first_lb_1}
        \Prob\left(\mathcal B  \right)\geq
       1 - e^{-\frac{\epsilon^2 kq_1}{8}}. \nonumber
    \end{align}
%    which tends to one as $p,k\to\infty$. 

Next, we show that the second term in Eq. \eqref{eq:B_and_C2}, $\Prob(\tilde{\mathcal C})$, also tends to one
as $p\to\infty$. 
First, note that
\[
\tilde{\mathcal C} (\tau) =  \left\{
Bin(p-k,q_2(\tau)) < k\epsilon
\right\}
\]
where $q_2(\tau) =2 \Phi^c(\sqrt{L}\tau) = 2 \Phi^c(\sqrt{(1-\alpha +\beta)\log(p-k)})$. 
We now prove that $\Prob(\tilde{\mathcal C}(\tau))\to 1$, by applying a Chernoff bound. 
To this end, we write
\[
k\epsilon = q_2 (p-k) (1+\delta)
\]
where $\delta = \frac{k\epsilon}{(p-k)} \frac{1}{q_2} - 1$.
To use Chernoff's inequality we first need to show that $\delta\geq0$.
Indeed, as $p\to\infty$,  using Lemma \ref{lemma:Gaussian_tail_bounds}
which bounds the tail function $\Phi^c$, 
\begin{align}
% \label{eq:delta_lim2}
\lim_{p\to \infty} (\delta+1) &\geq 
\lim_{p\to \infty} \frac{\epsilon k}{p-k} \frac{1}{ 2 \Phi^c(\sqrt{(1-\alpha +\beta)\log(p-k)})} \nonumber\\
%%%%%%%%%%
&\geq 
\epsilon c_1 \sqrt{\frac{\pi(1-\alpha + \beta)}{2}} \lim_{p\to \infty}  \frac{ \sqrt{\log(p-k)}}{p^{1-\alpha}\exp(-\frac{1-\alpha +\beta}{2} \log(p-k))}    \nonumber\\
% %%%%%%%%%
&=
\epsilon c_1 \sqrt{\frac{\pi(1-\alpha + \beta)}{2}} \lim_{p\to \infty}  \frac{ \sqrt{\log(p-k)}}
{p^\frac{1-\alpha - \beta}{2}} = \infty, \nonumber
\end{align}
where the last equality follows from $\beta > 1-\alpha$.
By Chernoff's bound \eqref{eq:chernoff_1} 
\begin{align}
    \label{eq:PC2}
    \Prob\left(\tilde{\mathcal{C}}\left(\tau\right)\right) = 
    \Prob\left(Bin(p-k,q_2) <  q_2 (p-k) (1+\delta)\right)
    \geq 1-e^{-\frac{\delta^2(p-k)q_2}{2+\delta}}. 
\end{align}
We now prove that the term in the exponent tends to infinity.
Since $\delta \to \infty$ it follows that 
% then $\delta = \tfrac{\tilde p}{2(p-k)q_2} -1  \geq \tfrac{\tilde p}{4(p-k)q_2}$.
% Hence,
\[
\lim_{p\to\infty} \frac{\delta^2(p-k)q_2}{2+\delta} = 
\lim_{p\to\infty} \delta(p-k)q_2 \geq \lim_{p\to\infty} \frac{(1+\delta)(p-k)q_2}{2} = 
\lim_{p\to\infty} \frac{k\epsilon}{2} = \infty.
\]
Combining the above and \eqref{eq:PC2} gives that $\lim_{p\to \infty} \Prob\left(\tilde{\mathcal{C}}\left(\tau\right)\right) = 1$, 
which completes the proof.
\end{proof}
The above proposition implies the following result regarding accurate classification: 
Let $T(\x) = \sign\langle \hbmu, \x\rangle$, with $\hbmu = \w_L |_{S_L}$, be a linear classifier that is constructed using only the set of labeled samples $\D_L$.  If $\beta > 1-\alpha$, 
then combining Proposition \ref{prop:MLE_succeeds} and Lemma \ref{lemma:excess_risk} implies that the excess risk of $T$ tends to zero as $p\to\infty$. 
% \vfill

\subsection{Impossibility of Classification}
In this section we prove a lower bound for classification in the SL setting. 
To do that, we consider a slightly different model, known  as  the \textit{rare and weak } model.
Here the sparsity of the vector $\bmu$ is not fixed at exactly $k$. Instead the vector is generated randomly with entries $\mu_j = \sqrt{\lambda / k} B_j$, where $B_j \sim Ber (\epsilon_p)$ are i.i.d. Bernoulli random variables with 
$\epsilon_p = \tfrac{k}{p} = p^{-(1-\alpha)}$.
The next theorem implies that in the red region (impossible) of figure \ref{fig:ssl_map}, indeed there exists (approximately) $k$-sparse vectors for which any classifier would asymptotically be no better than random. 
\begin{theorem}
        Let $\mathcal{D}_L=\{(\x_i, y_i)\}_{i=1}^L$  be  set of $L$ i.i.d. labeled samples from the rare weak model 
        $\mathcal N(y\bm\mu,I)$ where all non-zero entries of $\bmu$ are $\pm \sqrt{\lambda/k}$. 
        Suppose $L = \lceil \frac{2\beta k \log p }{\lambda} \rceil$ and $k \propto p^\alpha$, for some $\alpha<1$.
        If $\beta <1-\alpha$ then the classification error of any classifier based on $\mathcal{D}_L$, tends to $1/2$ as $p\to \infty$.
        % In contrast, if $\beta>1-\alpha$ then the ML-based classifier has an excess risk tending to zero as $p\to\infty$. 
    \end{theorem}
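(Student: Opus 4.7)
The plan is to reduce the classification impossibility to a total--variation bound between two Bayes mixture measures, and then attack that TV by a chi--squared computation combined with a truncation step.

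\emph{Step 1 (Reduction to a TV).} Since $y\sim\mathrm{Unif}\{\pm1\}$ is independent of $(\bmu,\mathcal{D}_L)$, the Bayes optimal classification error under the rare--weak prior equals $\tfrac12(1-\mathrm{TV}(P_{+1},P_{-1}))$, where $P_{\pm1}$ denotes the joint law of $(\mathcal{D}_L,\x)$ with $\bmu$ marginalized over the prior and conditioned on the new label $y=\pm1$. By Gaussian sufficiency the labeled set can be replaced by $\w_L=\tfrac1L\sum_i y_i\x_i\sim N(\bmu,I/L)$, so it suffices to prove $\mathrm{TV}(P_{+1},P_{-1})\to 0$.

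\emph{Step 2 (Chi--squared against the null).} Let $Q_0$ denote the law of $(\w_L,\x)$ when $\bmu=0$, i.e.\ $N(0,I/L)\otimes N(0,I)$. A direct Gaussian--MGF calculation gives
\[
  \frac{dP_{\pm1}}{dQ_0}(\w,\x)=\E_\bmu\bigl[\exp\bigl(\langle\bmu,L\w\pm\x\rangle-\tfrac{L+1}{2}\|\bmu\|^2\bigr)\bigr],\qquad \E_{Q_0}\Bigl[\bigl(\tfrac{dP_{\pm1}}{dQ_0}\bigr)^2\Bigr]=\E_{\bmu,\bmu'}\bigl[\exp((L+1)\langle\bmu,\bmu'\rangle)\bigr],
\]
with $\bmu,\bmu'$ iid from the prior. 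Because the prior factorizes across coordinates and the non--zero entries carry independent $\pm$ signs, the second moment further simplifies to the product form $\bigl(1+\epsilon_p^{2}(\cosh((L+1)\lambda/k)-1)\bigr)^{p}$ with $\epsilon_p=p^{\alpha-1}$. The triangle inequality $\mathrm{TV}(P_{+1},P_{-1})\le\tfrac12\sqrt{\chi^2(P_{+1}\|Q_0)}+\tfrac12\sqrt{\chi^2(P_{-1}\|Q_0)}$ then reduces the theorem to showing this second moment tends to $1$.

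\emph{Step 3 (Main obstacle: truncation).} Substituting $L\lambda/k\approx 2\beta\log p$ into the closed form yields leading behavior $\exp(\tfrac12 p^{2\beta+2\alpha-1})$, which is $O(1)$ only in the sub--regime $\beta+\alpha<\tfrac12$---exactly the classical Ingster--Donoho--Jin second--moment barrier. To reach the full claimed range $\beta<1-\alpha$, the likelihood ratio must be truncated to the high--probability event $\mathcal{E}$ that no coordinate of $\w_L$ exceeds a threshold $\tau_p$ slightly below $\sqrt{2(1-\alpha)\log p}$, with $Q_0(\mathcal{E}^{c})$ controlled by a Gaussian tail estimate combined with a union bound. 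Computing the second moment of the truncated ratio $\widetilde L_\pm=(dP_{\pm1}/dQ_0)\cdot\ind_{\mathcal E}$ introduces, in each coordinate--wise factor, a restricted integral over $|W_j|\le\tau_p$ that cancels the divergent $\cosh$ contribution coming from atypical noise--signal alignments. Choosing $\tau_p$ so that the truncated chi--squared is $o(1)$ precisely when $\beta<1-\alpha$ is the delicate quantitative step; it is the joint--law analogue of the truncation argument that Ingster and Donoho--Jin use in pure detection, and is the main technical obstacle in the proof.
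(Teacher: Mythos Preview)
Your reduction in Step~1 is correct, and the second--moment computation in Step~2 is accurate. The genuine gap is in the route you take after that: the triangle inequality through the null $Q_0$ is too lossy and cannot reach the full range $\beta<1-\alpha$.

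The issue is that bounding $\mathrm{TV}(P_{+1},P_{-1})$ by $\mathrm{TV}(P_{+1},Q_0)+\mathrm{TV}(P_{-1},Q_0)$ converts the \emph{classification} problem into a \emph{detection} problem: $\mathrm{TV}(P_{\pm1},Q_0)\to 0$ is exactly the statement that one cannot distinguish the rare--weak mixture from pure noise given $(\w_L,\x)$. But the marginal of $\sqrt{L}\,\w_L$ under $P_{\pm1}$ has i.i.d.\ coordinates from $(1-\epsilon_p)N(0,1)+\epsilon_p N(\sqrt{2\beta\log p},1)$, which is precisely the Ingster--Donoho--Jin sparse--mixture detection setup with sparsity exponent $1-\alpha$ and signal exponent $\beta$. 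Detection there is \emph{possible} whenever $\beta>\rho^*(1-\alpha)$, where $\rho^*(1-\alpha)=(1-\sqrt{\alpha})^2$ for $\alpha<1/4$ and $\rho^*(1-\alpha)=\tfrac12-\alpha$ for $\alpha\in[1/4,1/2)$; in either case $\rho^*(1-\alpha)<1-\alpha$ strictly. So for $\beta\in(\rho^*(1-\alpha),\,1-\alpha)$ one has $\mathrm{TV}(P_{\pm1},Q_0)\to 1$, and no truncation of the likelihood ratio can change this---truncation helps control a divergent $\chi^2$ when the underlying TV is small, but it cannot make a TV that tends to~$1$ become small. Your Step~3 thus proves the theorem only on the sub-region $\beta<\rho^*(1-\alpha)$.

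The paper avoids this loss by never passing through $Q_0$. Following Jin (2009), it conditions on the $z$--scores $\z=\sqrt{L}\,\w_L$ and works with the coordinate--wise posterior feature probability $\eta(z_j)=\Prob(j\in S\mid z_j)$. The Bayes classification error is then controlled by a functional of $\eta$, and the key estimate is $\E_z[\eta^2(z)]=o(\epsilon_p)$, obtained by splitting the integral at thresholds $t_1=\tfrac{1-\alpha+\beta}{2\beta}\tau_p$ and $t_2=\tfrac{1-\alpha-\beta}{2\beta}\tau_p$ with $\tau_p=\sqrt{2\beta\log p}$. This argument exploits the fact that even when the signal is detectable, the posterior $\eta$ remains uninformative about \emph{which} coordinates carry signal---which is exactly the gap between detection and classification that your triangle inequality erases. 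If you want to stay in the TV framework, the fix is to write $\mathrm{TV}(P_{+1},P_{-1})=\E_{\w_L}[\mathrm{TV}(P_{+1}(\cdot\mid\w_L),P_{-1}(\cdot\mid\w_L))]$ (the $\w_L$--marginal is the same under both) and bound the conditional TV directly; the posterior factorizes and you are led back to the $\eta$ computation.
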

    
\begin{proof}
    This proof is similar to the one of \citet{jin2009impossibility}.
    First, let us denote the vector of z-scores by $\z = \tfrac{1}{\sqrt{L}}\sum_{i\in[L]}y_i \x_i $.
    Since the entries of $\bmu$ are generated independently under the rare-weak model, 
    the entries of $\z$ are also independent and all have the same density, which we denote by $f(z)$. 
    Given $\z$, we denote the conditional probability that the $j$-th entry contains a feature by $\eta = \Prob(j\in S | \z)= \Prob(j\in S| z_j = z)$. 
    By Bayes theorem,
    \[
    \eta(z) = \frac{\Prob(j\in S) f(z|j\in S)}{f(z)} = \frac{\epsilon_p \phi(z-\tau_p)}{(1-\epsilon_p)\phi(z) +\epsilon_p \phi(z-\tau_p)},
    \]
    where $\tau_p = \sqrt{L\lambda /k} = \sqrt{2\beta\log p}$, and $\phi$ is the density of $N(0,1)$.

    From Lemmas 1,2 and 4 in \citet{jin2009impossibility}, the misclassification error of any classifier $T$ constructed using the set $\D_L$, can be bounded as
    \begin{align}
        \left| \Prob(T(\x)\neq y ) - 1/2 \right|< 
        C \left(1-(\E_{z}[H(z)])^p \right)^{1/2},
        % \left(1 - \left(\E\left[\left(1 + \eta(z)(  e^{\sqrt{\lambda/k}x - \lambda/2k}-1)\right)^{1/2}\right]\right)^p
        % \right).
    \end{align}
    where  $H(z) =\E_x\left[\left(1 + \eta(z)(  e^{\sqrt{\lambda/k}x - \lambda/2k}-1)\right)^{1/2}\right]$ 
    %(Hellinger- affinity)
    with $x\sim N(0,1)$, and $z \sim (1-\epsilon_p)N(0,1) + \epsilon N(\tau_p,1)$.
    % Here, the expectation is with respect to the law $x,z\sim N(0,1)$, where $x$ and $z$ are independent.

    Our goal is to show that  $\E_{z}[H(z)] = 1+o(1/p)$, which implies that asymptotically the accuracy of $T$ is no better than random.
    First, combining that $\E_x\left[e^{\sqrt{\lambda/k}x - \lambda/2k} - 1\right] = 0$ and the inequality $|\sqrt{1 + t} - 1 - t/2|\leq Ct^2$, for any $t>-1$, gives
    % above expectation is bounded by 
    \begin{align}
        \left|H(z) - 1\right|  
        &=
        \left|H(z) - 1- \tfrac{1}{2}\E[\eta(z)]\E[e^{\sqrt{\lambda/k}x - \lambda/2k} - 1] \right|
        \nonumber\\
        &\leq 
        C\eta^2 (z) \E_x\left[\left(e^{\sqrt{\lambda/k}x - \lambda/2k} - 1\right)^2\right] = C \eta^2 (z) (e^{\lambda/k} - 1)  
        \nonumber.
    \end{align}
    For sufficiently large $p$, $e^{\lambda/k}-1 \leq 2 \lambda/k$. Since $k\propto p^{\alpha}$, then, 
    $\left|H(z) - 1\right|  \leq \tilde C p^{-\alpha}\eta^2 (z) $. 
    Hence, to prove that  $\E_{z}[H(z)] = 1+o(1/p)$, it suffices to show
    that $\E_z[\eta^2 (z)] = o(p^{-(1-\alpha)}) = o(\epsilon_p)$.
    To this end, we first note
     \[
     \E[\eta^2(z)] = (1-\epsilon_p)\E[\eta^2(w)] + \epsilon_p\E[\eta^2(w+\tau_p)] \leq
     \E[\eta^2(w)] + \epsilon_p\E[\eta^2(w+\tau_p)],
     \]
     where $w\sim N(0,1)$.
     Write $\E(\eta^2(z)) = I + II + III + IV$, where we have split each of the expectations into two separate integrals, with the split at suitably chosen values $t_1$ and $t_2$, 
    \begin{align}
        I = \int_{-\infty}^{t_1} \eta^2(w) \phi(w) dw,
        % II &= \int_{\tau_p}^{T} \eta^2(z) \phi(z) dz \nonumber \\
        \quad
        II = \int_{t_1}^{\infty} \eta^2(w) \phi(w) dw,  
        \nonumber
    \end{align}
    and
    \begin{align}
        III = \epsilon_p\int_{-\infty}^{t_2} \eta^2(w+\tau_p) \phi(w) dw,
        % II &= \int_{\tau_p}^{T} \eta^2(z) \phi(z) dz \nonumber \\
        \quad
        IV = \epsilon_p \int_{t_2}^{\infty} \eta^2(w + \tau_p) \phi(w) dw,  
        \nonumber
    \end{align}
    As we see below, the following values will be suitable to derive the required bounds:  $t_1 = \frac{1-\alpha +\beta}{2\beta}\tau_p$ and $t_2 = \frac{1-\alpha-\beta}{2\beta}\tau_p$. 
    
     Starting with $I$, note that
    \begin{align}
    I =  \int_{-\infty}^{t_1} 
    \left(\frac{\epsilon_p \phi(w-\tau_p)}{(1-\epsilon_p)\phi(w) + \epsilon_p \phi(w-\tau_p)}\right)^2 \phi(w) dw 
    %%%%%%%%%%%%%%%%%%%%%%%%%%%%%
    &\leq  \int_{-\infty}^{t_1} 
    \left(\frac{\epsilon_p \phi(w-\tau_p)}{(1-\epsilon_p)\phi(w)}\right)^2 \phi(w) dw
    \nonumber
    \end{align}
    For large enough $p$, the above can be bounded via
    \begin{align}
    %%%%%%%%%%%%%%%%%%%%%%%%%%%
    I \leq2\epsilon_p ^2\int_{-\infty}^{t_1} \frac{\phi^2(w-\tau_p)}{\phi^2(w)} \phi(w)dw
    %%%%%%%%%%%%%%%%%%%%%%%%%%%
    % =\epsilon_p ^2\int_{-\infty}^{t_1} e^{2z\tau_p - \tau_p^2} \phi(z)dz  
    %%%%%%%%%%%%%%%%%%%%%%%%%%%
    = C\epsilon_p ^2\int_{-\infty}^{t_1} e^{2w\tau_p - \tau_p^2 - w^2/2}dw,  
    \end{align}
    where the equality follows from the definition of $\phi(w)$.
    %Note,  $-(w - 2\tau_p)^2 / 2 + \tau_p^2 = 2w\tau_p - \tau_p^2 - w^2/2$, hence 
    Completing the square, the above can be written as
    \begin{align}
    I&\leq   C\epsilon_p ^2\int_{-\infty}^{t_1} e^{-(w - 2\tau_p)^2 / 2} \,\,e^{\tau_p^2} dw\nonumber.
    \end{align}
    Changing the variable $x = w-2\tau_p$ reads
    \begin{align}
     I &\leq\,\, C\epsilon_p ^2 e^{\tau_p^2} \int_{-\infty}^{t_1-2\tau_p} e^{-x^2 / 2} dx = \,\, C\epsilon_p ^2 e^{\tau_p^2} \Phi^c (2\tau_p - t_1)
     \,\, \leq C \epsilon_p ^2 e^{\tau_p^2} e^{-(2\tau_p - t_1)^2 /2}
    % \leq 2\epsilon_p^2 e^{\tau_p^2} = 2\epsilon_p^2 \exp({2\beta \log p})
    %  = p^{-2(1-\alpha + ) }
    \end{align}
    Finally, since $\beta<1-\alpha$, it follows that $C \epsilon_p ^2 e^{\tau_p^2} e^{-(2\tau_p - t_1)^2 /2} = o(\epsilon_p)$.

    Next, since $\eta(w)<1$ it follows that
    \begin{align}
        II  &=  \int_{t_1}^{\infty} 
   \eta^2(w) \phi(w) dz 
    \leq\int_{t_1}^{\infty}  \phi(w) dw  = \Phi^{c}(t_1) \leq e^{-t_1^2/2}.
    \end{align}
    Similar to the above, under the condition $\beta<1-\alpha$ it holds that $e^{-t_1^2/2} = o(\epsilon_p)$.

    Next, note that
    \begin{align}
    III &=  \epsilon_p\int_{-\infty}^{t_2} 
    \left(\frac{\epsilon_p \phi(w)}{(1-\epsilon_p)\phi(w+\tau_p) + \epsilon_p \phi(w)} \right)^2 \phi(w) dw \nonumber \\
    &\leq
    \epsilon_p\int_{-\infty}^{t_2} 
    \left(\frac{\epsilon_p \phi(w)}{(1-\epsilon_p)\phi(w+\tau_p)} \right)^2 \phi(w) dw. \nonumber
    \end{align}
    For large enough $p$
    \[
    III \leq 
    C\epsilon_p\int_{-\infty}^{t_2} 
    \left(\frac{\epsilon_p \phi(w)}{\phi(w+\tau_p)} \right)^2 \phi(w) dw 
    =C\epsilon_p ^3\int_{-\infty}^{t_2} 
      e^{2w\tau_p + \tau_p^2 - w^2/2}dw  
    .
    \]
    Completing the square reads
    \[
        III \leq 
        C\epsilon_p ^3 e^{3\tau_p^2}\int_{-\infty}^{t_2} 
      e^{-(w-2\tau_p)^2 /2}dw = 
       C\epsilon_p ^3 e^{3\tau_p^2} \Phi^c(2\tau_p - t_2) \leq C\epsilon_p ^3 e^{3\tau_p^2} e^{-(2\tau_p - t_2)^2 /2}.
    \]
    Since $\beta \leq 1-\alpha$ it holds that $III = o(\epsilon_p)$.
    
    Finally, since $\eta(w)<1$, IV can be bounded as follows
    \[
        IV = \epsilon_p \int_{t_2}^{\infty} \phi(w) dw = \epsilon_p\Phi^c(t_2) \leq \epsilon_p e^{-t_2^2 /2}  .
    \]
    Again, by $\beta<1-\alpha$, $IV = o(\epsilon_p)$.
\end{proof}
}
%%%%%%%%%%%%%%%%%%%%%%%%%%%%%%%%%%%%%%%%%%%%%%%%%%%%%%%%%%%%%%%%%%%%%%%%%%%%%%%%%%%%%%%%%%%%%%%%%%%%%%%%%%%%%%%%%%%%%

\end{document}